\newcommand{\prompt}{\textsc{JRT-Prompt}\xspace}
\newcommand{\arch}{\textsc{JRT-RNN}\xspace}
\definecolor{darkblue}{rgb}{0, 0, 0.5}
\definecolor{darkgreen}{rgb}{0, 0.5, 0}
\definecolor{darkorange}{rgb}{0, 0.5, 0.5}
\definecolor{codegreen}{rgb}{0,0.6,0}
\definecolor{codegray}{rgb}{0.5,0.5,0.5}
\definecolor{codepurple}{rgb}{0.58,0,0.82}
\definecolor{backcolour}{rgb}{1,1,1}
\lstdefinestyle{mystyle}{
    backgroundcolor=\color{backcolour},   
    commentstyle=\color{codegreen},
    keywordstyle=\color{magenta},
    numberstyle=\tiny\color{codegray},
    stringstyle=\color{codepurple},
    basicstyle=\ttfamily\footnotesize,
    breakatwhitespace=false,         
    breaklines=true,                 
    captionpos=b,                    
    keepspaces=true,                 
    numbers=left,                    
    numbersep=5pt,                  
    showspaces=false,                
    showstringspaces=false,
    showtabs=false,                  
    tabsize=2,
}
\definecolor{dkgreen}{rgb}{0,0.6,0}
\definecolor{gray}{rgb}{0.5,0.5,0.5}
\definecolor{light-gray}{gray}{0.95} 
\definecolor{mauve}{rgb}{0.58,0,0.82}
\definecolor{backcolour}{rgb}{0.95,0.95,0.92}
\newmdenv[linecolor=light-gray,%
backgroundcolor=light-gray,
innerleftmargin=2.8pt,
innerbottommargin=-0.8pt,
leftmargin=0.0pt,
rightmargin=0.0pt,
skipbelow=-2.0pt,
frametitle={}]{codeframe}
\lstdefinestyle{prompts}{
    commentstyle=\color{dkgreen},
    keywordstyle=\color{magenta},
    moredelim=**[is][\color{mauve}]{@}{@},
    basicstyle=\fontsize{7}{9}\selectfont\ttfamily,
    breakatwhitespace=false,         
    breaklines=true,                 
    captionpos=b,                    
    keepspaces=true,                 
    numbers=none,                    
    numbersep=5pt,                  
    showspaces=false,
    showstringspaces=false,
    showtabs=false,                  
    tabsize=2
}
    \newlength{\defbaselineskip}
\def\@copyrightspace{\relax}
\def\@myauthornotes{}
\def\myauthornote#1{%
  \if@ACM@anonymous\else
    \g@addto@macro\addresses{}%
    \g@addto@macro\@myauthornotes{%
      \stepcounter{footnote}\footnotetext{#1}}%
  \fi}
    \title{
        Just read twice: closing the recall gap for \\ recurrent language models
    }
    \author[$\dagger$]{Simran Arora}
    \author[$\triangle$]{Aman Timalsina}
    \author[$\dagger$]{Aaryan Singhal}
    \author[$\dagger$]{Benjamin Spector}
    \author[$\dagger$]{Sabri Eyuboglu}
    \author[$\dagger$]{Xinyi Zhao} 
    \author[$\dagger$]{Ashish Rao}
    \author[$\triangle$]{Atri Rudra}
    \author[$\dagger$]{Christopher Ré}
    \affil[$\dagger$]{\{\texttt{simarora,aaryan04,bfs,eyuboglu,xyzhao99,aprao,chrismre\}@stanford.edu}}
    \affil[$\triangle$]{\{\texttt{amantima,atri\}@buffalo.edu}}
\begin{document}
\maketitle
\vspace{-2mm}
\begin{abstract}
Recurrent large language models that compete with Transformers in language modeling perplexity are emerging at a rapid rate (e.g., Mamba, RWKV). Excitingly, these architectures use a constant amount of memory during inference. However, due to the limited memory, recurrent LMs cannot recall and use all the information in long contexts leading to brittle in-context learning (ICL) quality.  
A key challenge for efficient LMs is selecting what information to store versus discard. 
In this work, we observe the \textit{order} in which information is shown to the LM impacts the selection difficulty.
To formalize this, we show that the hardness of information recall reduces to the hardness of a problem called set disjointness (SD), a quintessential problem in communication complexity that requires a streaming algorithm (e.g., recurrent model) to decide whether inputted sets are disjoint. We  empirically and theoretically show that the recurrent memory required to solve SD changes with set order, i.e., whether the smaller set appears first in-context.
Our analysis suggests, to mitigate the reliance on data order, we can put information in the \textit{right order} in-context or process prompts \textit{non-causally}. 
Towards that end, we first propose: (1) {\prompt}, where context gets repeated multiple times in the prompt, effectively showing the model \textit{all} data orders. This gives $11.0 \pm 1.3$ points of improvement, averaged across $16$ recurrent LMs and the $6$ ICL tasks, with $11.9\times$ higher throughput than FlashAttention-2 for generation prefill (length $32\mathrm{k}$, batch size $16$, NVidia H100). We then propose (2) {\arch}, which uses non-causal \textit{prefix-linear-attention} to process prompts and provides $99\%$ of Transformer quality at $360\mathrm{M}$ params., $30\mathrm{B}$ tokens and $96\%$ at $1.3\mathrm{B}$ params., $50\mathrm{B}$ tokens on average across the tasks, with $19.2\times$ higher throughput for prefill than FA2.
\end{abstract}

\vspace{-4mm}
\section{Introduction}

\vspace{-2mm}
Recent work has made rapid progress in developing fixed-memory recurrent architectures (e.g., Mamba \citep{gu2023mamba} and RWKV \citep{peng2023rwkv}) that are competitive with attention in language modeling perplexity. 
During inference, these models are more memory efficient and asymptotically faster than the de-facto Transformer attention \cite{bahdanau2016neural, vaswani2018attention}.
However, there is no free lunch --- due to their limited memory capacity, recurrent LMs cannot recall all the information provided in long-contexts, making in-context learning (ICL) quality brittle
\citep{cho2014properties,
schlag2021linear, arora2024simple}. 
Despite matching in perplexity, we find a $2.8\mathrm{Bn}$ parameter Mamba LM trained on $300\mathrm{Bn}$ tokens of the Pile underperforms a $1.3\mathrm{Bn}$ param. ($2.2\times$ smaller) Transformer LM trained on $50\mathrm{Bn}$ tokens  
($6\times$ fewer tokens) by $5$ points, averaged across a suite of recall-intensive ICL tasks (Table \ref{table:main-quality}).

\begin{figure*}[t]
    \centering
    \includegraphics[width=\linewidth]{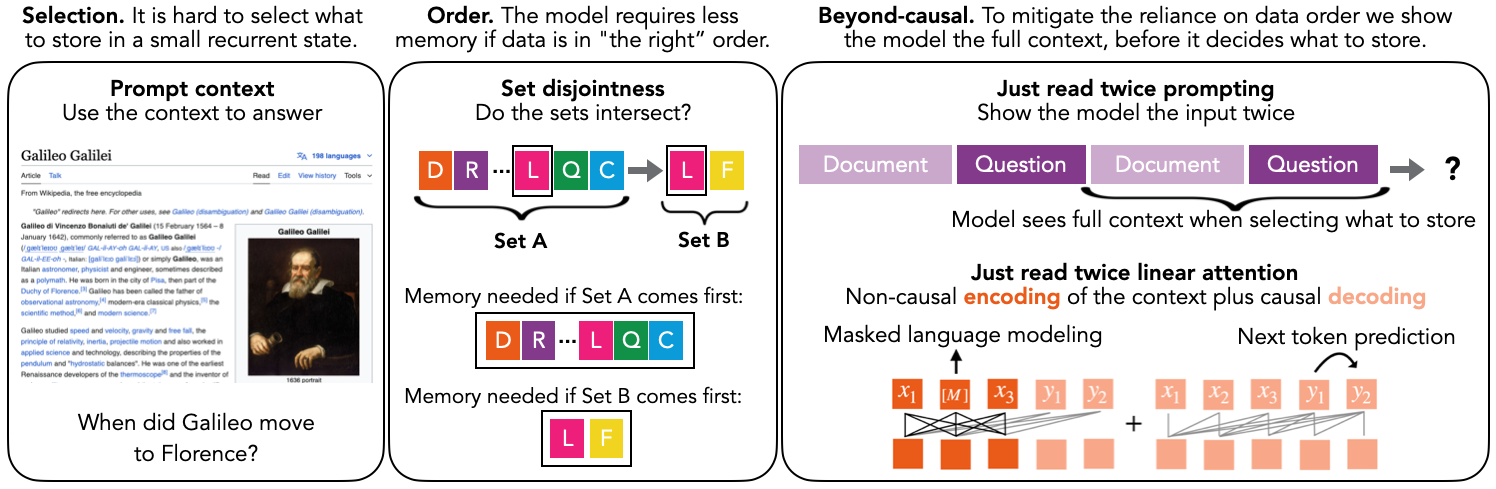}
    \caption{\textbf{Selecting (Left)} Recurrent models have limited memory and deciding what to store from long-contexts (e.g., Galileo's Wikipedia) 
    is challenging. \textbf{Data order (Middle)} changes the selection difficulty: seeing the question  \textit{before} the document simplifies the model's selection task. We formalize this by invoking \textit{set disjointness}, the canonical communication complexity problem of deciding whether two sets $A$ and $B$ are disjoint. A causal model needs enough memory to store set $A$ to be able to compare to set $B$'s elements so, ideally, the smaller set appears first. \textbf{Beyond causal (Right)} We show recurrent models the input twice in-context ({\prompt}) or use encoder-decoder recurrent models to  process the prompt ({\arch}), to mitigate the reliance on data order. }
    \vspace{-3mm}
    \label{fig:main_fig_jrt}
\end{figure*}

Prior work \citep{arora2024simple} formalizes the tradeoff between an architecture's recall ability and memory consumption during inference
by considering a simplified ICL setting shown below. Here, we have
the ``context'' of key-value token pair mappings on the left and ``questions''s on the right for which the model should output $\text{4, 6, 1, 2, 3}$:
$$
\text{A 4 B 3 C 6}\underbrace{\text{F 1}}_{\mathclap{\textbf{Key-Value}}}  \text{E 2} \rightarrow \text{A ? C ?} \underbrace{\text{F ?}}_{\mathclap{\textbf{Query}}} \text{E ? B ?}
$$  
Unfortunately, recurrent models need $\Omega(N)$ space to solve the recall task \citep{arora2024simple}.
\textit{This begs the question of whether we can rely on recurrent models that use constant $O(1)$ space 
for in-context learning.}

Luckily, models often do not need to remember \textit{all} information provided in-context to excel at a task.
The key challenge is \textit{predicting} which subset of information (e.g., facts from documents, variable names from code) is useful to store in memory to support next token predictions. A long line of work focuses on improving the \textit{selection mechanisms} or architectural inductive biases that recurrent language models use to select relevant information (e.g., LSTM \citep{hochreiter1997long}, decay rates \citep{gu2023mamba, yang2023gated}, delta rules \citep{schlag2021linear, munkhdalai2019metalearned}). Other works increase the recurrent state size in hardware efficient ways, traversing a quality-efficiency tradeoff space \cite{arora2024simple}.

Complementing these approaches, 
we focus on the simple observation that the \textit{order} in which data streams into the recurrent LM during inference drastically impacts the difficulty of predicting what to store in the limited memory. 
Suppose we ask questions $\mathcal{Q}$ (e.g., ``When did Galileo move to Florence?''), over documents $\mathcal{D}$ (e.g., the detailed Wikipedia for Galileo Galilei). The model needs to remember just one fact from $\mathcal{D}$ if the prompt is ordered $[\mathcal{Q},  \mathcal{D}]$, but needs to remember \textit{all} facts when it is $[\mathcal{D}, \mathcal{Q}]$ (\Cref{fig:main_fig_jrt} (Left)).

Our work first theoretically formalizes how data order impacts the 
memory requirement (\Cref{sec:understanding}), then proposes two ways to mitigate the reliance on data order: the Just-read-twice (JRT) prompting strategy (\Cref{sec:sec4-jrt-prompt}) and the JRT recurrent architecture (\Cref{sec:sec4-jrt-arch}). 

\textbf{Understanding the role of data order.} Our first insight is that the hardness of the recall problem reduces to the hardness of \textit{set disjointness} (SD), the quintessential, decades-old problem in communication complexity theory \citep{chattopadhyay2010story} (\Cref{thm: mqar-phot}). SD requires a streaming algorithm (e.g., a recurrent model) to decide whether inputted sets provided in-context are disjoint:
$$
\underbrace{\text{7 11 1 17 16 4 6 9}}_{\mathclap{\textbf{Set A}}} \text{ * } \underbrace{\text{8 1 5 6}}_{\mathclap{\textbf{Set B}}} \rightarrow  \text{ False, $\{\text{1 6}\}$ } 
$$
With theory and experiments, we show that the size of the first set, $|A|$, governs the memory needed to solve SD.  Causal models need to store all elements in $A$ to be able to compare to the elements of $B$. This suggests that using \textbf{``the right data order''} in-context, e.g. placing the set with $\min(|A|, |B|)$ first, would help memory-limited models. Further, models that see the context \textbf{non-causally} can solve SD in space $\min(|A|, |B|)$, regardless of data order  (\Cref{thm:rec-gen-sd}, \Cref{fig:synthetic}).  We next make use of these insights. 

\paragraph{Using ``the right'' order.} We propose {\prompt} (\Cref{sec:sec4-jrt-prompt}), an extremely simple strategy where information is repeated multiple times in context before the model generates answers (\Cref{fig:main_fig_jrt} (Right)). In the second$+$ pass, the LM conditions on the full context when deciding what to store, effectively avoiding the issue of getting the data order ``right''.
{\prompt} gives $11.0 \pm 1.3$ point improvement averaged across $16$ off-the-shelf recurrent LMs and the $6$ ICL tasks, while providing $11.9\times$ higher throughput than FlashAttention-2 (length $32\mathrm{k}$, batch size $16$) \citep{dao2023flashattention2} (Table \ref{table:main-quality}). {\prompt} increases the context length, but remains asymptotically more compute and memory efficient than attention.

\paragraph{Beyond causal models.} We next propose {\arch}, inspired by the simple design of Prefix-LM encoder-decoder architectures \citep{2020t5, dong2019unified}. Most in-context learning inputs contain two parts, the inputted prompts (context, instructions) and the text generated by the model as output. In Prefix-LMs, the LM processes the prompt region non-causally and causally decodes the output, using only a standard next token prediction loss in the causal region and in loss on the non-causal region. Unfortunately, prior approaches to training Prefix-LM models have seen limited success and use inefficient Transformer backbones \citep{wang2022language}. We apply simple changes to improve quality and efficiency including modifying the training loss 
and using a \textit{linear attention} formulation we term Prefix Linear Attention (PLA). 
We find {\arch} provides a $13.7$ and $6.9$ point average quality improvement at $360$m and $1.3$b parameters, and $19.2\times$ higher throughput than FA2, using our IO-aware implementation (\Cref{table:jrt-rnn-quality}).

Our contributions are: (1) a synthetic and theoretical study of data order and the memory requirement for recurrent models, (2) {\prompt}, and (3) {\arch}.
Researchers have developed many techniques for in-context leanring with Transformers \citep{wei2022chain, creswell2022selection}, and
we need a similar exploration into how to use alternative LLM architectures effectively. Code: \url{https://github.com/HazyResearch/prefix-linear-attention}.

\section{Background}
We focus on developing methods for in-context learning with recurrent LLMs.  We provide key background here and an extended related works discussion in \Cref{app:related-work}.

\paragraph{Recall and in-context learning.} 
Many prior works have identified a skill called \textit{associative recall} as highly correlated with in-context learning quality across architecture classes via extensive theoretical and empirical analysis \citep{graves2014neural, ba2016using, schlag2021linear, elhage2021mathematical, olsson2022context, dao2022hungry, arora2023zoology, gu2023mamba, akyurek2024incontext}. Recall entails using information provided in context (beyond the model's memorized knowledge) to generate next token predictions. For instance, models are used via in-context learning to produce the next steps in a proof given a provided list of Lemmas \citep{lewkowycz2022solving, trinh2024solving}, generate the next chunk of code given a repository \citep{roziere2023code, yang2024swe}, and answer questions or provide summaries given documents \citep{arora2023evaporate}. In a simplified view of the recall task,  a model needs to remember \textit{keys} and \textit{values} seen in context to provide the answers for different \textit{queries}. In this example, the model should output $\text{4, 6, 1, 2, 3}$:
$$
\text{A 4 B 3 C 6}\underbrace{\text{F 1}}_{\mathclap{\textbf{Key-Value}}}  \text{E 2} \rightarrow \text{A ? C ?} \underbrace{\text{F ?}}_{\mathclap{\textbf{Query}}} \text{E ? B ?}
$$  

\vspace{-3mm}
\paragraph{Memory-recall tradeoff for  causal language models.}  Today's LLMs process input text \textit{causally} in a fixed left-to-right order \cite{brown2020language}. Prior work theoretically and empirically demonstrates a fundamental tradeoff between a causal LM's memory consumption during inference and its ability to remember information provided in context (recall) \citep{cho2014properties, schlag2021linear, arora2024simple}.
Attention \citep{vaswani2018attention}, the de-facto LM architecture \citep{brown2020language, chowdhery2022palm, touvron2023llama}, provably solves recall perfectly in $\mathcal{O}(1)$ model depth and width as a function of sequence length. 
However, attention incurs
$\mathcal{O}(N^2)$ complexity during training and $O(N)$ complexity and memory consumption during inference, for sequence length $N$. 
Thus, many works explore alternative recurrent architectures that are more efficient --- sub-quadratic compute and memory in sequence length during training and $\mathcal{O}(1)$ during each token generation step during inference --- while competing with attention in quality \citep[inter alia.]{ma2022mega, dao2022hungry, gu2023mamba, arora2024simple, yang2023gated}. 

However, using a limited amount memory during inference, efficient models provably cannot retain all information seen in-context, sacrificing recall and in-context learning quality \citep{arora2024simple}. 
Models that can better select what information to store can extend the Pareto frontier of the tradeoff space. 
A long line of work explores how to improve this \textit{selection mechanism} via architectural inductive biases \citep[inter alia.]{hochreiter1997long, schlag2021linear, qin2023hgrnn, gu2023mamba}. Another approach is to navigate the quality-efficiency tradeoff space by varying the recurrent \textit{state size} in hardware-efficient ways \citep{massaroli2023laughing, arora2024simple, dao2024transformers}.
Complementing these approaches, the insight motivating our work is that the \textit{order} in which information appears in-context drastically influences the difficulty of the selection step \cite{sutskever2014sequence}. \textit{Non-causal} models, which can see all the input text at once, can help avoid this issue.

\vspace{-2mm}
\section{Understanding the role of data order on recurrent models}
\label{sec:understanding}

\begin{figure*}[t]
    \centering
    \includegraphics[width=0.95\linewidth]{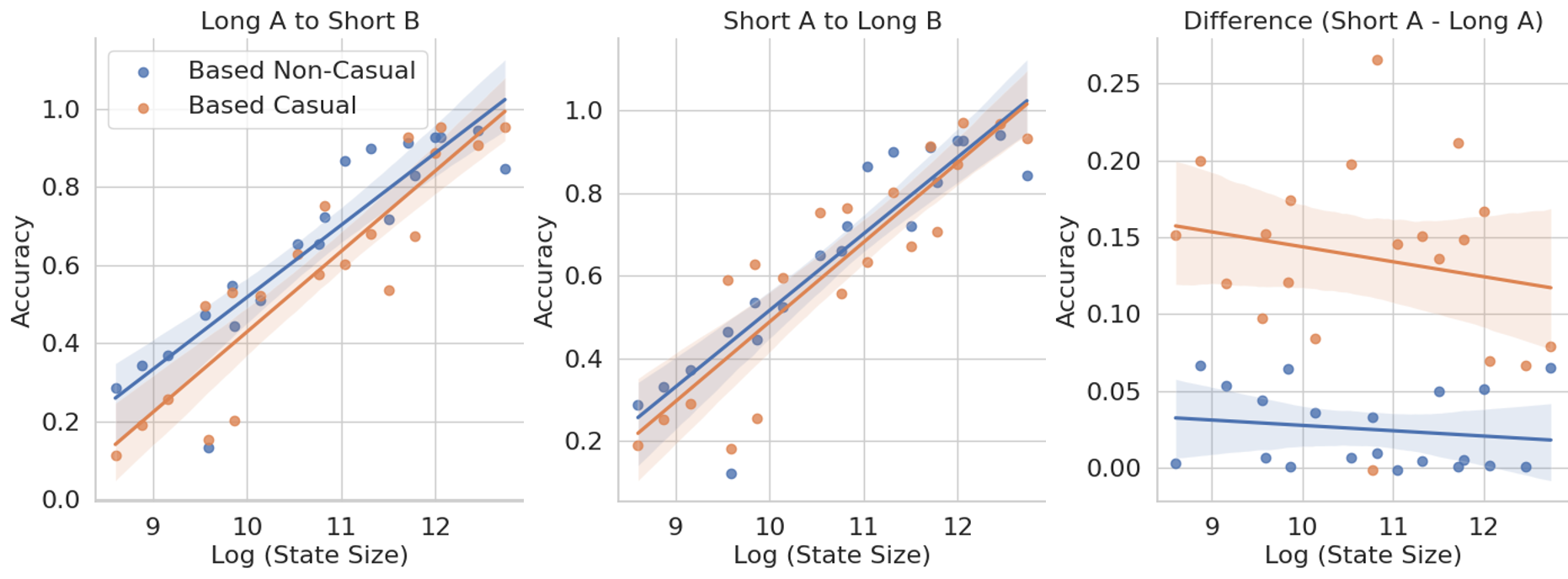}
    \caption{\textbf{Data order vs. quality.} The $x$-axis shows the recurrent state size in (bytes) during inference. The $y$-axis shows the accuracy on the set disjointness task, where the model needs to output the intersecting elements between two sets of tokens $A$ and $B$ (of lengths $|A|$ and $|B|$) provided in-context. \textbf{(Left)} $|A|$ is longer than $|B|$; \textbf{(Middle)} $|B|$ is longer than $|A|$; \textbf{(Right)} Difference in accuracy between the two orderings. 
    We evaluate \textcolor{blue}{non-causal} and \textcolor{orange}{causal} versions of the Based recurrent architecture from \cite{arora2024simple}. 
    For each, we vary the hyperparameters (e.g., model dimension, feature dimension) that affect the state size. We plot the maximum score for each point across a sweep of three learning rates $\{\mathrm{1e-4, 5e-4, 8e-4}\}$ and two random seeds. The plot shows that the causal recurrent models are more sensitive to the data order than non-causal models. }
    \vspace{-3mm}
    \label{fig:synthetic}
\end{figure*}

In this section, we show that the quality of recurrent large language models varies as a function of the order in which data arises in context making them brittle for in-context learning applications. 

\subsection{Analysis of data order and communication complexity}
\label{sec:set_disjointness}

\paragraph{Set disjointness problem.} To formalize the impact of data order, we invoke the set disjointness (SD) problem: given two sets of elements, determine if the intersection is empty or not. SD is the quintessential problem for studying the communication complexity of different streaming algorithms (such as recurrent models) over the past several decades \citep{hemaspaandra2010sigact}. The hardness for a wide collection of problems reduces to the hardness of SD~\citep{chattopadhyay2010story}. A formal definition of this task is provided in Appendix \ref{sec:rec-models-set-disjoint}.

\vspace{-2mm}
\paragraph{Synthetic formulation.} We construct a synthetic task where the model is given input sequences that contain two sets $A$ and $B$, seperated by a special token that designates the end of set $A$ and start of set $B$. Set elements are tokens $\in [0..|V|]$ for vocabulary size $|V|$ and the model needs to output the tokens in the intersection of $A$ and $B$. 
For example, the correct output below would be $\text{6}$:\footnote{Note that we train the model to output the set intersection, of size $1$, not binary disjointness result (\Cref{alg:synthetic}). 
We find explicitly outputting the intersection helps the model avoid the behavior of outputting $0$ or $1$ with $50\%$ accuracy during training.}
$$
\underbrace{\text{7 11 17 16 4 6 9}}_{\mathclap{\textbf{Set A}}} \text{ * } \underbrace{\text{8 1 5 6}}_{\mathclap{\textbf{Set B}}} \rightarrow  \text{ ? } 
$$
In Figure \ref{fig:synthetic}, we vary the state size of the Based recurrent architecture \citep{arora2024simple}, which has been demonstrated to outperform prior subquadratic models on recall, on the SD task.
We train on sequences where $|A|$ and $|B|$ are between $1$ and $1024$, and $|V| = 2048$. 
In addition to measuring overall accuracy, we consider the sliced accuracy on sequences where $|A| < |B|$ and sequences where $|B| < |A|$. 

We find the causal models achieve better quality when the size of set $A$ is smaller than set $B$. \Cref{fig:synthetic} (Right) shows the difference in quality between when $A$ is shorter vs. longer than $B$, reflecting that the gaps tend to be larger at smaller state sizes ($x$-axis). We additionally evaluate a non-causal variant of the Based architecture and find (1) it outperforms the causal models across state sizes when $A$ is longer than $B$ (\Cref{fig:synthetic} (Left)), and (2) displays less variation in quality as a function of data (set) order \Cref{fig:synthetic} (Right).
We release code to reproduce this plot.

\vspace{-2mm}
\paragraph{Theoretical study: recall and set disjointness.}
In Appendix \ref{app:theory}, we perform a systematic theoretical study of the connection between set disjointness and recall as well as the complexity of solving set disjointness in the JRT setting.

First, we show that set disjointness and the ``general associative recall'' (GAR) problem, which we define in Appendix \ref{app:theory} [Definition \ref{def: nmqar}]), are essentially equivalent (see Propositions \ref{prop: nmqar-sd} and \ref{prop: sd-gar}). Roughly speaking, the keys and queries in GAR correspond to sets $A$ and $B$ in set disjointness.

We argue that recurrent models need space $\Omega(\min(|A|, |B|))$ for solving set disjointness, and hence, GAR (see Proposition \ref{prop: jrp-sd} in Appendix \ref{app: jrp-gar}).
\begin{proposition}
    Given a {\em $\jrp$ prompt}\footnote{A $\jrp$ prompt is simply repeating the input $p$ times (see Definition \ref{def:jrp}).} $\vu^{\jrp} \in \{0,1\}^{p\inputLength \times \modelDim}$ for input $\vu \in \{0,1\}^{\inputLength \times \modelDim}$ to the $\nmqar$ problem, any recurrent model $\calM_{\nmqar}$ (\cref{def: reg-model}) solving $\nmqar$ requires its state size to be at least $\Omega\paren{\frac{\min\{|A|, |B|\}}{p}}$-bits.
\end{proposition}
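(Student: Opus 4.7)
The plan is to reduce to the multi-pass communication complexity of set disjointness. Using the equivalence between $\nmqar$ and set disjointness already established in Propositions~\ref{prop: nmqar-sd} and \ref{prop: sd-gar}, it suffices to prove that any recurrent model solving set disjointness on the $p$-repeated input $\vu^{\jrp}$ with $s$-bit state must have $s = \Omega(\min\{|A|,|B|\}/p)$. The key observation is that a recurrent model of state size $s$ fed a $p$-fold repetition of an input is exactly a $p$-pass streaming algorithm for the original input: the internal recurrent state that the model carries across the boundary between the $i$-th and $(i{+}1)$-th copy plays the role of the persistent working memory between passes. This is precisely the purpose of the $\jrp$ construction.

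Next I would convert the $p$-pass streaming algorithm into a two-party communication protocol. Splitting $\vu$ at the set-separator token, Alice holds $A$ and Bob holds $B$; at the start of each pass, Alice processes $A$ from the current recurrent state and sends her updated $s$-bit state to Bob, Bob then processes $B$ and sends his $s$-bit state back, and after $p$ passes whoever holds the final state outputs the answer. The total communication is thus at most $2ps$ bits spread over $O(p)$ rounds. I would then invoke a classical round-robust randomized communication lower bound for set disjointness, in the spirit of Bar-Yossef, Jayram, Kumar, and Sivakumar, which says that any two-party protocol deciding disjointness on sets of sizes $|A|$ and $|B|$ must communicate $\Omega(\min\{|A|,|B|\})$ bits, independent of the round count. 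Asymmetry in set sizes is handled by padding the smaller set with fresh universe elements that cannot collide with anything in the other set, which preserves the answer and cannot reduce communication. Matching against $2ps = \Omega(\min\{|A|,|B|\})$ yields the claimed bound $s = \Omega(\min\{|A|,|B|\}/p)$.

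The step I expect to be most delicate is lining up a communication lower bound that is simultaneously (i) robust in the number of rounds, so the final state-size dependence comes out as exactly $1/p$ rather than a weaker $1/p^2$, and (ii) sensitive to $\min\{|A|,|B|\}$ rather than to the universe size or $|A|+|B|$. The asymmetric-set reduction in the third step is where these two requirements have to be reconciled, and that is the point at which some care is needed to ensure that the hard input distribution used in the communication lower bound survives the padding and still induces a valid $\nmqar$ instance on which the recurrent model must succeed.
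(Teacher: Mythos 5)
Your proof matches the paper's argument essentially line for line: the paper's communication protocol (its Algorithm for $\nmqar$) is exactly your $p$-pass streaming view, where the recurrent state crossing the key/query boundary plays the role of the message, giving $2ps$ total bits, which is then bounded below by a round-independent two-way randomized lower bound for set disjointness (the paper cites H{\aa}stad--Wigderson with a padding footnote, you cite BJKS; both are fine precisely because neither depends on the number of rounds, which is the property you correctly flagged as critical). One small correction on your padding step: the reduction should start from a \emph{symmetric} hard instance with both sets of size $\min\{|A|,|B|\}$ and pad \emph{one} of them with fresh non-intersecting elements up to the larger size --- phrasing it as ``padding the smaller set'' of the already-asymmetric target instance reads as going the wrong direction, although your stated intent (preserve the answer, keep the $\Omega(\min\{|A|,|B|\})$ bound) is the standard and correct one.
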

That is, the lower bound holds even if we allow multiple, but constant, many passes, as opposed to $\Omega(\max(|A|, |B|))$ lower bound for recurrent models without repeats~\citep{arora2024simple} \textbf{Theorem F.3}.

Next, we show we can indeed achieve this lower bound. We show that certain recurrent models (concretely, a slight variant of \Based) can solve SD with $O(\min(|A|, |B|))$ space in the {\prompt} setting (App. \ref{app:theory-based}).
    \begin{theorem}
    Given a $\jrt$ prompt $\vu^{\jrt} \in \R^{2N \times (\sdDim + 1)}$ of the input $\bm{u} \in \R^{\inputLength \times (\sdDim+1)}$ for the set-disjointness (SD) problem $(A, B) \subseteq \{0,1\}^n$, there exists a \Based\ model (\BaseConv\ + MLP + \LinAtt\ + MLP)\footnote{This matches the architecture in our experiments.} that solves SD with space $O(\min\{|A|, |B|\}\cdot n)$.\footnote{This bound is for the case where the IP kernel is dependent on $A$ and $B$; if we use an {\em input-independent} IP kernel, then we get an upper bound of $O\paren{(\min\{|A|, |B|\})^2\cdot n}$ (see Remark \ref{rem: based-sd-2}). Further, this result needs one layer of \BaseConv\ where the convolution kernel is input dependent as well.}
\end{theorem}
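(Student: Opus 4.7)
The plan is to exploit the defining feature of the JRT prompt: by the time the model reads the second copy of the input, its recurrent state has already been updated by a full pass over $\bm{u}$, so any quantity that depends on all of $\bm{u}$---in particular $|A|$ and $|B|$---can be precomputed and carried forward. Consequently, the second pass can behave as though it knew in advance which of the two sets is smaller, and commit only that set to the LinAtt key-value store. I will give a concrete construction of a Based stack (BaseConv $+$ MLP $+$ LinAtt $+$ MLP) whose four components implement, in order: (i) first-pass counting, (ii) first-pass comparison of $|A|$ and $|B|$, (iii) second-pass selective writing into LinAtt memory, and (iv) membership-querying and output.

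Concretely, in the first pass I would use the input-dependent BaseConv kernel to detect the separator $*$ (identifiable from the extra coordinate in $\mathbb{R}^{n+1}$) and maintain running counts of tokens occurring before and after it, so that at the end of the first copy the hidden state encodes $(|A|, |B|)$. The first MLP then collapses this pair to a single indicator bit $b \in \{0,1\}$ flagging whether $|A| \le |B|$, and a further input-dependent convolution broadcasts $b$ into every position of the second copy; this bookkeeping costs only $O(\log N)$ extra bits of state. During the second pass, the LinAtt feature map $\phi$ is gated by $b$ and by the side of the separator on which a token appears, so that $\phi(x)$ is nonzero only when $x$ lies in the set that was flagged as smaller. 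Thus only $\min\{|A|,|B|\}$ elements, each a vector in $\{0,1\}^n$, ever get written into the outer-product state, giving the claimed $O(\min\{|A|,|B|\}\cdot n)$ space. For every token $x'$ on the opposite side of $*$ the LinAtt query is the ungated feature vector of $x'$; its inner product against the stored keys is nonzero exactly when $x' \in A \cap B$, and the final MLP thresholds this inner product to emit the elements of the intersection and thereby decide disjointness.

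The main obstacle is the space accounting under input-dependent kernels. I must exhibit explicit convolution and feature-map designs that (a) never accidentally commit the larger set to the LinAtt state and (b) cleanly separate the first-pass ``decide'' phase from the second-pass ``store and query'' phase so that the two writes do not interfere. This is where input-dependence is used twice in the construction: the BaseConv kernel must depend on the input to implement the separator-aware counter and to propagate the decision bit, and the IP kernel must depend on the input so that $\phi$ can be gated to zero on the larger side on the basis of $b$. With an input-independent $\phi$, no fixed feature map can simultaneously encode all $n$-bit tokens faithfully and suppress one of the two sides, which is what forces the weaker $O((\min\{|A|,|B|\})^2 \cdot n)$ fallback mentioned in the footnote. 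The remaining steps---checking that the LinAtt recurrence implements the correct membership inner products and that the output MLP recovers the intersection---are routine and follow the standard recall-via-linear-attention template already established for Based.
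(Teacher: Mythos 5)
Your overall strategy is the same as the paper's at the level of intent — use the JRT structure to find out which set is smaller, then arrange for the \LinAtt\ state to hold only that set — but the mechanism you pick, \emph{gating the key feature map} rather than \emph{shifting the sequence}, introduces a causal ordering bug that the paper's construction was designed to avoid. There is also a concrete statement in your plan that does not survive scrutiny: you claim the decision bit $b$ is not known until the hidden state has absorbed all of $\bm{u}$, whereas in fact $b$ is determined the moment the separator is seen (at position $|A|$, since $|B|=N-|A|-1$ is then forced). The paper exploits exactly this earlier availability.

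Here is the gap. You confine storage to the second copy: ``the second pass can\ldots commit only that set to the \LinAtt\ key-value store,'' and you only broadcast $b$ ``into every position of the second copy.'' Now take the case $|B|<|A|$. In the second copy, the tokens of $A$ occupy positions $[N,\,N+|A|)$ and the tokens of $B$ occupy $[N+|A|+1,\,2N)$. If only the second-copy $B$ tokens are written into the \LinAtt\ state, then the queries on the $A$ side — which, in a causal linear-attention recurrence, only ever see keys at \emph{strictly earlier} positions — read an empty state: every query precedes every stored key. The construction therefore cannot detect the intersection in this branch. The fix is to store $B$'s keys already during the first copy at positions $[|A|+1,\,N)$, using the fact that $b$ is computable at position $|A|$; your bookkeeping actually has all the information it needs for that, but as written you discard the first-copy keys entirely. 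The paper sidesteps the issue by having the input-dependent \BaseConv\ layers physically rotate the sequence so that the smaller set always lies at the front of the $N$-window fed to \LinAtt\ (\Cref{lem:bc-ker} and \Cref{prop: baseconv-sd-shift}); once the shift is done, ``store then query'' is automatic regardless of which set was smaller, and a single use of \Cref{prop: lin-att-sd} with a data-dependent IP kernel finishes the argument and gives the $O(\min\{|A|,|B|\}\cdot n)$ state. If you repair the ordering by moving the $B$-store into the first copy and set the default gate to ``block'' at positions before $b$ is available, your gating variant should go through as well, but the proof sketch as written is incorrect precisely in the $|B|<|A|$ case that the JRT trick is meant to handle.
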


Finally, we show that this improvement via JRT-prompting is not realizable for all possible architectures. In particular, we show that $\Omega(\max\{|A|, |B|\}) = \Omega(N)$ lower bounds for the \BaseConv\ model (a model that provably simulates any gated convolution, e.g. Hyena \citep{poli2023hyena}, H3 \citep{fu2023simple}, with just poly-log blowup in parameters and depth) (Theorems F.4, F.5, and F.6, \cite{arora2024simple}) for recall carry over even in the JRT-prompt setting~(see Theorems \ref{thm: JRT-layer-ar}, \ref{thm: mqar-1hot-JRT}, and \ref{thm: mqar-phot}).

\vspace{-2mm}
\subsection{Consequences of analysis on downstream in-context learning with large language models}
\label{sec:sec4-jrt-prompt}
We next show that our analysis holds consequences for in-context learning on real-world tasks. 

\vspace{-2mm}
\paragraph{{\prompt} approach.} In-context learning tasks take as input $(\mathcal{C}, \mathcal{Q}, \mathcal{Y})$ where $\mathcal{C}$ is some context (e.g., document or code repository), $\mathcal{Q}$ is some question or request to the model given the context, and $\mathcal{Y}$ is the answer. 
For standard in-context learning with autoregressive LM $\mathcal{A}$, we input $\mathcal{C}$ and $\mathcal{Q}$ and evaluate the generated output $\mathcal{\hat{Y}} = \mathcal{A}(\mathcal{C}, \mathcal{Q})$ against the true completion $\mathcal{Y}$.

We propose {\prompt}, an exceedingly simple method in which information from the prompt (e.g. questions and documents) is \textit{repeated} in-context before the model is prompted to output the answer, \textit{e.g.,} $\hat{\mathcal{Y}} = \mathcal{A}(\mathcal{C}, \mathcal{Q}, \mathcal{C}, \mathcal{Q})$, as depicted in Figure \ref{fig:main_fig_jrt} (Right). 
As a result, during the second occurrence of the context, the model can condition on a full view of the context when deciding what to store. We provide the prompts that we use in Appendix \ref{app:error_analysis}, and release our code to reproduce the table.

\begin{table*}[]
\centering
\scriptsize
\begin{tabular}{lcc|cccccc|c}
\toprule
\multirow{1}{*}{Architecture} &
\multirow{1}{*}{Params} &
\multirow{1}{*}{Tokens}  & 
  FDA &
  SWDE &
  NQ &
  SQUAD &
  TriviaQA & 
  Drop & 
  Average 
  \\ \hline \hline

Transformer++   &  1.3B  & 10B &  74.4/\textbf{86.1} & 41.4/\textbf{52.5}  & 28.2/\textbf{31.9}  & 39.0/\textbf{53.1}  & \textbf{49.5}/49.3 & 22.3/\textbf{33.6} & 42.5 / \textbf{51.1} \\
Mamba           &  1.3B  & 10B & 23.3/\textbf{40.3}  & 15.5/\textbf{31.8} &  19.4/\textbf{25.8}  & 26.6/\textbf{48.5}  & 46.4/\textbf{51.1}  &  21.3/\textbf{32.1}  &  25.1 / \textbf{38.2} \\
Based           &  1.3B  & 10B & 48.6/\textbf{58.9}  & 27.6/\textbf{44.7} &  19.7/\textbf{28.4}  & 31.0/\textbf{46.7} &  44.1/\textbf{51.9} &  19.5/\textbf{34.6} & 31.8 / \textbf{44.2}  \\
\hline
Transformer++   & 1.3B &  50B &  83.7/\textbf{89.2} &  50.8/\textbf{65.0} & 32.8/\textbf{37.5}  & 41.1/\textbf{58.1}  & 56.6/\textbf{58.8}  &  21.5/\textbf{37.9} & 47.8 / \textbf{57.8} \\
Mamba           & 1.3B &  50B & 41.9/\textbf{55.7}  & 32.6/\textbf{45.4}  & 26.9/\textbf{33.9} & 31.5/\textbf{53.5} & 54.9/\textbf{56.7} & 20.4/\textbf{33.8} & 34.7 / \textbf{46.5} \\
Based           & 1.3B &  50B &  60.2/\textbf{68.3} & 37.1/\textbf{54.0}  & 29.4/\textbf{35.2} & 38.9/\textbf{56.3} & 54.5/\textbf{57.6}  &  21.7/\textbf{39.1} & 40.3 / \textbf{51.8}  \\ 
\hline
GLA  & 1.3B & 100B  & 48.3/\textbf{68.6} & 37.7/\textbf{53.6} & 26.6/\textbf{31.3} & 34.7/\textbf{54.8}  & \textbf{55.5}/54.6 & 19.6/\textbf{33.3} &  36.7 / \textbf{48.9} \\ 
GLA & 2.7B  & 100B  & 47.1/\textbf{65.8}  & 43.6/\textbf{54.5} & 27.1/\textbf{32.9} & 37.2/\textbf{55.7} &  \textbf{57.9}/57.0 & 22.2/\textbf{34.0} & 39.2/ \textbf{50.0} \\ 
\hline
Mamba           & 130M &  300B  & 25.7/\textbf{32.8} &  17.5/\textbf{31.5} &  16.8/\textbf{21.7}   & 27.1/\textbf{51.9} & 43.5/\textbf{50.1} & 17.4/\textbf{30.7} & 24.7 / \textbf{36.5} \\
Mamba           & 370M &  300B & 41.9/\textbf{58.3}  &  27.6/\textbf{42.2} & 23.8/\textbf{31.1} & 34.9/\textbf{51.0} & \textbf{53.6}/51.7 & 19.3/\textbf{33.2} & 33.5 / \textbf{44.6}  \\
Mamba           & 1.4B &  300B & 45.8/\textbf{60.9}  &  37.6/\textbf{46.0} & 31.0/\textbf{36.6} & 39.9/\textbf{59.6} & 60.5/\textbf{61.3} & 20.9/\textbf{36.4} & 39.3 / \textbf{50.1}  \\
Mamba           & 2.8B &  300B &  54.3/\textbf{66.6} &  38.9/\textbf{48.9} & 33.5/\textbf{40.1}  & 43.9/\textbf{59.4} & \textbf{66.2}/63.9 & 19.8/\textbf{36.9} & 42.8 / \textbf{52.6} \\ 
\hline
Mamba-2           & 130M &  300B  & 32.2/\textbf{50.9} & 29.5/\textbf{43.3} & 20.6/\textbf{28.9}  & 30.4/\textbf{47.0} & 43.7/\textbf{47.2} & 18.0/\textbf{34.0} & 29.1 / \textbf{42.0} \\
Mamba-2           & 370M &  300B  & 60.8/\textbf{76.7} & 38.3/\textbf{52.1} &  26.6/\textbf{33.6}  & 35.3/\textbf{51.8} & 54.6/\textbf{54.7} & 22.4/\textbf{36.3} & 39.7 / \textbf{50.9} \\
Mamba-2           & 1.3B &  300B  & 66.8/\textbf{74.7} & 50.0/\textbf{59.6} & 33.6/\textbf{40.5}  & 42.9/\textbf{59.6} & \textbf{63.8}/62.4 & 23.2/\textbf{36.6} & 46.7 / \textbf{55.6} \\
Mamba-2           & 2.7B &  300B  & 68.7/\textbf{81.6} & 55.2/\textbf{60.8} & 34.4/\textbf{41.7}  & 45.4/\textbf{59.4} & 66.4/\textbf{66.5}  & 23.0/\textbf{42.5} & 48.9 / \textbf{58.8} 
 \\ \hline
\end{tabular}
\caption{\textbf{Evaluation of pre-trained language models.} 
In each cell, we report in-context learning accuracy for the default zero-shot / {\prompt} methods (using prompts provided in \Cref{app:prompts}). We evaluate across a suite of popular recall-intensive benchmarks. The zero-shot prompt includes up to 1k tokens in the input and {\prompt} includes up to 2k tokens in the input for all tasks (due to repeating twice).}
\label{table:main-quality}
\vspace{-3mm}
\end{table*}
\vspace{-2mm}
\paragraph{Evaluation.} {\prompt} can be used with off-the-shelf LLMs.  We evaluate the following LMs on a suite of recall-intensive in-context learning tasks, with zero-shot prompting:
\begin{itemize}
    \item \textbf{Based} \citep{arora2024simple} pretrained LMs at the $1.3$B parameter scale trained on $10-50$B tokens of the Pile \citep{pile}. Transformer++ and Mamba models trained on the exact same tokens and data order are provided for quality references: \url{https://huggingface.co/collections/hazyresearch/}
    \item \textbf{Mamba} \citep{gu2023mamba} pretrained LMs at the $130$M, $370$M, $1.4$B, $2.8$B parameter scales, trained on $300$B tokens of the Pile \citep{pile}: \url{https://huggingface.co/state-spaces}
    \item \textbf{Gated Linear Attention} \citep{yang2023gated} pretrained LMs at the $1.3$B and $2.7$B parameter scales, trained on $100$B tokens of SlimPajama data \citep{together2023redpajama}: \url{https://huggingface.co/fla-hub}
    \item \textbf{Mamba-2} \citep{dao2024transformers} pretrained LMs at the $130$M, $370$M, $1.3$B, $2.7$B parameter scales, trained on $300$B tokens of the Pile \citep{pile}: \url{https://huggingface.co/state-spaces}
\end{itemize}

The results are summarized in Table \ref{table:main-quality}. \citet{arora2024simple} finds that linear recurrent models like Mamba drastically underperform Transformers on these recall-intensive tasks. Architectures like Based increase the recurrent state size, improving both quality and efficiency, and recently Mamba-2 adopts this approach as well. Complementing the approach of increasing state size, we find the {\prompt} modification provides $11.0 \pm 1.3$ points of improvement, averaged across models and tasks: Based models with {\prompt} outperform the Transformer models with standard prompting on average. 
We also find that {\prompt} can benefit the Transformer models and that the method appears more effective than few-shot learning for these tasks (\Cref{app:error_analysis}). Notably, \citet{springer2024repetition} recently proposes repeating the context for the goal of generating embeddings using autoregressive Transformer-based models, and our findings are in similar spirit. We focus on sub-quadratic architectures and in-context learning tasks.

{\prompt} increases the context length due to repetition, however using using sub-quadratic recurrent architectures, this is still asymptotically more efficient than using quadratic Transformer models. We find at sequence length $N=32768$, batch size $16$, Based with {\prompt} ($2N$ the sequence length) can provide $11.9\times$ higher throughput than FlashAttention-2 ($N$ sequence length) on an NVidia H100 (see \Cref{sec5-results}).

\section{{\arch}: an encoder-decoder recurrent architecture}

\label{sec:sec4-jrt-arch}
We have shown that the recall quality of causal fixed-memory recurrent models varies depending on the order in which the information appears in context, making them brittle for in-context learning. To improve reliability, we next propose a simple linear attention architecture that goes beyond causal modeling. 

A long line of work has demonstrated the strength of non-causal bidirectional neural networks in language modeling \citep{schuster1997bidirectional, kosko1988bidirectional, graves2005framewise,
devlin2020transformers, 
2020t5, patel2023bidirectional}. 
However, it is challenging to use them for fast text generation because the context must be re-processed for each generated token \citep{tay2023ul2, dong2019unified, patel2023bidirectional}. 
Encoder-decoder architectures with a bidirectional encoder and causal decoder offer a way to achieve fast causal generation while reaping the benefits of bidirectional LMs.  
Nonetheless, decoder-only causal LMs remain the norm and encoder-decoder architectures have received little attention in the context of sub-quadratic efficient LLMs.

\vspace{-2mm}
\subsection{Preliminaries}
\paragraph{Baseline linear recurrent architecture.}
We start from a recurrent architecture, linear attention, introduced in \citep{katharopoulos2020transformers,tsai2019transformer,choromanski2020rethinking}. Current strong recurrent LMs (e.g., Based \citep{arora2024simple}, GLA \citep{yang2023gated}, Mamba-2 \cite{dao2024transformers}) adopt linear attention with large recurrent state sizes. 
Prior work also theoretically shows that linear attention and state space models like Mamba \cite{gu2023mamba} are closely related \cite{arora2023zoology, arora2024simple, dao2024transformers}.

Let $\vq$, $\vk$, $\vv$ be linear projections of the input $\vu \in \mathbb{R}^{N \times d}$. 
  The exponential in softmax attention is replaced by a feature map $\phi: \mathbb{R}^d \rightarrow \mathbb{R}^{\tilde{d}}$, from model dimension $d$ to feature dimension $\tilde{d}$, such that $\phi(\vq_i)^\top \phi(\vk_j) \approx \exp(\vq_i^\top \vk_j / \sqrt{d})$. 
The linear attention computation can then be written as:
\begin{equation}
\label{eq:linear_attention}
\vy_i
= \frac{\phi(\vq_i) \sum_{j = 1}^i \big( \phi(\vk_j)^\top \vv_j \big)
}{\phi(\vq_i) \sum_{j = 1}^i \phi(\vk_j)}
\end{equation}
Multiplying keys and values first, the time and space complexity is $\mathcal{O}(Nd\tilde{d})$ vs. $O(N^2d)$ for softmax attention.

Recurrent inference is split into two phases: \textit{prefill} to process the input prompt and \textit{decoding} to generate one token of the output at a time. 
During \textit{prefill}, a length-$\nPre$ prompt is processed in parallel according to \Cref{eq:linear_attention} resulting in a ``KV-state'' 
$\vs_\nPre = \sum_{j=1}^{\nPre} \phi(\vk_{j})^\top \vv_{{j}}$ and ``K-state'' $\vz_\nPre = \sum_{j=1}^{\nPre}\phi(\vk_j)^{\top}$. 
During \textit{decoding}, we can compute \Cref{eq:linear_attention} as:
\begin{equation}
\label{eq:linear_attention_recurrent_3} 
    \vs_i = \vs_{i-1} + \phi(\vk_i)^\top \vv_i, \qquad
    \vz_i = \vz_{i - 1} + \phi (\vk_i)^\top, \qquad
    \vy_i = \frac{\phi(\vq_i)\vs_i}{ \phi(\vq_i)\vz_i}
\end{equation}

where $\bm{s}_{i} \in \mathbb{R}^{d \times \tilde{d}}$ and $\bm{z}_i \in \mathbb{R}^{\tilde{d}}$. Each decode step has $O(1)$ time and space complexity as the sequence length grows, improving upon $O(N)$ for softmax attention with KV-caching.

\paragraph{Prefix-LM architecture.} Prefix-LM is a category of encoder-decoder models where inputs of length $N$ are split into two regions: the first of length $M$ is processed non-causally and the latter of length $(N-M)$ is processed causally \citep{2020t5}. During loss computation, the former tokens are ignored and next-token-prediction loss is computed on the latter region. Excitingly, the design is quite simple, however prior instantiations of Prefix-LMs use inefficient softmax attention  backbones and have not provided compelling benefits over decoder-only Transformers \citep{wang2022language}. Prior prefix LM architectures have seen limited adoption.

\subsection{${\arch}$ architecture}
{\arch} draws inspiration from Prefix-LMs, but focuses on expanding the Pareto frontier of the quality-efficiency tradeoff space. To improve quality, {\arch} uses separate $\vk_{e}$, $\vv_{e}$ projections on the encoder side and $\vk_{d}$, $\vv_{d}$ projections on the decoder side. While Prefix LM models use shared projection weights for the encoder and decoder regions, we find that using two sets of projections improves quality. This observation appears in early work on recurrent encoder-decoder architectures (Sutskever et al. \cite{sutskever2014sequence}).

For efficiency, {\arch} uses non-causal linear attention for the encoder plus standard causal linear attention for the decoder. We term this Prefix Linear Attention (PLA) (\Cref{fig:main_fig_jrt} (Right)):
\begin{equation}
\label{eq:linear_attention_jrt} 
     \vy_i = \frac{ \phi(\vq_i) (\sum_{j = 1}^i \phi(\vk_{d_j})^\top \vv_{d_j} + \sum_{j = 1}^{M} \phi(\vk_{e_j})^\top \vv_{e_j})}
     {\phi(\vv{q_i}) (\sum_{j = 1}^i \phi(\vk_{d_j})^\top +  \sum_{j = 1}^{M}  \phi(\vk_{e_j})^\top)}
\end{equation}

Prior work has proposed many different instantiations of linear attention by varying the feature map ${\phi}$ -- PLA is a general approach, agnostic to the choice of feature map.   

PLA retains the linear recurrent view, $\mathcal{O}(1)$ time and space complexity for the inference decode step and the sub-quadratic in sequence length training complexity  of standard causal linear attention \citep{katharopoulos-et-al-2020}. 
During \textit{prefill}, we process a length-$\nPre$ prompt in parallel according to \Cref{eq:linear_attention_jrt}. If $\nPre < M$, we left-pad the prefill to length $M$ and mask the padded region during the linear attention computation. The recurrent state is initialized as: 
\begin{equation}
\label{eq:linear_attention_recurrent_4} 
    \vs_M = \sum_{j = 1}^{M} (\phi(\vk_{e_j})^\top \vv_{e_j} +  \phi(\vk_{d_j})^\top \vv_{d_j}), \qquad
    \vz_M = \sum_{j=1}^{M} (\phi(\vk_{e_j})^{\top} + \phi(\vk_{d_j})^{\top})
\end{equation}

Decoding for outputs $y_i, i > M$ proceeds according to  \Cref{eq:linear_attention_recurrent_3}, without modification.

\paragraph{Efficiency.} Although linear attention is theoretically more efficient than softmax attention, existing implementations are generally \textit{slower} than well-optimized standard attention implementations (e.g., FlashAttention \cite{dao2023flashattention2}). 
Excitingly, \cite{arora2024simple} recently provides an IO-aware kernel that realizes the efficiency benefits of the Based linear attention architecture by carefully paritioning and storing the large matrix-valued recurrent state across warp-registers during prefill (Algorithm 1 in \cite{arora2024simple}).
We extend their algorithm to support PLA, using the Based feature map (defined in \Cref{app:architecture}) in Algorithm \ref{alg:jrt_rnn} and provide the efficiency results in Section \ref{sec5-results}.
Additional details of our implementation are provided in \Cref{app:architecture}.

The baseline causal linear attention takes $2BNHD$ FLOPS to compute the feature map on  $\vq_{d}$, $\vk_{d}$, and $4BNHdD$ FLOPS for the $\vk_{d}$, $\vv_{d}$ dot product, cumulative sum, $\vq_{d}$ dot product, and sum along the feature dimension $D$ respectively. 
PLA increases the FLOPS by $BMHD$ to compute the feature map on $\vk_{e}$ and $3BMHdD$ to compute the $\vk_{e}$, $\vv_{e}$ dot product, sum along $D$, and sum the state with the decoder KV-state.
PLA uses the same amount of memory (recurrent state size) during the inference decoding step as the original causal linear attention architecture.

\subsection{{\arch} training objective} Our baseline recurrent models are trained with a standard next token prediction (NTP) objective, learning a probability distribution $\mathrm{P}(u_{i+1}|\{u_1, ..., u_i\})$  from input sequences of tokens $\mathbf{u} = \{u_1, ..., u_N\}$ for sequence length $N$, and cross-entropy loss. For the pure decoder models, the loss ($\mathcal{L}_{\mathrm{NTP}}$) is computed using all $N$ tokens in $\mathbf{u}$. {\arch}, as is standard for Prefix-LMs, an only compute the NTP loss ($\mathcal{L}_{\mathrm{NTP}}$) for tokens $\{u_{M}, ..., u_{N}\}$, which are processed causally. 

Prefix LMs typically compute no loss on the non-causal region, however in {\arch}, we combine next token prediction with the masked language modeling (MLM) objective \citep{devlin2020transformers}. For the added MLM objective, we replace proportion $P$ of of tokens from the encoder region $\{u_{1}, ..., u_{M}\}$ with a $[\mathrm{MASK}]$ token 
and we measure the cross-entropy loss ($\mathcal{L}_{\mathrm{MLM}}$) in predicting the original token. The loss is: 
\begin{equation}
\label{eq:loss} 
\mathcal{L} = \frac{w_1\mathcal{L}_{\mathrm{NTP}} + w_2\mathcal{L}_{\mathrm{MLM}}}{w_1+w_2}
\end{equation}
where $w_1, w_2 \in \mathbb{R}$ are scalar weights. During inference, no $[\mathrm{MASK}]$ tokens are used; inference proceeds as with causal LMs.

\section{Results}

\label{sec5-results}

In this section, we validate the following quality and efficiency claims for {\arch}:
\begin{enumerate}[itemsep=0.1pt,topsep=0pt,leftmargin=*]
    \item \textbf{In-context learning (ICL) quality} {\arch} provides $99\%$ of Transformer quality at $360$M params./$30$Bn tokens, averaged across the recall-intensive ICL benchmarks. This represents $46.7\%$ improvement over Based and $78.8\%$ over Mamba. {\arch} provides $96\%$ of Transformer quality at $1.3$Bn params./ $50$Bn tokens, representing  $16.2\%$ improvement over Based and $34.5\%$ over Mamba on average.
    \item \textbf{Overall language modeling} Beyond outperforming in recall, we show that {\arch} matches the baselines in general natural language understanding (SuperGLUE). We give a detailed analysis of the pretrained LMs, comparing perplexity on slices of the Pile test set to show the strengths and limitations.
    \item \textbf{Generation} We show that {\arch} can provide $19.2\times$ higher prefill throughput than FlashAttention-2 at $32\mathrm{k}$ sequence length, batch size $16$ on an NVidia H100 GPU. 
\end{enumerate}

\paragraph{Models.}
We compare {\arch} to two state-of-the-art recurrent autoregressive models, Based \citep{arora2024simple} and Mamba \citep{gu2023mamba}. We also compare to the Transformer++ (Llama architecture \citep{touvron2023llama}), which adds rotary encodings \citep{su2023roformer} and gated linear units. 

For {\arch}, we start from the Based linear recurrent architecture, since it has been shown in prior work to outperform prior sub-quadratic architectures (e.g., Mamba, GLA) at recall.
An extended explanation of Based is in \Cref{app:architecture}. 
We reiterate that the approaches in {\prompt} and {\arch} can be combined with any linear recurrent model.

\vspace{-3mm}
\paragraph{Benchmarks.}
We evaluate on a range of ICL benchmarks. We use SuperGLUE to test general language understanding \citep{wang2019superglue}.
We next evaluate on a suite of recall-intensive tasks including:  SWDE and FDA information extraction tasks \citep{wu2021medai, deng2022domlm,  arora2023evaporate, arora2024simple}, where the model needs to extract values for a specified attribute from in-context passages, and SQUADv2 \citep{rajpurkar2018squad}, Natural Questions \citep{kwiatkowski-etal-2019-natural}, TriviaQA \citep{joshi2017triviaqa}, and Drop \citep{Dua2019DROP}. 
In these tasks, the model needs to ground its answers in in-context documents. 
We release code and models to reproduce our results and provide details on the benchmarks and evaluations in \Cref{app:expt-details}.
\begin{table*}[]
\centering
\scriptsize
\begin{tabular}{lc|cc|cc|cc|ccc|c}
\toprule
\multirow{3}{*}{Architecture} &
\multirow{3}{*}{Param/Tok} &
\multicolumn{2}{c}{\textbf{FDA}} &
\multicolumn{2}{c}{\textbf{SWDE}} & 
\multicolumn{2}{c}{\textbf{NQ}}  & 
\multicolumn{1}{c}{\textbf{SQUAD}}  & 
\multicolumn{1}{c}{\textbf{Trivia}}  & 
\multicolumn{1}{c}{\textbf{Drop}} &
\textbf{Avg.}
   \\
   & 
   &
  $\mathrm{512}$  &
  $\mathrm{1024}$ &
  $\mathrm{512}$  &
  $\mathrm{1024}$ &
  $\mathrm{512}$ &
  $\mathrm{1024}$ &
  $\mathrm{Full}$ &
  $\mathrm{Full}$ & 
  $\mathrm{Full}$ 
  
   \\
   & 
   &
  Acc $\uparrow$ &
  Acc $\uparrow$ &
  Acc $\uparrow$ &
  Acc $\uparrow$ &
  Acc $\uparrow$ &
  Acc $\uparrow$ &
  Acc $\uparrow$ &
  Acc $\uparrow$ &
  Acc $\uparrow$ & 
  Acc $\uparrow$ 
  \\ 
  \hline \hline
Transformer     &  360M/30B  & 
74.8 & \textbf{73.0} & \textbf{44.7} & \textbf{43.0} & \underline{27.8} & \textbf{22.9} & \underline{36.2} & \textbf{46.5} & \underline{21.8}  & \textbf{43.4} \\
Mamba              &  360M/30B  & 
41.1 & 24.3 & 22.2 & 13.6 & 16.4 & 12.5 & 25.5 & 43.0 & 17.3 & 24.0 \\
Based   &  360M/30B  & 
50.3 & 35.8 &  30.4 & 21.6 & 19.7 & 14.7 & 29.8 &  42.5 & 18.4 & 29.2\\
{\arch} &  360M/30B & 
\textbf{82.0} & \underline{66.0} & \underline{43.3} & 35.1 & \textbf{32.9} & \underline{16.2} & \textbf{41.7} & \underline{43.2} & \textbf{25.8} & \underline{42.9} \\
\hline
Transformer      &  1.3B/10B  & 
\underline{75.3} & \textbf{71.5}  & \textbf{41.6} & \textbf{41.0} & \textbf{29.6} & \textbf{25.8} & \underline{38.7} & \textbf{48.8}  & \underline{22.6} & \textbf{43.9} \\
Mamba              &  1.3B/10B  & 
37.4 & 23.3 &  23.0 & 15.1 & 19.6 & 16.1 & 26.1 & \underline{45.7} & 20.9 & 25.2\\
Based              &  1.3B/10B  & 
66.3 & 49.0 &  32.3 & 26.3 & 19.7 & 15.7 & 30.7 & 44.2 & 19.1 & 33.7  \\
{\arch}            &  1.3B/10B  & 
\textbf{78.5} & \underline{60.6 }& \underline{38.5} & \underline{32.7} & \underline{26.5} & \underline{16.7} & \textbf{51.6} & 44.8 & \textbf{28.4} & \underline{42.0} \\
\hline
Transformer      &  1.3B/50B  & 
\underline{85.6} & \textbf{83.5} & \textbf{55.7} & \textbf{56.0} & \underline{33.4} & \textbf{29.9} & \underline{40.1} &  \textbf{56.6} & \underline{21.4} &  \textbf{51.4}\\
 Mamba              &  1.3B/50B  & 
55.4  &  40.1 & 44.0 & 33.7 & 27.6 &  23.2 & 32.2 & \underline{54.5}  & 20.7 & 36.8 \\
Based              &  1.3B/50B  & 
69.3 & 58.8 & 47.6 & 40.4 & 29.1 & 24.4 & 38.5 & 54.3  & 20.8 & 42.6\\
{\arch}            &  1.3B/50B  & 
\textbf{86.7} & \underline{67.7} & \underline{49.4} & \underline{45.7} & \textbf{38.3} & \underline{25.4} & \textbf{50.4} &  53.0 &  \textbf{29.3} & \underline{49.5} \\
\hline
\end{tabular}
\caption{\textbf{Evaluation of {\arch} models.}
We compare {\arch} to strong LMs proposed in prior work (Based, Mamba, and Transformer++) across parameter scales. In the table, we specify the length (number of tokens) of the documents provided in context ($512$, $1024$, Full), where ``Full'' means the full document is included as prefill. \Cref{table: datasets-overview} contains the average number of tokens per document in each benchmark.
}
\label{table:jrt-rnn-quality}
\vspace{-4mm}
\end{table*}

\vspace{-2mm}
\subsection{In-context learning quality}
In \Cref{table:jrt-rnn-quality}, we find {\arch} outperforms the decoder-only baseline (Based) by $13.7$ points at $360\mathrm{M}$ parameters (30Bn tokens) and $6.9$ points at $1.3\mathrm{B}$ parameters (50Bn tokens) on average. {\arch} closes the gap to Transformer++ to within $0.5$ points on average at $360\mathrm{M}$ and $1.9$ points on average at $1.3\mathrm{B}$ parameters.

In \Cref{table:jrt-rnn-quality}, we left pad documents with length $<M$, where $M=1024$ is the encoder region's length during training (discussed in \Cref{sec:sec4-jrt-arch}) -- for the three results with length $512$ documents we pad using {\prompt} and otherwise with the tokenizer's space token (discussed further below).

\begin{table}
\label{tab:extrapolation}
\scriptsize
\centering
\begin{tabular}{lc|ccc}
\toprule
\multirow{2}{*}{Arch.} &
\multirow{2}{*}{Param/Tokens} &
\multicolumn{1}{c}{\textbf{FDA}} &
\multicolumn{1}{c}{\textbf{SWDE}} 
& 
\multicolumn{1}{c}{\textbf{NQ}} 
   \\
   & 
   &
  $\mathrm{2k}$ & 
  $\mathrm{2k}$ 
  & 
  $\mathrm{2k}$ 
  \\ 
  \hline \hline
Transformer      &  360M/10B  &  65.2 & 41.0 & 23.0   \\
Mamba    &  360M/10B  &  12.4  & 13.4  & 12.4 \\
Based    &  360M/10B  &  19.1  & 18.9  & 13.9 \\
{\arch}  &  360M/10B  &  28.4  & 26.1  & 15.4 \\ 
\hline
Transformer  &  1.3B/50B  & 79.7 & 55.5 & 30.2 \\
Mamba        &  1.3B/50B  & 21.0 & 29.9 & 23.1 \\
Based        &  1.3B/50B  & 36.1 & 37.7 & 23.4 \\
{\arch}      &  1.3B/50B  & 55.2 & 41.4 & 26.2 \\
\hline
\end{tabular}
\caption{Evaluation at prefill lengths $\mathrm{2k}$, i.e. beyond  the encoder region (length $M=1024$).}

\label{tab:inference_jrt}
\scriptsize
\centering
\begin{tabular}{lc|ccc}
\toprule
\multirow{2}{*}{Inference} &
\multirow{2}{*}{Param/Tokens} &
\multicolumn{1}{c}{\textbf{FDA}} &
\multicolumn{1}{c}{\textbf{SWDE}} 
& 
\multicolumn{1}{c}{\textbf{NQ}}  \\
   & 
   &
  $\mathrm{512}$ & 
  $\mathrm{512}$ & 
  $\mathrm{512}$ \\ 
  \hline \hline
Left-pad        &  360M/30B  & 61.9 & 38.1 & 24.6 \\
Read-$2\times$  &  360M/30B  & 82.0 & 43.3 & 32.9 \\
Iterate         &  360M/30B  & 76.3 & 40.7 & 29.2 \\
\hline
Left-pad        &  1.3B/50B  & 75.8 & 49.3 & 30.9 \\
Read-$2\times$  &  1.3B/50B  & 86.7 & 49.4 & 38.3 \\
Iterate         &  1.3B/50B  & 80.2 & 43.3 & 34.2 \\
\hline
\end{tabular}
\caption{{\arch} with alternate inference strategies when $\nPre < M$, for prefill and encoder lengths $\nPre$ and $M$.}
\vspace{-3mm}
\end{table}

\paragraph{Length extrapolation.} 
Though the encoder processes until length $M=1024$ for our trained LMs, we excitingly find that the benefits of JRT extend to prefill lengths $\nPre$ s.t. $\nPre > M$ as well. In \Cref{tab:extrapolation}, we 
evaluate at the $360\mathrm{M}$ and $1.3\mathrm{B}$ parameter scales with documents of length $2000$. 

\vspace{-2mm}
\paragraph{Inference strategies.} 
In \Cref{tab:inference_jrt}, we compare alternate inference strategies for {\arch} in the regime where the prefill length $\nPre$ is less than the encoder length $M$, $\nPre < M$:
\begin{itemize}[itemsep=0.1pt,topsep=0pt,leftmargin=*]
    \item \textbf{Decoding with padding}: We left-pad the prefill to length $M$ to match the training distribution the model sees. Causal decoding starts at position $M$. This is the default for {\arch}.
    \item \textbf{Read-twice pad}: Instead of padding with a special token, we can ``pad'' by repeating the context (i.e., {\prompt}). We use this at $\nPre = 512$ for FDA, SWDE, and NQ in \Cref{table:jrt-rnn-quality}. Padding is a fixed cost for {\arch}, so it can be used creatively. 
    
    \item \textbf{Iterative encoding}: We allow the model to \textit{non-causally view its previously generated tokens} during decoding. We generate token $\vy_{\nPre}$ given the length $\nPre$ prefill, append it to the prefill, and then compute $\vy_{\nPre+1}$ again using the parallel view on the new input of length $\nPre+1$. This protocol is expensive, but future work could consider \textit{periodically} updating the non-causal encoder-state when decoding many tokens.
\end{itemize}

\subsection{Overall natural language understanding}
While recall is important for in-context learning, it is important to validate that the models remain strong in their overall natural language understanding abilities.

\vspace{-2mm}
\paragraph{Language modeling perplexity.} A fundamental challenge is how to compare the inherent quality of models pre-trained with disparate objectives. In our setting, this is challenging since {\arch} additionally minimizes a masked language modeling objective beyond the standard causal next token prediction objective and sees $50\%$ less data than the decoder-only models for the next token prediction task (when $M=1024, N=2048$). Overall {\arch} computes losses on $65\%$ of the number of  training data tokens seen by the decoder-only models (with 15\% masked tokens in the encoder region).  

\begin{figure*}[t]
    \centering
    \includegraphics[width=0.475\linewidth]{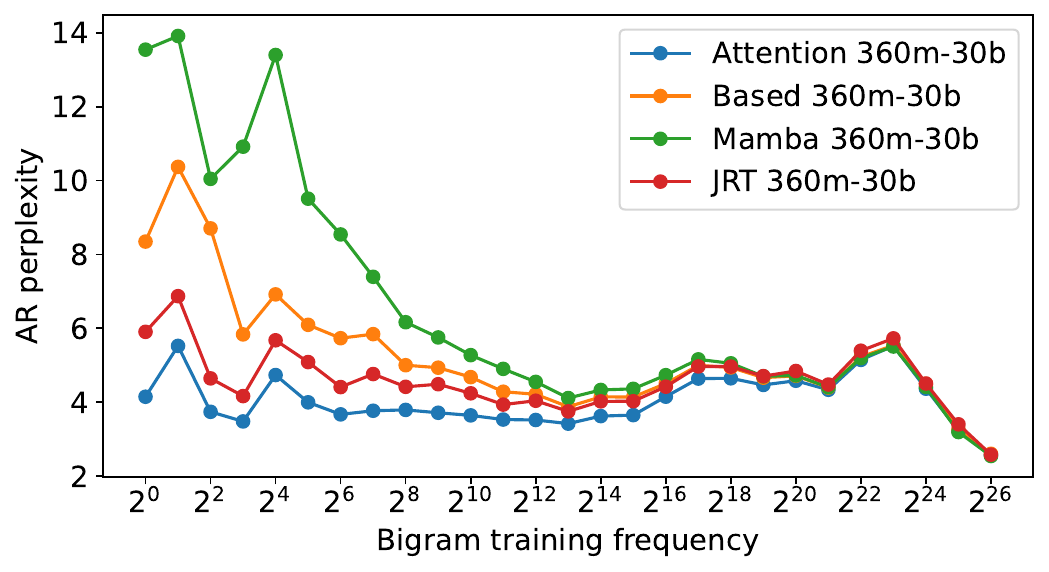}
    \includegraphics[width=0.475\linewidth]{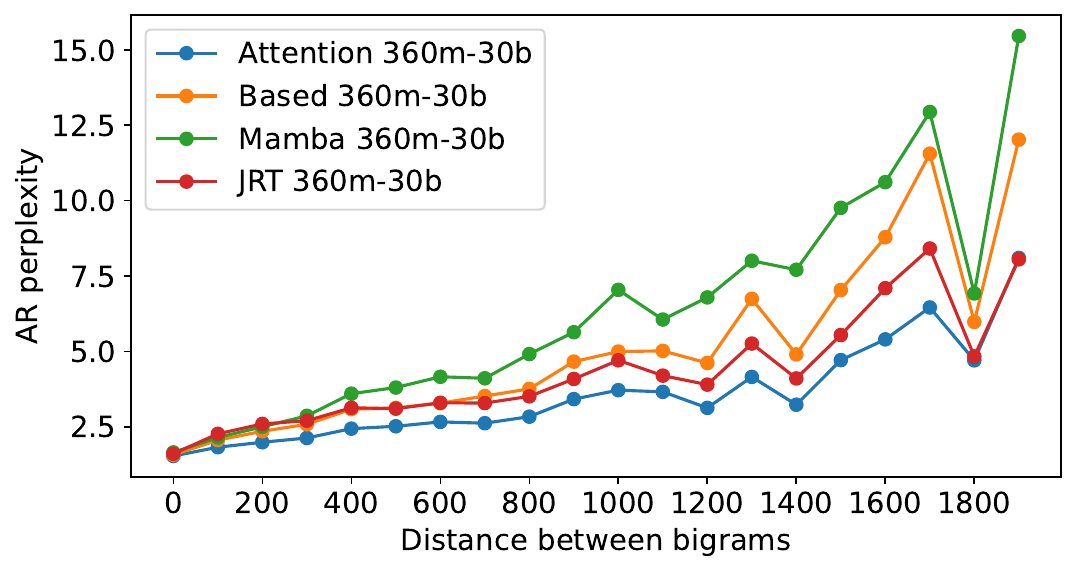}
    \includegraphics[width=0.49\linewidth]{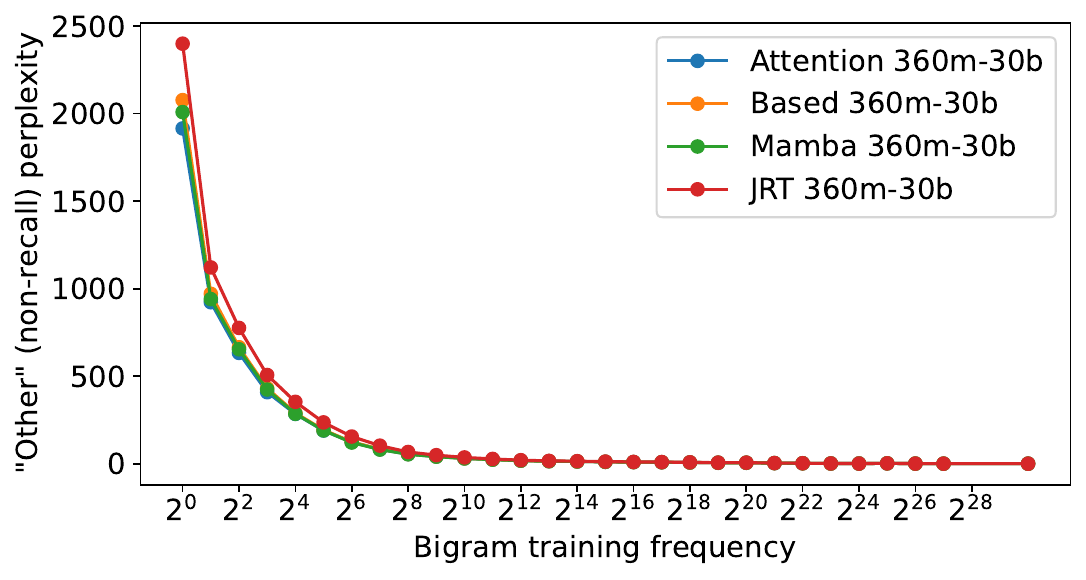}
    \caption{\textbf{Perplexity slices.} We slice the Pile test set perplexities of the pretrained LMs 
    into associative recall ``AR'' and non-recall ``Other'' slices. A token is an AR token if it corresponds to a bigram that is re-occurring in the context, since the LM can look to the prior occurrence to predict the next token (Def. in \Cref{sec:lm_ppl}). \textbf{Top left (recall frequencies)} We plot $y$ perplexity on AR bigram tokens that test the LMs' recall skills based on $x$ the bigram frequency in training. \textbf{Top right (recall distances)} We plot $y$ perplexity for AR tokens based on $x$ the distances between the re-occuring bigrams in context. \textbf{Bottom (non-recall frequencies)} We plot $y$ perplexity on non-recall tokens based on $x$ the bigram frequency in training. Further details are in \Cref{app:expt-details}.}
    \vspace{-3mm}
    \label{fig:ppl_slices}
\end{figure*}

 Despite these differences, we consider a simple proxy of evaluating the perplexity of decoder-baselines in comparison to encoder-decoder {\arch} in the overlapping non-causal regions of both model types (i.e. the last $1024$ tokens per input sequence of $N=2048$ for our trained models). Following prior work \citep{arora2023zoology}, we further \textit{slice} the perplexity in two groups: (1) the associative recall ``AR slice'' includes tokens, referred to as ``AR hits'', that require the model to perform recall in order to predict the next token correctly and (2) the ``Other slice'' containing the remaining tokens (e.g., memorized knowledge). \footnote{As a heuristic rule, a token is an ``AR hit'' if it is completes a bigram that  was previously seen in-context, and this bigram is infrequent during training (i.e., was not memorized by the model) \citep{arora2023zoology}. For instance, in the sequence ``In 1957, Dr. Seuss wrote ... In 1982, Dr. \textbf{\underline{Seuss}}'' the second \textbf{\underline{Seuss}} would be included as an ``AR hit'' if ``Dr. Seuss' is a rare bigram during training.}

Slicing the model predictions on the Pile test set, we observe the following. Our measurement protocols are described in further detail in \Cref{app:expt-details}. 
 \label{sec:lm_ppl}
 \begin{enumerate}
     \item \textbf{Recall frequencies.} {\arch} excels in the ``AR slice''. For infrequently seen bigrams during training (unlikely to be memorized in the model parameters), {\arch} improves in perplexity relative to Based and Mamba, two strong causal recurrent baselines (\Cref{fig:ppl_slices}, top right). 
     \item \textbf{Recall distances.} In the ``AR slice'', the gap between {\arch} and the decoder-only baselines grows as the distances between repeated bigrams seen in-context grows. This provides further support beyond \Cref{tab:extrapolation} that {\arch} can help with longer context recall tasks (\Cref{fig:ppl_slices}).
     \item \textbf{Non-recall frequencies.} {\arch} is worse in perplexity than the decoder-only LMs for the \textit{non-recall} ``Other slice'' for bigrams that are rarely seen during training. This slice tests the model's use of memorized knowledge (as opposed to knowledge provided in the context). This is expected as {\arch} computes losses $65\%$ of the tokens of the decoder-only LMs. We expect this gap to decrease with scale and longer training durations (seen as the bigram frequencies increases) (\Cref{fig:ppl_slices}, top left).  Future work could also consider decoupling sequence mixers from MLPs (knowledge stores) in training. How best to normalize training between encoder-decoder and decoder-only LMs is an open question.
 \end{enumerate}

\vspace{-2mm}
\paragraph{Natural language understanding benchmarks.} We use the downstream SuperGLUE benchmark, a canonical test of natural language understanding ability \citep{wang2019superglue}, to evaluate each architecture at the $360\mathrm{M}$ and $1.3\mathrm{B}$ parameter scales in \Cref{tab:super-glue}. 
We validate that the different architectures perform  similarly on average across these generic, short-context language tasks as observed in prior work \cite{fu2023monarch, karami2024orchid, arora2024simple}.

\vspace{-2mm}
\subsection{Generation throughput}
Generation can be decomposed into prompt ``prefill processing'' and decoding ``next token prediction'' steps. Since {\arch} does not modify the decoding step relative to standard decoder-only recurrent models, we focus our discussion on the prefill stage.

\begin{table*}[h!]
    \centering
    \scriptsize
    \caption{
    Latency (ms) of inference prefill for each implementation. Each point is the average of $20$ iterations, run on an NVIDIA H100 GPU. In \Cref{tab:gpu_inference_jrt_seq}, we vary the  sequence length at a fixed batch size of $16$. In \Cref{tab:gpu_inference_jrt}, we vary the batch size at a fixed sequence length of $16384$.}
    \label{tab:gpu_inference_jrt_seq}
    \begin{tabular}{@{}rccccc@{}}
    \toprule
    \textbf{Implementation}  & \textbf{2048}  & \textbf{4096}  & \textbf{8192}  & \textbf{16384} &  \textbf{32768}   \\ \midrule
    Based PyTorch        & 17.1 &  74.5  & 284.6  &  OOM   & OOM    \\
    Fast Transformer CUDA  & 11.4  &  23.0  &  47.0 &  96.0  & OOM  \\ 
    Based Triton (FLA)   & 1.0  &  2.8   & 9.3    &  32.6  & 123.7  \\
    Based Custom CUDA    & 0.3  &  0.6   & 1.2    &  2.3   & 4.5    \\
    \hline
    FlashAttention-2  &  0.5  &  1.8  & 6.8  &   26.6 & 107.8  \\
      \midrule
       {\arch} PyTorch   & 21.3  &  89.2  & OOM  &  OOM  & OOM  \\
       {\prompt} Custom CUDA & 0.6 & 1.2 & 2.3 & 4.5 & 9.0 \\
       {\arch} Custom CUDA   &  0.4   &  0.8  &  1.5 &  2.8  & 5.6  \\ 
    \bottomrule \\
    \end{tabular}

    \centering
    \label{tab:gpu_inference_jrt}
    \begin{tabular}{@{}rcccccc@{}}
    \toprule
    \textbf{Implementation}  & \textbf{2}  & \textbf{4}  & \textbf{8}  & \textbf{16} &  \textbf{32}  & \textbf{64}  \\ \midrule
    Based PyTorch        & 140.9 &  281.5  &  OOM &   OOM  &  OOM  &  OOM \\
    Based Triton (FLA)   &  4.6 &  8.7  & 16.7  &   32.4  &  64.2 & 127.8   \\
    Based Custom CUDA    &  1.2 &  1.3 & 1.5 & 2.3 & 4.5 & 8.9 \\
    \hline
    FlashAttention-2     & 3.5  &  6.7  & 13.4  &  26.6   &  52.9 & 108.2   \\
    Fast Transformer CUDA  & 17.1 &  26.7  &  50.7 &  95.5   &  OOM & OOM   \\
      \midrule
       {\arch} PyTorch   & 169.6 &  340.3  &  OOM &  OOM   &    OOM & OOM \\
       {\prompt} Custom CUDA & 2.3 &  2.5  & 2.9  &  4.5   &  9.0 & 17.8   \\
       {\arch} Custom CUDA   & 1.5  &   1.5 &  1.8 &   2.8  &  5.6 & 11.1   \\
    \bottomrule \\
    \end{tabular}

\end{table*}

Using the Based CUDA kernel proposed in \cite{arora2024simple}, 
{\prompt} gives $11.9\times$ and $13.7\times$ higher throughput in processing the prompt prefill than the FlashAttention-2 and FLA Triton kernels respectively (prefill length $32768$) (\Cref{tab:gpu_inference_jrt_seq}).
{\prompt} provides $6.1\times$ and $7.2\times$ higher throughput than the FlashAttention-2 and FLA kernels respectively as we increase the batch size to $64$ 
(\Cref{tab:gpu_inference_jrt}).
For {\prompt}, we \textit{double the prefill length} compared to the baselines, using $2\times$ the time of the original Based prefill. 

We next extend the Based kernel to support {\arch}  and demonstrate that the implementation achieves $19.2\times$ and $22.0\times$ higher throughput than FA2 and FLA as we increase sequence length to $32768$ (\Cref{tab:gpu_inference_jrt_seq}).
{\arch} provides $9.7\times$ and $11.5\times$ higher throughput respectively as we increase the batch size to $64$ (\Cref{tab:gpu_inference_jrt}). {\arch} takes $1.24\times$ the time of the Based prefill, improving efficiency over {\prompt}. 

We benchmark the inference efficiency of {\prompt} and {\arch} in \Cref{tab:gpu_inference_jrt_seq} (additional details in  \Cref{app:architecture}). As baselines, we consider popular and well-optimized softmax attention and linear attention implementation. For attention, we consider FlashAttention-2 \citep{dao2023flashattention2}. For linear attention, we consider the linear attention CUDA kernel from Fast Transformers \citep{katharopoulos-et-al-2020, vyas_et_al_2020} 
and a Triton parallel Based kernel from Flash Linear Attention (FLA) \citep{yang2024fla}.
We also compare to PyTorch implementations of JRT-RNN and Based. All numbers are benchmarked on a NVidia H100 GPU.

\section{Conclusion}

Recurrent LLMs promise drastically more efficient inference relative to Transformers, however they are brittle during in-context learning. 
We identify the role of data order as a key reason,
formalized via synthetics and theory.
Our analysis suggest that putting data in the right order in context or non-causally processing the context can help efficient recurrent models better use their limited memory. We translate these insights to {\prompt} and {\arch} respectively. {\prompt} improves the quality of recurrent models by $11.0 \pm 1.3$ points averaged across models and tasks, and our prototype architecture, {\arch}, provides a $13.7$ point improvement at $360\mathrm{M}$ parameters and  $6.9$ point improvement at $1.3\mathrm{B}$ parameters. Both methods increase throughput  relative to FlashAttention-2 using IO-aware CUDA implementations. 

While much of the effort on sub-quadratic LMs seeks to directly mimic the experience of using quadratic Transformer LMs, our work emphasizes that we can exploit the asymmetries in efficiency to close the quality gaps: \textit{multiple} linear passes over data is still asymptotically more efficient than quadratic attention. To facilitate reproducing this work, we release code and models at \url{https://github.com/HazyResearch/prefix-linear-attention}.

\section*{Acknowledgments}
We thank Michael Zhang, Michael Poli, Daniel Fu, Kawin Ethayarajh, John Thickstun, and Neel Guha for their helpful feedback and discussion during this work. We thank the Hazy Research lab and Together AI for supporting this work. 
We gratefully acknowledge the support of NIH under No. U54EB020405 (Mobilize), NSF under Nos. CCF2247015 (Hardware-Aware), CCF1763315 (Beyond Sparsity), CCF1563078 (Volume to Velocity), and 1937301 (RTML); US DEVCOM ARL under Nos. W911NF-23-2-0184 (Long-context) and W911NF-21-2-0251 (Interactive Human-AI Teaming); ONR under Nos. N000142312633 (Deep Signal Processing), N000141712266 (Unifying Weak Supervision), N000142012480 (Non-Euclidean Geometry), and N000142012275 (NEPTUNE); Stanford HAI under No. 247183; NXP, Xilinx, LETI-CEA, Intel, IBM, Microsoft, NEC, Toshiba, TSMC, ARM, Hitachi, BASF, Accenture, Ericsson, Qualcomm, Analog Devices, Google Cloud, Salesforce, Total, the HAI-GCP Cloud Credits for Research program,  the Stanford Data Science Initiative (SDSI), and members of the Stanford DAWN project: Facebook, Google, and VMWare. The U.S. Government is authorized to reproduce and distribute reprints for Governmental purposes notwithstanding any copyright notation thereon. Any opinions, findings, and conclusions or recommendations expressed in this material are those of the authors and do not necessarily reflect the views, policies, or endorsements, either expressed or implied, of NIH, ONR, or the U.S. Government. AR's research is supported by NSF grant CCF\#2247014.
\clearpage

\bibliographystyle{unsrtnat}
\bibliography{references}



\newpage
\appendix
\onecolumn
\clearpage
The appendix is organized as follows:
\begin{enumerate}
    \item \Cref{app:related-work} includes an extended related works discussion. 
    \item \Cref{app:expt-details} includes additional experimental details.
    \item \Cref{app:additional_experiments} includes additional experiments to supplement \Cref{sec5-results}.
    \item \Cref{app:architecture} includes details on the IO-aware implementation and benchmarking for {\arch}.
    \item \Cref{app:error_analysis} includes error analysis discussion for {\prompt}.
    \item \Cref{app:prompts} includes the prompts used for all in-context learning experiments in this work. 
    \item \Cref{app:theory} includes theoretical results and proofs.
\end{enumerate}

\clearpage

\section{Extended related work discussion}
\label{app:related-work}

The notion that causal models are limited because they need to ``predict the future'' when computing representations is well-known \citep{2020t5, schuster1997bidirectional, kosko1988bidirectional}. Yet, current large language models (e.g., Llama \cite{touvron2023llama}, GPT \cite{brown2020language}, and efficient Mamba \citep{gu2023mamba}, Griffin \citep{de2024griffin}, GLA \citep{yang2023gated}, RWKV \citep{peng2023rwkv}, Striped Hyena \citep{stripedhyena}) are causal. Here we provide an extended discussion of the related work.

\subsection{Prompting strategies} Most related to our work, \citet{springer2024repetition} recently proposes to produce embeddings from autoregressive Transformer models by repeating the context twice and taking embeddings from the activations of second occurrence. 
We focus on 1) sub-quadratic models / memory perspective, 2) recall-intensive tasks rather than producing embeddings.
Our findings build on these ideas and the key distinctions are: (1) our focus on \textit{sub-quadratic architectures}, which can provide asymptotically higher efficiency, (2) our focus on recall and in-context learning based tasks as opposed to embedding generation, and (3) our theoretical analysis on why {\prompt} impacts the memory requirement of recurrent LMs.

We are certainly not the first to try modifying the data order for recurrent LMs. The seminal Seq2seq paper from Sutskever et al. \cite{sutskever2014sequence} proposes to \textit{reverse} the order of the tokens in the source sequence when using encoder-decoder LSTM-based recurrent language models.

\subsection{Encoder-decoder language models} A long line of work has explored the use of bidirectional networks \citep{schuster1997bidirectional, kosko1988bidirectional, graves2005framewise,
devlin2020transformers, 
2020t5, patel2023bidirectional}. In early work,  \citet{schuster1997bidirectional} demonstrate synthetic math tasks that require recurrent models to use lagging and future values to produce outputs, favoring bidirectional networks. \citet{kosko1988bidirectional} explores associative recall style tasks in two layer bidirectional networks. We build on the ideas from this line of work and focus on our discussion on large language modeling architectures. 

Three popular language modeling architecture paradigms are encoder-only, decoder-only, or encoder-decoder. A popular use case for bidirectional, encoder-only, models is producing word or context embeddings \citep{peters2018deep, devlin2020transformers}. It is challenging to use these models for fast and open-ended generation \citep{tay2023ul2, dong2019unified}. Encoder-decoder models have emerged as a compelling alternative, combining non-causal bidirectional encoding for parts of the input text and causal decoding to generate responses. 

However, causal decoder-only language models currently prevail (e.g., Llama-3 \cite{llama3modelcard}, GPT \cite{ouyang2022instruct, brown2020language}, PaLM \cite{chowdhery2022palm}). Current research on efficient architectures also largely focuses on pure encoder-only (e.g. M2-BERT \cite{fu2023monarch}, Mamba-Caduceus \cite{schiff2024caduceus}, Orchid \cite{karami2024orchid}) or decoder-only causal LMs (e.g., Mamba \cite{gu2023mamba}, RWKV \cite{peng2023rwkv}, Griffin \cite{de2024griffin}, Striped Hyena \cite{stripedhyena}), as opposed to encoder-decoder. In contrast, our work on {\arch} explores encoder-decoder recurrent LMs in light of recent progress in sub-quadratic efficient architectures. \newline

\paragraph{Recurrent encoder-decoder language models}

Recurrent encoder-decoder language models were popular in the context of machine translation systems.  \citet{sutskever2014sequence} uses two LSTM RNNs, one to process the inputs and produce a fixed dimensional vector, and the other to decode the outputs from this vector.
\citet{wu2016googles} use a similar two-stack (encoder-stack and decoder-stack) architecture, using right-to-left and left-to-right RNNs for some encoder layers).

Instead of compressing the source sentence into a fixed recurrent state,  \citet{bahdanau2016neural} use \textit{attention} to refer back to encoder states. A key motivating observation for the switch to attention comes from \citet{cho2014properties}, which finds that the quality of RNN-based encoder-decoder language models degrades quickly as the sequence length increases. Following the rise of attention and the Transformer architecture \citep{vaswani2018attention} in popularity, subsequent work predominantly explores Transformer-based encoder-decoder LMs.

\paragraph{Transformer-based encoder-decoder language models} \citet{2020t5} propose the T5 architecture, which uses two separate Transformer stacks, one for non-causally encoding input text and one for causally decoding response. Cross-attention allows the decoder attention queries to attend to the final attention key and value states form the encoder stack. 
More recently, \cite{yen2024longcontext} trains a 7Bn parameter two-stack encoder-decoder model called CEPE, adapted form Llama-2 \citep{touvron2023llama} with cross-attention between stacks, following T5.\footnote{\url{https://huggingface.co/hyen/CEPED-LLaMA-2-Chat-7B}} We evaluate this model on the recall-intensive tasks and surprisingly find that ignoring its encoder altogether and placing documents and questions in the decoder far outperforms placing the document in the encoder and questions in the decoder on the recall-intensive benchmarks.
\begin{table}[h]
\centering
\begin{tabular}{l|cc}
\toprule
   & SWDE & FDA  \\
 &
  Acc. $\uparrow$ &
  Acc. $\uparrow$ \\ 
  \hline
CEPE Enc.-Dec.   & 51.0 &  5.9 \\ 
CEPE Dec.-Only       & 80.4 & 72.5 \\ 
\hline
\end{tabular}
\caption{Evaluating the CEPE 7Bn parameter model \citep{yen2024longcontext} on the document information extraction tasks, using $N=50$ random examples. For the encoder-decoder baseline, the document is inputted to the encoder and the question (i.e., name of the attribute to extract from the document) is sent to the decoder. In the decoder-only model, the standard prompt containing the document plus attribute are inputted to the decoder and the model's encoders are ignored (empty inputs). We observe the encoder-decoder model tends to produce irrelevant responses.}
\label{table:cepe}
\vspace{-3mm}
\end{table}

Prior work suggests that the T5 architecture struggles in open-ended generation \citep{patel2023bidirectional, tay2023ul2}. Some differences between {\arch} and the T5-style approach are that the T5 corruption pretraining objective deviates from how the models are used for downstream generation tasks, and training requires the use of multiple special sentinel tokens and unique positional encodings per stack of layers.

Instead of using separate encoder and decoder stacks, some prior work explores the use of Prefix-LMs. These models split the input into encoder and decoder regions \textit{within} each layer, where the former is processed non-causally and the latter is processed causally \citep{2020t5}. Next token prediction loss is computed on the causal tokens and no loss is computed on the prefix tokens. 

To better equip encoder-decoders with generation abilities,  UniLM \citep{dong2019unified}, UL2 \citep{tay2023ul2}, AlexaTM \citep{soltan2022alexatm} and others use different combinations of span corruption and prefix language modeling pretraining objectives. During training, given an input sequence, one of the suite of objectives is sampled with some pre-defined probability. Each of these architectures are Transformer-based, facing quadratic scaling in sequence length during training and linear scaling during inference. In GLM \citep{du-etal-2022-glm}, spans of text are masked and autoregressively in-filled during training, to endow the model with generation capabilities. We are inspired by these works in combining MLM and next token prediction objectives, and future work could explore alternate variations to the training objective used in {\arch}.

\paragraph{Discussing the differences in {\arch}} Recent work has made exciting progress in designing efficient LMs that extend the Pareto-frontier of the quality-efficiency tradeoff space relative to Transformers and prior recurrent architectures. However, these are decoder-only LMs, while {\arch} uses the encoder-decoder framework. Prior popular encoder-decoder LMs are Transformer-based with quadratic scaling and do not convincingly improve in quality over decoder-only models \citep{wang2022language}, so the motivation to use them is unclear. {\arch} improves efficiency (\Cref{tab:gpu_inference_jrt}) and quality (\Cref{table:jrt-rnn-quality}).

Within the encoder-decoder framework, {\arch} uses a prefix LM structure. Unfortunately, prior work and our ablations suggest this training strategy does not perform well (\cite{wang2022language} and \Cref{table:ablations}), and this architecture has not seen adoption. Instead {\arch} deviates by (1) adding a masked language modeling loss to the prefix alongside next token prediction for the suffix. {\arch} (2) \textit{reads the prefix twice}. Prefix LM models modify the attention mask of standard attention to make the prefix non-causal and use shared projection weights for the non-causal encoder and causal decoder regions. Instead, {\arch} uses two sets of key and value representations for encoding and decoding respectively.

\clearpage

\section{Experimental details}
\label{app:expt-details}

This section provides additional details for the synthetic, {\prompt} and {\arch} experimental protocols. We use NVidia A100-80GB GPUs for all training runs.

\subsection{Additional details for set disjointness synthetic experiments}
This section provides experimental details for \Cref{fig:synthetic}.

\paragraph{Dataset}

The procedure for generating training and evaluation data for our synthetic experiments is shown in \Cref{alg:synthetic}. We train on the following mixture of sequence lengths, where the tuple denotes $(|A|, |B|)$ for sets $A$ and $B$ in the sequence:
$$(4, 16), (16, 4), (8, 32), (32, 8), (64, 16), (16, 64), (4, 128), (128, 4), (16, 256), (256, 16), (4, 256), (256, 4)$$

We  evaluate on the following mixture of sequence lengths (requiring length extrapolation from training), where the tuple denotes $(|A|, |B|)$ for sets $A$ and $B$ in the sequence: $$(1, 32), (32, 1),
    (4, 32), (32, 4),
    (4, 128), (128, 4),
    (16, 256), (256, 16),
    (4, 256), (256, 4),
    (16, 512),$$ $$(512, 16), (4, 512), (512, 4), 
    (8, 768), (768, 8),
    (16, 768),(768, 16),
    (4, 768), (768, 4)$$
We include $20000$ data points per tuple above during training and $1000$ during evaluation. We use $V=2048$ as the vocabulary size.  

\begin{algorithm}[h!]
	\caption{Set Disjointness Synthetic Procedure}
	\begin{algorithmic}[1]
		\Require Vocabulary $V$, Sequence lengths $N_{A}$ and $N_{B}$ for sets $A$ and $B$, Special token IDs $\mathrm{prefix\_token\_id}$, $\mathrm{mask\_tok\_id}$, $\mathrm{sep\_sets\_token\_id}$, $\mathrm{sep\_answer\_tok\_id}$ \newline 
        \textbf{Output: Synthetic sequence}
		\State Let the first half of $V$, $V_A$, be prospective tokens for set $A$ and the second half, $V_B$, be prospective tokens for set $B$. 
  
        \State Randomly select $N_A$ tokens from $V_A$ for set $A$. Randomly select $N_B$ tokens from $V_B$ for set $B$.

        \State Randomly select a token $t$ from $A$ as the intersecting token between sets. Replace a random token (at a random position) from $B$ with $t$. 

        \State Construct the final input sequence as the concatenation: $$[\mathrm{prefix\_token\_id}], A, [\mathrm{sep\_sets\_token\_id}], B, \mathrm{sep\_answer\_tok\_id}], [t]$$

        \State The label sequence contains a ``-100'' (i.e., a token to ignore computing the loss) at all positions except for the final position. We mask $[t]$ (the final position) from the input sequence. 

        \State Output the synthetic input and label sequences.
	\end{algorithmic}
    \label{alg:synthetic}
\end{algorithm}

\paragraph{Models} We evaluate causal and non-causal variants of the Based recurrent model. Each model contains $4$ layers alternating gated-convolutions (with a short filter of size $3$) and linear attention with $2$ query key and value heads. For the non-causal variant, we simply replace the causal cumulative sum in linear attention with a sum, and we use non-causal circular convolutions. For the linear attention feature map, we use a Taylor approximation to the softmax-exponential function as in \cite{arora2024simple} (also defined in \Cref{app:based_def}). Each layer has an MLP with GeLU activations. We do not use any explicit positional embeddings, instead finding the short-convolutions sufficient for positional information. 

To sweep the state size, we vary the model width or dimension $\in \{36, 48, 64, 96, 128\}$ and linear attention feature dimension $\in \{4, 8, 16, 24\}$. 

\paragraph{Training} We train using cross-entropy loss on the predicted vs. true intersection token $t$ in \Cref{alg:synthetic}. For each point in \Cref{fig:synthetic}, we sweep learning rates $\in \{0.0001, 0.0005, 0.0008\}$ (after identifying that this regime is most effective for the architectures) and report the maximum accuracy after $48$ epochs of training. We use AdamW as the optimizer with $0.1$ weight decay.

We build our synthetic experiments using the synthetics repository provided by prior work \cite{arora2023zoology}: \url{https://github.com/HazyResearch/zoology}.

\subsection{Additional details for {\prompt} experiments}

For \Cref{table:main-quality} ({\prompt}), we use the following publicly available models pretrained and released by the baseline works:
\begin{itemize}
    \item Based \citep{arora2024simple} models are at \url{https://huggingface.co/collections/hazyresearch/based-65d77fb76f9c813c8b94339c}
    \item Gated Linear Attention \citep{yang2023gated} models are at \url{https://huggingface.co/fla-hub}.
    \item Mamba \citep{gu2023mamba} and Mamba-2 \citep{dao2024transformers} models are at \url{https://huggingface.co/state-spaces}
\end{itemize}

We integrate all tasks into the popular LM-Eval harness to run inference.
 We truncate long-documents (e.g., in NQ, FDA, SWDE) to length $1\mathrm{k}$ tokens for the default prompting and length $2\mathrm{k}$ tokens for {\prompt} so that both methods receive the same information in-context. We note that these lengths are chosen because the listed pretrained models have $2048$ context lengths. We ensure that the answer span is present in truncated documents. We do not use any task-specific prompt customization in this section, to highlight the effectiveness of {\prompt} despite little effort.

\subsection{Additional details for pre-training experiments}

\paragraph{Additional details for {\arch}} 
To facilitate comparisons to prior work, we start with the Based architecture \citep{arora2024simple} and replace its linear attention layers with {\arch} linear attention layers. Note that the Based architecture hybridizes gated convolution layers (kernel size $3$), sliding window attention layers (window size $128$), and linear attention layers (using a Taylor approximation to the exponential function as the feature map, with feature dimension $16$). We maintain the exact same order and number of each layer type as the Based work. We reduce the number of gated convolution layers by $1$ at $360\mathrm{M}$ parameters to account for the increase in parameters due to the encoder projections.

Next we include a description of the linear attention feature map used in our trained  models. Based uses a $2^{\mathrm{nd}}$-order Taylor approximation to the softmax-exponential function as the feature map $\phi: \mathbb{R}^d \rightarrow \mathbb{R}^{\Tilde{d}}$  \citep{zhang2024hedgehog}.
To approximate $\exp(\bm{q}_i^\top \bm{k}_j / \sqrt{d})$:
\begin{equation}
    \label{eq:taylor_feature_map}
    \exp(x) \approx 1 + x + \frac{x}{2!}
\end{equation}

\begin{equation}
    \label{eq:taylor_feature_map}
    \phi(\bm{q}_i)^\top \phi(\bm{k}_j) = 1 + \bm{q}_i^\top \bm{k}_j + \frac{(\bm{q}_i^\top \bm{k}_j)^2}{2}
\end{equation}

The second order term has large dimension $273$ if $\Tilde{d}=16$ as in \cite{arora2024simple}. As a result, a careful IO-aware implementation is key to efficiency.

\paragraph{Training protocol} 
For \Cref{table:jrt-rnn-quality}, we use the code provided by the baseline works, which has been adapted from the FlashAttention code base: \url{https://github.com/Dao-AILab/flash-attention/tree/main} for our pretraining runs~\cite{dao2023flashattention2}. The Pile data is tokenized using the GPT2BPETokenizer and all models see the data in the same order. 
Here we provide details on the hyperaparamters and configurations used for training each architecture. 

\begin{itemize}[itemsep=0.1pt,topsep=0pt,leftmargin=*]
    \item \textbf{{\arch}} We provide hyperparameters and settings used for {\arch} in \Cref{tab:jrt-training-details}. We integrate {\arch} into the Based implementation released by the prior work.
    \item \textbf{Based \citep{arora2024simple}} We train using the specifications in \Cref{tab:based-training-details} and the architecture implementation provided here: \url{https://github.com/HazyResearch/based}.
    
    \item \textbf{Transformer++ \citep{touvron2023llama}} We refer to the modern Llama architecture with Rotary encodings, RMSNorm and SwiGLU as Transformer++, following prior work \cite{gu2023mamba, yang2023gated}. We train using the the specifications in Table \ref{tab:attn-training-details} using the Flash Attention training code provided here: \url{https://github.com/Dao-AILab/flash-attention/tree/main}~\cite{dao2023flashattention2}. 

    \item \textbf{Mamba \citep{gu2023mamba} } We train using the specifications in \Cref{tab:mamba-training-details}, where the parameters are sourced from the Appendix of \cite{gu2023mamba}. The architecture implementation is from the reference at \url{https://github.com/state-spaces/mamba}.
     
\end{itemize}

\noindent We give all models the Transformer++ change (e.g., SwiGLU, Rotary) where relevant. 

\paragraph{Inference protocol} For {\arch}, we left-pad prefill when it is shorter than the encoder region and mask in the linear attention layer following Listing 3 \Cref{app:architecture}. We apply no changes if the prefill exceeds the encoder region. For all results reported in this work, we use the parallel view of {\arch} to process the prefill and compute initial states following \Cref{sec:sec4-jrt-arch}, then use the recurrent view to decode.

\subsection{Additional details for Pile perplexity slicing analysis}
In \Cref{sec:lm_ppl}, we analyze the perplexity of different models trained on the Pile, on the Pile test data. Here we provide additional details for the protocol.

We compute the training counts of bigrams across $10$M Pile training documents, each of length $2048$. We evaluate the models on $3,200$ sequences of length $2048$ ($6.6$M total tokens), and measure perplexity on the last $1024$ tokens per sequence (the causal, decoder region for {\arch}) ($3.3$M total tokens). We then evaluate perplexity on two slices of this test set:
\begin{enumerate}
    \item \textit{Associative recall (AR) hits.} Tokens in the final position of a bigram which previously occurred in context, and this bigram is infrequent during training. For instance, in the sequence ``While lunching at the Maison Bergey bistro near his apartment: he had been musing about the ... (723 tokens) ... the young waitress’s sigh at the Maison Bergey.'' the second ``Bergey'' would be included as an ``AR hit'' if ``Maison Bergey'' is a rare bigram during training. Intuitively, the model would need to rely on the context to predict the next token if the bigram were rare during training (i.e., was not memorized), testing the model's recall ability.
    \item \textit{Other tokens.}  All other tokens. Intuitively, these tokens test the knowledge memorized in the model parameters. 
\end{enumerate}

In \Cref{fig:ppl_slices}, for the \textbf{recall frequencies} plot, we restrict to ``AR hits'' where the bigram and the re-occurrence of the bigram in context are separated by at least $1024$
in distance within the context. In the \textbf{recall gaps} plot, we restrict to bigrams that are seen fewer than $1000$ times during training and vary the distance between bigram occurrences in-context on the $x$ axis.

\subsection{Evaluation datasets}

Here we provide additional details on the recall-intensive benchmark suite used in this work. The tasks include:
\begin{itemize}[itemsep=0.1pt,topsep=0pt,leftmargin=*]
    \item \textbf{FDA} FDA is an information extraction task where documents are FDA reports for pre-market medical devices and the model needs to extract attributes such as the device code, classification, and indications for use \citep{arora2023evaporate, arora2024simple}. These FDA reports are frequently analyzed by domain experts \cite{wu2021medai}. We use the dataset released at: \url{https://huggingface.co/datasets/hazyresearch/based-fda}, which is part of the LM-Eval Harness repository \cite{eval-harness}.   
    \item \textbf{SWDE} SWDE is an information extraction task where documents are HTML webpages spanning 14 different websites in the Movie and University topic domains (e.g., ``IMDB.com'', ``RottenTomatoes'', ``USNews'') and the model needs to extract attributes such as the Movie director / assistant director and University tuition \citep{lockard2020zeroshotceres, deng2022domlm, arora2023evaporate, arora2024simple}. We use the dataset released at:
    \url{https://huggingface.co/datasets/hazyresearch/based-swde}, which is part of the LM-Eval Harness repository \cite{eval-harness}.
    \item \textbf{SQUADv2} SQUADv2 is a document QA benchhmark where documents come from Wikipedia and answer to questions are a span of tokens in the document \citep{rajpurkar2018squad, arora2024simple}. We use the version of the dataset released at: \url{https://huggingface.co/datasets/hazyresearch/based-squad}, which is part of the LM-Eval Harness repository \cite{eval-harness}.
    \item \textbf{TriviaQA} TriviaQA is a popular document QA benchmark where documents come from both Wikipedia and the general web and the question structure varies \cite{joshi2017triviaqa}. We use the dataset released at: \url{https://huggingface.co/datasets/mandarjoshi/trivia_qa}
    \item \textbf{Natural Questions (NQ)} Natural Questions is a popular document QA benchmark where documents come from Wikipedia and the questions are real queries issued to the Google search engine \citep{kwiatkowski-etal-2019-natural}. The answers are spans of text from the documents. We use the dataset released at: \url{https://huggingface.co/datasets/natural_questions}.
    \item \textbf{Drop} DROP is a challenging document QA benchmark that requires discrete reasoning over paragraphs from Wikipedia articles \citep{Dua2019DROP}. The questions often require arithmetic operations, counting, or sorting of information found in the documents. We use the dataset released at: \url{https://huggingface.co/datasets/ucinlp/drop}.
\end{itemize}

\paragraph{Cloze Completion Formatting}
As the models in this work are not instruction fine-tuned and have been trained on next token prediction, they are more effective at producing relevant answers when the prompt format aligns with the pre-training task (next token prediction) as shown in prior work \citep{arora2022ama}. Therefore, we reformat the questions in these benchmarks to a cloze-completion format using Meta's Llama-3-70B model \citep{llama3modelcard}.

Given the question and the answer, the prompt we use is, where we provide the original question and answer from the task example:
\begin{mdframed}[frametitle={Converting to Cloze Format}]
\begin{codeframe}
\begin{lstlisting}
Can you rewrite this question and answer as a statement. Ensure that the answer is the last part of the statement. 


Question: {question}

Answer: {answers}

Rewrite:
\end{lstlisting}
\end{codeframe}
\end{mdframed}

As an example:
\begin{mdframed}[frametitle={Example}]
\begin{codeframe}
Input
\begin{lstlisting}
Can you rewrite this question and answer as a statement. Ensure that the answer is the last part of the statement. 


Question: Which team scored the final TD of the game?

Answer: Dallas

Rewrite:
\end{lstlisting}
\end{codeframe}
\begin{codeframe}
Answer
\begin{lstlisting}
The team that scored the final TD of the game is Dallas. 
\end{lstlisting}
\end{codeframe}
\end{mdframed}

We filter the dataset by picking the rewrite with the answer appearing in the end and we remove the answer (e.g., ``Dallas'') when producing the final dataset. We report the resulting dataset sizes in \Cref{table: datasets-overview} and release the datasets for reproducal.

\begin{table}[h]
\centering
\begin{tabular}{l|cc}
\hline
Dataset       & Size  &Token\\ \hline
FDA           & 1102  &1999.9\\
SWDE          & 1111  &1036.1\\
SQUAD         & 2984  &151.9\\
TriviaQA      & 1698  &310.1\\
NQ            & 3157  &8857.7\\
Drop          & 2084  &236.6\\
\hline
\end{tabular}
\caption{Evaluation Dataset Overview}
\label{table: datasets-overview}
\end{table}

\paragraph{Metrics} We evaluate whether the model generated answer contains the exact answer span specified in the task. We run inference using the newline character and max generation length of $48$ as stop-conditions.
 
\clearpage

\section{Additional experiments}
\label{app:additional_experiments}

\subsection{Overall language modeling}

While we focus on a suite of recall-intensive benchmarks in \Cref{sec5-results}, here we show that {\arch} maintains the quality of baseline models on other common in-context learning benchmarks. We use SuperGLUE \cite{wang2019superglue} suite.
We run these evaluations using the LM-Eval Harness repository's default settings \cite{eval-harness}.

In \Cref{tab:super-glue} and \Cref{tab:super-glue-1b}, we observe that all models achieve comparable quality. These results align with prior work suggesting that while alternate architectures provide similar overall language modeling perplexity, their quality on recall-intensive tasks is much more variable \cite{arora2023zoology, gu2023mamba, akyurek2024incontext,  arora2024simple}. 

\paragraph{Padding} We note that the SuperGLUE inputs are quite short in sequence length, meaning that {\arch} sees pad tokens in the majority of the encoder region of the input until we reach length $M=1024$. We use the space-token as the pad token in our evaluations, as discussed in \Cref{app:expt-details}. Since we do not train with pad tokens in this work, this such sequences are relatively out of distribution, but with masking the padding portion of the sequence, we can recover quality. In \Cref{tab:super-glue_no_padding}, we evaluate {\arch} where we do not mask on the linear attention layers and observe quality starkly degrades on certain tasks (e.g., Copa and WSC).

\begin{table*}[]
\centering
\scriptsize 
\begin{tabular}{lcccccccccccc}
\toprule
\textbf{Model}    & \multicolumn{1}{l}{\textbf{Shots}} & \multicolumn{1}{l}{\textbf{BoolQ}}  & \multicolumn{2}{c}{\textbf{CB}}          & \multicolumn{1}{l}{\textbf{COPA}}   & \multicolumn{1}{l}{\textbf{MultiRC}} & \multicolumn{2}{l}{\textbf{ReCoRD}}  & \multicolumn{1}{l}{\textbf{RTE}}    & \multicolumn{1}{l}{\textbf{WiC}}    & \multicolumn{1}{l}{\textbf{WSC}}    & \multicolumn{1}{l}{\textbf{Avg}} \\
& \multicolumn{1}{l}{}               & \multicolumn{1}{l}{Acc. $\uparrow$} & \multicolumn{1}{l}{Acc. $\uparrow$} & \multicolumn{1}{l}{F1 $\uparrow$} & \multicolumn{1}{l}{Acc. $\uparrow$} & \multicolumn{1}{l}{Acc. $\uparrow$}  & \multicolumn{1}{l}{F1 $\uparrow$} & \multicolumn{1}{l}{EM $\uparrow$} & \multicolumn{1}{l}{Acc. $\uparrow$} & \multicolumn{1}{l}{Acc. $\uparrow$} & \multicolumn{1}{l}{Acc. $\uparrow$} & \multicolumn{1}{l}{}                 \\ \hline\hline
\multirow{3}{*}{\begin{tabular}[c]{@{}l@{}}{\arch}\\ 
(356m/30b)\end{tabular}}     
   & 0   &  49.2 &  33.9 &  17.4 & 65.0 &  57.2 &  16.5 & 15.8 & 53.1  & 50.0  & 37.5  &  39.6 \\ 
   & 1   &  46.5 & 37.5  & 26.9  &  65.0 & 51.9 & 18.9 &  18.1  &  46.2 & 46.6  &  55.8 &   41.3\\ 
   & 5   &  49.1 &  44.6 &  30.5 & 71.0 & 56.3  &26.7  &   25.8 & 48.0  &  50.5 & 50.0  &   45.3\\ 
   \hline
\multirow{3}{*}{\begin{tabular}[c]{@{}l@{}}Based \\ 
(360m/30b)\end{tabular}}     
   & 0   &  57.6 &  32.1 &  21.7 & 65.0 & 57.2  & 17.4 &  17.0  & 54.5  & 50.0 & 36.5  &  40.9 \\ 
  
   & 1  & 54.9  &  35.7 &  25.7 & 70.0 &  55.3 &  21.8 & 21.1   & 48.0  & 48.1 & 55.8  &   43.6\\ 
   & 5  & 53.5  & 53.6  & 36.7  & 76.0 &  56.4 & 25.3  &  24.4  &  50.5 & 53.6 &   51.0 &   48.1\\ 
   \hline
\multirow{3}{*}{\begin{tabular}[c]{@{}l@{}}Transformer \\ 
(360m/30b)\end{tabular}}    
   & 0   & 59.3  &  41.1 &  24.1&   68.0 & 57.2  & 14.6 &  14.2  & 54.9  & 50.0  &  36.5 &   42.0\\ 
   & 1   &  54.9 & 37.5  & 26.9  & 70.0 &  54.2 &  21.1 &  20.4  & 43.7  & 46.4 &  53.8  &   42.9\\ 
   & 5   & 49.1  &  46.4 & 30.9  & 68.0 & 55.2  & 23.7 & 23.0   & 52.7  &  51.1 & 52.9  &   45.3\\ 
   \hline
\multirow{3}{*}{\begin{tabular}[c]{@{}l@{}}Mamba\\ (358m/30b)\end{tabular}}         
    & 0   & 56.4  & 35.7  &  25.8 & 68.0 &  57.2 & 27.2 &   26.6 &  53.4 & 50.0 &  36.5 &   43.7\\ 
    & 1   & 51.1  &  41.1 &  28.5 &  70.0 & 52.3  & 25.8 &  25.1  &  50.2 & 46.4 & 55.8  &  44.6 \\ 
    & 5   & 50.0  &  51.8 & 34.8  & 70.0 & 54.5 & 23.2 &  22.5  & 46.9  & 50.3 & 51.0  &  45.5 \\ 
\hline
\end{tabular}
\caption{\textbf{SuperGLUE benchmark evaluations.} We evaluate the models from \Cref{table:jrt-rnn-quality} on the SuperGLUE benchmark \citep{wang2019superglue} using the EleutherAI LM Eval harness \citep{eval-harness}. 
}
\label{tab:super-glue}
\end{table*}

\begin{table*}[]
\centering
\scriptsize 
\begin{tabular}{lccccccccccc}
\toprule
\textbf{Model}    & \multicolumn{1}{l}{\textbf{Shots}} & \multicolumn{1}{l}{\textbf{BoolQ}}  & \multicolumn{2}{c}{\textbf{CB}}          & \multicolumn{1}{l}{\textbf{COPA}}   & \multicolumn{1}{l}{\textbf{MultiRC}} & 
{\textbf{RTE}}    & \multicolumn{1}{l}{\textbf{WiC}}    & \multicolumn{1}{l}{\textbf{WSC}}    & \multicolumn{1}{l}{\textbf{Avg}} \\
& \multicolumn{1}{l}{}               & \multicolumn{1}{l}{Acc. $\uparrow$} & \multicolumn{1}{l}{Acc. $\uparrow$} & \multicolumn{1}{l}{F1 $\uparrow$} & \multicolumn{1}{l}{Acc. $\uparrow$} & \multicolumn{1}{l}{Acc. $\uparrow$}  & 
\multicolumn{1}{l}{Acc. $\uparrow$} & \multicolumn{1}{l}{Acc. $\uparrow$} & \multicolumn{1}{l}{Acc. $\uparrow$} & \multicolumn{1}{l}{}                 \\ \hline\hline
\multirow{2}{*}{\begin{tabular}[c]{@{}l@{}}{\arch}\\ (1.3B/50B)\end{tabular}}      & 0   &  57.4 & 33.9  & 22.4  & 74.0 &  57.2 & 52.7 & 50.0 & 36.5  & 50.9  \\ 
& 5   & 52.1  &  50.0 &  34.5 & 75.0 & 53.9  &  49.8&  50.0  & 55.8  & 54.1  \\ 
   \hline
\multirow{2}{*}{\begin{tabular}[c]{@{}l@{}}Based \\ (1.3B/50B)\end{tabular}}     & 0   &  55.1 & 41.1  & 19.4  & 71.0 & 56.8  & 53.1 &  50.0  & 53.8  & 52.9  \\ 
& 5   & 52.5  & 50.0  &  33.7 &  75.0 & 51.4  &  49.1&  53.1  & 53.8  & 53.8 \\ 
   \hline
\multirow{2}{*}{\begin{tabular}[c]{@{}l@{}}Transformer \\ (1.3B/50B)\end{tabular}}   
& 0   & 57.6  &  41.1 &  28.8 & 72.0 & 56.0 & 54.2  & 50.0 &  53.8  &  54.1  \\ 
& 5   &  54.8 & 41.1  & 26.2  & 73.0 & 51.7  & 57.4 &  50.3  & 47.1  & 52.9 \\ 
   \hline
\multirow{2}{*}{\begin{tabular}[c]{@{}l@{}}Mamba\\ (1.3B/50B)\end{tabular}}  & 0   & 54.8  &  25.0 & 25.2  & 73.0 &  56.4 &  51.3&  50.0  & 40.4  & 50.1  \\ 
& 5   & 55.6  & 53.6  &  45.5 & 75.0 & 53.7  & 53.8&  51.7  & 56.7  & 56.6  \\ 
\hline
\end{tabular}
\caption{Same as \Cref{tab:super-glue} at the 1.3b parameter scale, trained on 50b tokens.}
\label{tab:super-glue-1b}
\end{table*}

\begin{table*}[]
\centering
\scriptsize 
\begin{tabular}{lcccccccccccc}
\toprule
\textbf{Model}    & \multicolumn{1}{l}{\textbf{Shots}} & \multicolumn{1}{l}{\textbf{BoolQ}}  & \multicolumn{2}{c}{\textbf{CB}}          & \multicolumn{1}{l}{\textbf{COPA}}   & \multicolumn{1}{l}{\textbf{MultiRC}} & \multicolumn{2}{l}
{\textbf{ReCoRD}}  & \multicolumn{1}{l}
{\textbf{RTE}}    & \multicolumn{1}{l}{\textbf{WiC}}    & \multicolumn{1}{l}{\textbf{WSC}}    & \multicolumn{1}{l}{\textbf{Avg}} \\
& \multicolumn{1}{l}{}               & \multicolumn{1}{l}{Acc. $\uparrow$} & \multicolumn{1}{l}{Acc. $\uparrow$} & \multicolumn{1}{l}{F1 $\uparrow$} & \multicolumn{1}{l}{Acc. $\uparrow$} & \multicolumn{1}{l}{Acc. $\uparrow$}  & \multicolumn{1}{l}{F1 $\uparrow$} & \multicolumn{1}{l}{EM $\uparrow$} & \multicolumn{1}{l}{Acc. $\uparrow$} & \multicolumn{1}{l}{Acc. $\uparrow$} & \multicolumn{1}{l}{Acc. $\uparrow$} & \multicolumn{1}{l}{}                 \\ \hline\hline
{\arch}       & 5  & 53.5  & 53.6  & 36.7  & 76.0 &  56.4 & 25.3  &  24.4  &  50.5 & 53.6 &   51.0 & 44.2  \\ 
+No Pad Mask &  5    &  49.1 &  55.4 & 38.2  & 56.0 & 56.3  &  26.7 &   25.8 &  51.6 &  49.7 & 40.4 & 41.3 \\ 
\hline
\end{tabular}
\caption{\textbf{Few-shot downstream evaluation on SuperGLUE of pre-trained language models.} Same protocol as \Cref{tab:super-glue}, however \textbf{we do not mask the left-padding} in the linear attention layers.}
\label{tab:super-glue_no_padding}
\end{table*}

\subsection{{\arch} ablations}

\paragraph{Training without MLM Loss} {\arch} inspired by Prefix LM due to its simplicity. Prior work and our own finds that Prefix LM underperforms in quality \cite{wang2022language}. 
Here we compare {\arch} with and without the masked language modeling (MLM) loss. Excluding the MLM loss matches the protocol in prior Prefix-LM training. 
In Table \ref{table:ablations}, we find that the model is decent at longer sequences, but drops quality on short-context prompts.

\begin{table*}[h!]
\centering
\scriptsize
\begin{tabular}{l|cc|cc|cc}
\toprule
\multicolumn{1}{c}{} &
  \multicolumn{2}{c}{N=512} &
  \multicolumn{2}{c}{N=1024} &
  \multicolumn{2}{c}{N=2048} \\
  \multicolumn{1}{c}{\multirow{2}{*}} &
  SWDE &
  FDA &
  SWDE &
  FDA &
  SWDE &
  FDA 
   \\
\multicolumn{1}{c}{} &
  Acc. $\uparrow$ &
  Acc. $\uparrow$ &
  Acc. $\uparrow$ &
  Acc. $\uparrow$ &
  Acc. $\uparrow$ &
  Acc. $\uparrow$ \\ 
  \hline

Based   &  25.4 & 51.0 & 19.1 & 30.1 & 15.7 & 13.4  \\ 
{\arch}, no MLM loss  
& 23.9 & 38.7 & 21.6 & 39.2  & 18.5 & 18.3  \\ %
\hline
\end{tabular}
\caption{\textbf{Ablations of design choices in {\arch}} All models are 360M param variants of {\arch}, trained to 10 billion tokens on the Pile.}

\label{table:ablations}
\vspace{-3mm}
\end{table*}

\paragraph{Training with Based ablations} Based is a hybrid architecture with some linear attention, sliding window attention, and gated short-convolution layers. In \Cref{table:ablations_based}, we train with the {\arch} vs. decoder-only approaches while ablating the mixture of layer types. The results suggest prefix linear attention remains useful for these recall-intensive tasks. 

\begin{table*}[h!]
\centering
\scriptsize
\begin{tabular}{l|cc|cc|cc}
\toprule
\multicolumn{1}{c}{} &
  \multicolumn{2}{c}{N=512} &
  \multicolumn{2}{c}{N=1024} &
  \multicolumn{2}{c}{N=2048} \\
  \multicolumn{1}{c}{\multirow{2}{*}} &
  SWDE &
  FDA &
  SWDE &
  FDA &
  SWDE &
  FDA 
   \\
\multicolumn{1}{c}{} &
  Acc. $\uparrow$ &
  Acc. $\uparrow$ &
  Acc. $\uparrow$ &
  Acc. $\uparrow$ &
  Acc. $\uparrow$ &
  Acc. $\uparrow$ \\ 
  \hline
\hline 
Linear attention (Taylor map)  &  29.6 & 25.5 & 21.5 & 16.0 & 23.0 & 4.6  \\ 
Prefix linear attention (Taylor map) & 36.8 & 57.7 & 27.1 & 48.7 & 23.9 & 8.2  \\ 
\hline
Linear + Sliding attention  & 25.4 & 10.3 & 21.2 & 8.1 & 20.8 & 3.0 \\ 
Prefix Linear + Sliding attention & 35.5 & 53.3 & 34.8 & 46.5 & 32.1 & 30.0 \\ 
\bottomrule
\end{tabular}
\caption{\textbf{Ablations of the types of sequence mixers in the LMs.} The default Based and {\arch} architectures in the main paper use a hybrid of sliding window attention (SWA), gated convolutions, and linear attention (LA). Here we also evaluate pure linear attention variations (top two rows, no SWA, no Convs.) and linear attention plus SWA (bottom two rows, no Convs.). All models are 360M param variants of {\arch}, trained to 30 billion tokens on the Pile using the same learning rates and schedules. In \cite{arora2024simple}, it is also observed that the short convolution layers are helpful for such tasks.}

\label{table:ablations_based}
\vspace{-3mm}
\end{table*}

\clearpage

\section{{\arch} implementation details}
\label{app:architecture}

In this section, we first provide a PyTorch reference for {\arch} and then discuss the IO-aware CUDA implementation.

\subsection{Reference code for {\arch}}
Below we include a PyTorch reference for the proposed layer, showing the parallel and recurrent views. 

\begin{lstlisting}[language=Python,style=mystyle,caption={Minimal PyTorch implementation of JRT RNN.}]
from einops import rearrange
import torch
from torch import nn


def encoder(k, v):
    k, v = k.unsqueeze(-2), v.unsqueeze(-1)
    kv_state = (k * v).sum(dim=2, keepdim=True)
    k_state = k.sum(dim=2, keepdim=True)
    return kv_state, k_state

def decoder(q, k, v):
    q, k, v = q.unsqueeze(-2), k.unsqueeze(-2), v.unsqueeze(-1)
    kv_state_dec = (k * v).cumsum(dim=2)
    k_state_dec = k.cumsum(dim=2)
    return q, kv_state_dec, k_state_dec

def compute_linear_output(q_dec, k_dec, v_dec, k_enc, v_enc):
    kv_state_enc, k_state_enc = encoder(k_enc, v_enc)
    q, kv_state_dec, k_state_dec = decoder(q_dec, k_dec, v_dec)

    kv_state_dec = kv_state_enc + kv_state_dec
    k_state_dec = k_state_enc + k_state_dec

    z = 1 / ( q * k_state_dec).sum(dim=-1) 
    y = ( (q * kv_state_dec).sum(dim=-1))
    output = y * z 
    output = rearrange(output, 'b h l d -> b l (h d)')
    return output

def compute_parallel_output(q_dec, k_dec, v_dec, k_enc, v_enc):

    # Scaling
    k_state = k_enc.sum(dim=2, keepdim=True) +  k_dec.cumsum(2)
    z = 1 / ((q_dec * k_state).sum(dim=-1))

    # standard attention
    A_qk = torch.einsum("bhnd,bhmd->bhnm", q_dec, k_dec) 
    A_qk = torch.tril(A_qk)
    y = torch.einsum("bhnm,bhme->bhne", A_qk.to(q_dec.dtype), v_dec.to(q_dec.dtype))
    y = y * z[..., None]
    output_1 = rearrange(y, 'b h l d -> b l (h d)')

    # cross attention
    A_qk_2 = torch.einsum("bhnd,bhmd->bhnm", q_dec, k_enc)
    y = torch.einsum("bhnm,bhme->bhne", A_qk_2.to(q_dec.dtype), v_enc.to(q_dec.dtype))
    y = y * z[..., None]
    output_2 = rearrange(y, 'b h l d -> b l (h d)')
    output_ref = output_1 + output_2
    return output_ref

# Inputs
enc_len, dec_len = seqlen // 2, seqlen
q_dec = torch.randn((batch, heads, dec_len, head_dim))
k_dec = torch.randn((batch, heads, dec_len, head_dim))
v_dec = torch.randn((batch, heads, dec_len, head_dim))
k_enc = torch.randn((batch, heads, enc_len, head_dim))
v_enc = torch.randn((batch, heads, enc_len, head_dim))

q_dec = feature_map(q_enc) # head_dim to expanded_dim
k_enc = feature_map(k_enc)
k_dec = feature_map(k_dec)

out = compute_linear_output(q_dec, k_dec, v_dec, k_enc, v_enc)
out_ref = compute_parallel_output(q_dec, k_dec, v_dec, k_enc, v_enc)
\end{lstlisting}

\label{masking}
\begin{lstlisting}[language=Python,style=mystyle,caption={PyTorch implementation linear attention masking}]
if mask is not None and q.shape[2] > 1: # Check that we're in prefill
    if len(mask.shape) == 4:
        lin_attn_mask = (mask == 0)[:, :1, -1, :][..., None]  # b,1,k_len,1
    else:
        lin_attn_mask = mask[:, None, :, None]  # b,1,k_len,1
    lin_attn_mask = lin_attn_mask.to(torch.bool)
    k = k.masked_fill(~lin_attn_mask, 0)
    k_enc = k_enc.masked_fill(~lin_attn_mask, 0)
\end{lstlisting}


\subsection{IO-aware implementation}

We build our implementation from the custom kernel for the Based architecture released in prior work \cite{arora2024simple} (Algorithm 1). \footnote{\url{https://github.com/HazyResearch/ThunderKittens}} Letting $\mathrm{fn_{based}}$ be the prior kernel, we use \Cref{alg:jrt_rnn} as the IO-aware implementation of {\arch}. We modify $\mathrm{fn_{based}}$ to (1) avoid multiplications with queries in the first call and to simply compute the KV-state, and (2) we use the final row (row $M$) of the KV-state, representing the sum of $(\mathrm{k_e} * \mathrm{v_e})$ along the sequence dimension. 

\begin{algorithm*}
    
  \caption{\label{alg:jrt_rnn} {\arch} CUDA Kernel Pseudocode}
  \small
  \begin{algorithmic}
    \Require{Input decoder representations $q_d, k_d, v_d \in \mathbb{R}^{N \times d}$ and encoder representations $k_e, v_e \in \mathbb{R}^{M \times d}$.}

    \Ensure{Output $y \in \mathbb{R}^{N \times d}$}

    \Statex Initialize SRAM buffers and register file fragments following Algorithm 1 \cite{arora2024simple}. Including registers $A0, A1, A2$ to store the KV-state (for the $0^{th}, 1^{st}, 2^{nd}$ order terms of the Based linear attention kernel Taylor approximation respectively) and SRAM buffer $y$ for storing the final output \newline
    
    \Statex Run $\mathrm{fn_{based}}(\mathrm{k_e, v_e})$ to compute KV-state for the encoder, where the result is held in registers $A0, A1, A2$. We modify the previously proposed Based implementation by using the non-causal sum instead of cumsum for the KV states. We don't multiply with queries in this step, as is done in the original algorithm. \newline 
    
    \Statex Run $\mathrm{fn_{based}}(\mathrm{q_d, k_d, v_d})$, from the register state initialized by the encoder computation. This computes the output $y$, held in SRAM. \newline 

    \Statex Store $y$ from SRAM to HBM. 
    
  \end{algorithmic}
\end{algorithm*}

\clearpage

\section{Analysis}
\label{app:error_analysis}

In this section, we provide qualitative analysis for {\prompt} using three representative recurrent LMs, Mamba pretrained for $300$b tokens on the Pile at the $370$M, $1.4$B, and $2.8$B parameter scales. 

We first bucket the common error modes, finding three primary categories: (1) No Answer (N/A), (2) Repetition, and (3) Irrelevant outputs. The statistics for each category are shown in \Cref{table:error-analysis}. Compared to the standard default zero-shot prompting approach, {\prompt} tends to increase the No Answer error and repetition errors, while reducing errors related to irrelevant outputs.
\begin{table*}[!ht]
\centering
\begin{tabular}{l|ccc|ccc|ccc}
\hline
Model & \multicolumn{3}{c|}{Mamba-370m} & \multicolumn{3}{c|}{Mamba-1.4B} & \multicolumn{3}{c}{Mamba-2.8B}\\
Error Type & N/A & Rep & Irrel & N/A & Rep & Irrel & N/A & Rep & Irrel \\ \hline
\hline
FDA-default & 0.2 & 35.4 & 22.7 &0.1 & 31.1&23.0 & 0.2 &27.5 &18.3  \\
FDA-{\prompt} & 0.0 &29.4 & \textbf{12.3} & 0.1 &29.2 &\textbf{9.8} &0.0 &23.3 & \textbf{9.8}  \\
\hline
SWDE-default & 39.1 &20.2 & 13.1 &37.3 &17.3 &7.8 &32.3 &18.9 & 9.7  \\
SWDE-{\prompt} & 23.6 & 17.0 &17.2 &28.0 & 15.0 & 11.1 &26.9 & 14.7 &9.6  \\
\hline
SQUAD-default & 0.0 & 6.6 &58.6 & 0.0 &5.9 &54.2 &0.0 &5.5 &51.3  \\
SQUAD-{\prompt} & 0.0 & 12.2 &\textbf{37.0} & 0.1 & 10.7 &\textbf{30.0} & 1.6 &32.9 &\textbf{13.8} \\
\hline
\end{tabular}
\caption{\textbf{Error Mode Statistics} We calculate the percentage ratio of different error types to the total number of test data points. N/A: No Answer; Rep: Repetition; Irrel: Irrelevant.}
\label{table:error-analysis}
\vspace{-3mm}
\end{table*}

\paragraph{No Answer} 
One error observed in the models is the output of an empty string, especially in tasks with complex text. We believe this is due to formatting sensitivity and could reduce  with model scale.

\begin{mdframed}[frametitle={No Answer Example}]
Input
\begin{codeframe}
\begin{lstlisting}
Information about the applicant in the text: SUBSTANTIAL EQUIVALENCE DETERMINATION DECISION SUMMARY A. 510(k) Number: K172333 B. Purpose for Submission: To expand the use of previously cleared assay reagents for Factor V Leiden; ...... D. Type of Test: Quantitative clot-based applications E. Applicant: Siemens Healthcare Diagnostics Product GmbH F. Proprietary and Established Names: ...... G. Regulatory Information: ...... Protein C with Protein C Reagent Antithrombin (AT) with INNOVANCE Antithrombin Protein C with Berichrom Protein C \n
Information about the applicant in the text: SUBSTANTIAL EQUIVALENCE DETERMINATION DECISION SUMMARY A. 510(k) Number: K172333 B. Purpose for Submission: To expand the use of previously cleared assay reagents for Factor V Leiden; ...... D. Type of Test: Quantitative clot-based applications E. Applicant: Siemens Healthcare Diagnostics Product GmbH F. Proprietary and Established Names: ...... G. Regulatory Information: ...... Protein C with Protein C Reagent Antithrombin (AT) with INNOVANCE Antithrombin Protein C with Berichrom Protein C applicant:
\end{lstlisting}
\end{codeframe}
Prediction
\begin{codeframe}
\begin{lstlisting}
""
\end{lstlisting}
\end{codeframe}
Ground Truth
\begin{codeframe}
\begin{lstlisting}
Siemens Healthcare Diagnostics Product GmbH
\end{lstlisting}
\end{codeframe}
\end{mdframed}

\clearpage

\paragraph{Repetition}
If the model reads repeated phrases (e.g., documents and questions), it may merely repeat the document and question again rather than providing an answer, when using {\prompt}. These models are not instruction fine-tuned and identifying the relevant task may be difficult.

\begin{mdframed}[frametitle={Repetition Error Example}]
Input
\begin{codeframe}
\begin{lstlisting}
Information about the applicant in the text: SUBSTANTIAL EQUIVALENCE DETERMINATION DECISION SUMMARY A. 510(k) Number: K182513 B. Purpose for Submission: New device 510(k) clearance for the FluChip-8G Influenza A+B Assay C. Measurand: Influenza A and influenza B viral nucleic acids. ...... E. Applicant: InDevR, Inc. F. ...

Information about the applicant in the text:) SUBSTANTIAL EQUIVALENCE DETERMINATION DECISION SUMMARY A. 510(k) Number: K182513 B. Purpose for Submission: New device 510(k) clearance for the FluChip-8G Influenza A+B Assay C. Measurand: Influenza A and influenza B viral nucleic acids. ...... E. Applicant: InDevR, Inc. F. ...
applicant:
\end{lstlisting}
\end{codeframe}
Prediction
\begin{codeframe}
\begin{lstlisting}
Information about the applicant in the text:) SUBSTANTIAL EQUIVALENCE DETERMINATION SUMMARY A. 510(k) Number: K182513 B. Purpose for Submission: New device 510(k) clearance
\end{lstlisting}
\end{codeframe}
Ground Truth
\begin{codeframe}
\begin{lstlisting}
InDevR, Inc.
\end{lstlisting}
\end{codeframe}
\end{mdframed}

\paragraph{Irrelevant Output}
Sometimes model outputs are undesirable and unrelated to the input text. For instance, the model may provide new \textit{continuations} of the text as opposed to referring back to the context and outputting previously seen information. {\prompt} appears to help reduce these types of errors.

\begin{mdframed}[frametitle={Irrelevant Output Example}]
Input
\begin{codeframe}
\begin{lstlisting}
 "Title: Martin_Luther\nBackground: At the heart of scholars' debate about Luther's influence is whether it is anachronistic to view his work as a precursor of the racial antisemitism of the Nazis...
 Title: Martin_Luther\nBackground: At the heart of scholars' debate about Luther's influence is whether it is anachronistic to view his work as ...... His position was entirely religious and in no respect racial.\"Martin Brecht referred to Luther's stand on the Jews as 
\end{lstlisting}
\end{codeframe}
Prediction
\begin{codeframe}
\begin{lstlisting}
a very important and important part of the history of the German people.
\end{lstlisting}
\end{codeframe}
Ground Truth
\begin{codeframe}
\begin{lstlisting}
misguided agitation
\end{lstlisting}
\end{codeframe}
\end{mdframed}

\paragraph{Few shot prompting}
A common hypothesis for why few-shot prompting is more effective than  zero-shot prompting is that it provides the model with a better understanding of the task at hand. Here we evaluate the few-shot baselines on recall-intensive tasks.

The in-context learning results for different models are shown in \Cref{table:incontext-result}. The improvement of few-shot in-context learning in smaller models is less obvious than in larger models. {\prompt} appears more effective than few-shot ICL on average, suggesting that there is benefit from reading twice, beyond simply improving the model's understanding of the task via few-shot examples.

One failure mode we observe with few-shot prompts is that the model sometimes outputs the attribute-value (e.g. director name given HTML text from different movie web pages) from the example documents instead of the relevant input document from which we seek to extract information.

\begin{table*}[!ht]
\centering
\begin{tabular}{l|ccc|ccc|ccc|ccc}
\hline
&\multicolumn{3}{c|}{Mamba-130m} & \multicolumn{3}{c|}{Mamba-370m} & \multicolumn{3}{c|}{Mamba-1.4B} & \multicolumn{3}{c}{Mamba-2.8B}\\
 & DF & FS & JP  & DF & FS & JP & DF & FS & JP & DF & FS & JP \\ \hline
\hline
FDA & 25.7 & 22.0 & 32.8 & 41.9 & 35.3 & 58.3 & 45.8&46.0 &60.9 & 54.3 & 54.8 & 66.6\\
SWDE & 17.5 & 19.7 &31.5 & 27.6 &35.0 & 42.2 & 37.6 & 47.1 &46.0 & 38.9 &51.9 & 48.9 \\
SQUAD&27.1 &25.2 &51.9 &34.9 & 36.0 &51.0 &39.9 &45.5 &59.6 & 43.9 &53.2 &59.4 \\
\hline
\end{tabular}
\caption{\textbf{{\prompt} ablations}. Here we evaluate three ICL baselines: DF is default prompt; FS is a prompt with $2$ in-context examples; JP is {\prompt}.}
\label{table:incontext-result}
\vspace{-3mm}
\end{table*}

\clearpage

\section{Prompts}
\label{app:prompts}

Below we include the prompts for the default and {\prompt} in-context learning results that produced the numbers in \Cref{table:main-quality}. We use the exact same prompt structure for all examples in the task and across all models. We use a shared structure across groups of tasks e.g., information extraction tasks SWDE and FDA use the same prompt structure and document QA tasks (NQ, TriviaQA, Drop, SQUAD). 

\subsection{SWDE}

\begin{mdframed}[frametitle={SWDE (Default)}]
Input
\begin{codeframe}
\begin{lstlisting}
The Evil Dead Movie Facts and Details click here amc home | movie guide Genres\nLists\nRatings amctv.com>movie guide>The Evil Dead>details The Evil Dead details\nOverall Rating Total Ratings: 1 Overview\nDetails\nCast & Credits\nAwards\nReview Movie Details: Director: Sam Raimi\nProduced By: New Line Cinema, Renaissance Pictures\nYear: 1983\nRun Time: 85 minutes\nCountry: USA\nLanguage: English MPAA Rating: R\nCategory: Feature\nGenre/Type: Horror\nFilmed In: Color Key Cast: Bruce Campbell, Ellen Sandweiss, Betsy Baker, Hal Delrich

... many document tokens ...

cranked up the story's comic aspects several dozen notches for the rollicking semi-remake, Evil Dead 2: Dead by Dawn. by Cavett Binion, Rovi Keywords: atrocity\nbook\ncabin\ncellar\nchainsaw\ndemon\ndismemberment\ngateway-to-hell\nmonster\ndemonic-possession rampage\nsatanic\nSatanism\nslasher\ntree\nweekend\nwoods [place]\ncollege-student\ninvocation Themes: Zombies\nDemonic Possession\nNightmare Vacations\nCurses and Spells Exclusive coverage Get Dragged to Hell With This Ultimate Sam Raimi Fan Quiz - Horror Hacker - AMCfrom AMC Blogs\nInside the Unlikely Cult of Road House - AMC Movie Blog - AMCfrom AMC Blogs\nU.S. Marshals and Five Other Stealth. Year:
\end{lstlisting}
\end{codeframe}
Ground Truth
\begin{codeframe}
\begin{lstlisting}
1983
\end{lstlisting}
\end{codeframe}
\end{mdframed}

\begin{mdframed}[frametitle={SWDE (Twice)}]
Input
\begin{codeframe}
\begin{lstlisting}
Information about Year. The Evil Dead Movie Facts and Details click here amc home | movie guide Genres\nLists\nRatings amctv.com>movie guide>The Evil Dead>details The Evil Dead details\nOverall Rating Total Ratings: 1 Overview\nDetails\nCast & Credits\nAwards\nReview Movie Details: Director: Sam Raimi\nProduced By: New Line Cinema, 

... many document tokens ...

U.S. Marshals and Five Other Stealth.
The Evil Dead Movie Facts and Details click here amc home | movie guide Genres\nLists\nRatings amctv.com>movie guide>The Evil Dead>details The Evil Dead details\nOverall Rating Total Ratings: 1 Overview\nDetails\nCast & Credits\nAwards\nReview Movie Details: Director: Sam Raimi\nProduced By: New Line Cinema, Renaissance Pictures\nYear: 1983

... many document tokens ...

 With This Ultimate Sam Raimi Fan Quiz - Horror Hacker - AMCfrom AMC Blogs\nInside the Unlikely Cult of Road House - AMC Movie Blog. Year:
\end{lstlisting}
\end{codeframe}
Ground Truth
\begin{codeframe}
\begin{lstlisting}
1983
\end{lstlisting}
\end{codeframe}
\end{mdframed}

\clearpage

\subsection{Natural Questions }

\begin{mdframed}[frametitle={Natural Questions  (Default)}]
Input
\begin{codeframe}
\begin{lstlisting}
List of Nobel laureates in Physics - wikipedia <H1> List of Nobel laureates in Physics </H1> Jump to : navigation, search Front side ( obverse ) of the Nobel Prize Medal for Physics presented to Edward Victor Appleton in 1947 <P> The Nobel Prize in Physics ( Swedish : Nobelpriset i fysik ) is awarded annually by the Royal Swedish Academy of Sciences to scientists in the various fields of physics. 

... many document tokens ...


The first Nobel Prize in Physics was awarded to
\end{lstlisting}
\end{codeframe}
\begin{codeframe}
\begin{lstlisting}
Wilhelm Conrad Rontgen, of Germany
\end{lstlisting}
\end{codeframe}
\end{mdframed}

\begin{mdframed}[frametitle={Natural Questions  (Twice)}]
Input
\begin{codeframe}
\begin{lstlisting}
Who got the first nobel prize in physics? List of Nobel laureates in Physics - wikipedia <H1> List of Nobel laureates in Physics </H1> Jump to : navigation, search Front side ( obverse ) of the Nobel Prize Medal for Physics presented to Edward Victor Appleton in 1947 <P> The Nobel Prize in Physics ( Swedish : Nobelpriset i fysik ) is awarded annually by the Royal Swedish Academy of Sciences to scientists in the various fields of physics. 

... many document tokens ...

for their joint researches on the radiation phenomena discovered by Professor Henri Becquerel
List of Nobel laureates in Physics - wikipedia <H1> List of Nobel laureates in Physics </H1> Jump to : navigation, search Front side ( obverse ) of the Nobel Prize Medal for Physics presented to Edward Victor Appleton in 1947 <P> The Nobel Prize in Physics ( Swedish : Nobelpriset i fysik ) is awarded annually by the Royal Swedish Academy of Sciences to scientists in the various fields of physics. 

... many document tokens ... 

for their joint researches on the radiation phenomena discovered by Professor Henri Becquerel. The first Nobel Prize in Physics was awarded to
\end{lstlisting}
\end{codeframe}
\begin{codeframe}
\begin{lstlisting}
Wilhelm Conrad Rontgen, of Germany
\end{lstlisting}
\end{codeframe}
\end{mdframed}

\clearpage

\subsection{FDA}

\begin{mdframed}[frametitle={FDA (Default)}]
Input
\begin{codeframe}
\begin{lstlisting}
510(k) SUBSTANTIAL EQUIVALENCE DETERMINATION DECISION SUMMARY A. 510(k) Number: K153137 B. Purpose for Submission: Clearance of a new device C. Measurand: Anti-PF4/Heparin Total Antibodies D. Type of Test: Automated, latex enhanced immuno-turbidimetric assay E. Applicant: Instrumentation Laboratory (IL) Co. F. Proprietary and Established Names: 
HemosIL HIT-Ab 
HemosIL HIT-Ab 
Controls G. Regulatory Information: 1. Regulation section: 21 CFR 864.7695, Platelet factor 4 radioimmunoassay 21 CFR 864.5425, Multipurpose system for in vitro coagulation studies 2. 

... many document tokens ...

Low HIT Control: 
Control intended for the assessment of precision and accuracy of the assay at PF4/H antibody levels at or below the cut-off. 
High HIT Control: Control intended for the assessment of precision and accuracy of the assay at abnormal PF4/H antibody levels. J. Substantial Equivalence Information: 1. 
Predicate device name(s): Asserachrom HPIA Test kit from Diagnostica Stago 2. Predicate 510(k) number(s): K003767 3. Comparison with predicate: 4 Similarities Item Device Predicate Trade Names HemosIL HIT-Ab(PF4-H) HemosIL HIT-Ab (PF4-H) Controls (K153137) Asserachrom HPIA Test Kit (kit includes two control levels) (K003767) Measurand Anti-PF4/Heparin Total Antibodies AntiPF. Purpose for submission:
\end{lstlisting}
\end{codeframe}
\begin{codeframe}
\begin{lstlisting}
Clearance of a new device
\end{lstlisting}
\end{codeframe}
\end{mdframed}

\begin{mdframed}[frametitle={FDA (Twice)}]
Input
\begin{codeframe}
\begin{lstlisting}
Information about Purpose for submission. 510(k) SUBSTANTIAL EQUIVALENCE DETERMINATION DECISION SUMMARY A. 510(k) Number: K153137 B. Purpose for Submission: Clearance of a new device C. Measurand: Anti-PF4/Heparin Total Antibodies D. Type of Test: Automated, latex enhanced immuno-turbidimetric assay E. Applicant: Instrumentation Laboratory (IL) Co. F. 

... many document tokens ...

Predicate device name(s): Asserachrom HPIA Test kit from Diagnostica Stago 2. Predicate 510(k) number(s): K003767 3. Comparison with predicate: 4 Similarities Item Device Predicate Trade Names HemosIL HIT-Ab(PF4-H) HemosIL HIT-Ab(PF4-H) Controls (K153137) Asserachrom HPIA Test Kit (kit includes two control levels) (K003767) Measurand Anti-PF4/Heparin Total Antibodies Anti-PF.
510(k) SUBSTANTIAL EQUIVALENCE DETERMINATION DECISION SUMMARY A. 510(k) Number: K153137 B. Purpose for Submission: Clearance of a new device C. Measurand: Anti-PF4/Heparin Total Antibodies D. Type of Test: Automated, latex enhanced immuno-turbidimetric assay E. Applicant: Instrumentation Laboratory (IL) Co. F. 

... many document tokens ...

Predicate device name(s): Asserachrom HPIA Test kit from Diagnostica Stago 2. Predicate 510(k) number(s): K003767 3. Comparison with predicate: 4 Similarities Item Device Predicate Trade Names 
HemosIL HIT-Ab(PF4-H) HemosIL HIT-Ab(PF4-H) Controls (K153137) Asserachrom HPIA Test Kit (kit includes two control levels) (K003767) 
Measurand Anti-PF4/Heparin Total Antibodies Anti-PF. Purpose for submission:
\end{lstlisting}
\end{codeframe}
\begin{codeframe}
\begin{lstlisting}
Clearance of a new device
\end{lstlisting}
\end{codeframe}
\end{mdframed}

\clearpage

\subsection{SQUAD}

\begin{mdframed}[frametitle={SQUAD (Default)}]
Input
\begin{codeframe}
\begin{lstlisting}
Super Bowl 50 was an American football game to determine the champion of the National Football League (NFL) for the 2015 season.
The American Football Conference (AFC) champion Denver Broncos defeated the National Football Conference (NFC) champion Carolina Panthers 24-10 to earn their third Super Bowl title. 
The game was played on February 7, 2016, at Levi's Stadium in the San Francisco Bay Area at Santa Clara, California. As this was the 50th Super Bowl, the league emphasized the "golden anniversary" with various gold-themed initiatives, as well as temporarily suspending the tradition of naming each 
Super Bowl game with Roman numerals (under which the game would have been known as "Super Bowl L"), so that the logo could prominently feature the Arabic numerals 50.The NFL team that represented the AFC at Super Bowl 50 was the
\end{lstlisting}
\end{codeframe}
\begin{codeframe}
\begin{lstlisting}
Denver Broncos
\end{lstlisting}
\end{codeframe}
\end{mdframed}

\begin{mdframed}[frametitle={SQUAD (Twice)}]
Input
\begin{codeframe}
\begin{lstlisting}
Which NFL team represented the AFC at Super Bowl 50? Super Bowl 50 was an American football game to determine the champion of the National Football League (NFL) for the 2015 season. The American Football Conference (AFC) champion Denver Broncos defeated the National Football Conference (NFC) champion Carolina Panthers 24-10 to earn their third Super Bowl title. The game was played on February 7, 2016, at Levi's Stadium in the San Francisco Bay Area at Santa Clara, California. As this was the 50th Super Bowl, the league emphasized the "golden anniversary" with various gold-themed initiatives, as well as temporarily suspending the tradition of naming each Super Bowl game with Roman numerals (under which the game would have been known as "Super Bowl L"), so that the logo could prominently feature the Arabic numerals 50.
Super Bowl 50 was an American football game to determine the champion of the National Football League (NFL) for the 2015 season. The American Football Conference (AFC) champion Denver Broncos defeated the National Football Conference (NFC) champion Carolina Panthers 24-10 to earn their third Super Bowl title. The game was played on February 7, 2016, at Levi's Stadium in the San Francisco Bay Area at Santa Clara, California. As this was the 50th Super Bowl, the league emphasized the "golden anniversary" with various gold-themed initiatives, as well as temporarily suspending the tradition of naming each Super Bowl game with Roman numerals (under which the game would have been known as "Super Bowl L"), so that the logo could prominently feature the Arabic numerals 50.The NFL team that represented the AFC at Super Bowl 50 was the
\end{lstlisting}
\end{codeframe}
\begin{codeframe}
\begin{lstlisting}
Denver Broncos
\end{lstlisting}
\end{codeframe}
\end{mdframed}

\clearpage
\subsection{TriviaQA}

\begin{mdframed}[frametitle={TriviaQA (Default)}]
Input
\begin{codeframe}
\begin{lstlisting}
81 years since the first inflight movie was shown...81 years since the first inflight movie was shown - Travelers United Travelers United 81 years since the first inflight movie was shown 
October 8, 2010 Filed Under: Today By Charlie Leocha Leave a Comment Our government at work - This is the daily ''Profile America'' feature from the U.S. Census Bureau for today, Friday, October 8th. 
This is the 81st anniversary of the first inflight movie ever shown. A little-known travel gem. 
Friday, October 8th, celebrates one of the few joys left in long-distance flying, sitting back and enjoying a feature-length movie. 
But recently, one major airline announced it will be ending this entertainment, joining several low-cost airlines in the policy. 
While movies have been generally available on long flights for decades, the first movies shown in the air were a newsreel and two cartoons. 
These were shown on this date in 1929 aboard a Ford Trimotor operated by Transcontinental Air Transport. Regular in-flight movie service began in July 1961 on a Trans World airline flight from New York to Los Angeles. 
Now, more than 3.9 million passengers fly between New York and Los Angeles every year. You can find these and more facts about America from the U.S. Census Bureau online. The first in-flight movie was shown on an internal flight in the USA in
\end{lstlisting}
\end{codeframe}
\begin{codeframe}
\begin{lstlisting}
1929
\end{lstlisting}
\end{codeframe}
\end{mdframed}

\begin{mdframed}[frametitle={TriviaQA (Twice)}]
Input
\begin{codeframe}
\begin{lstlisting}
In what year was the first in-flight movie shown on an internal flight in the USA? 81 years since the first inflight movie was shown...81 years since the first inflight movie was shown - Travelers United  Travelers United  81 years since the first inflight movie was shown  October 8, 2010  Filed Under: Today By Charlie Leocha Leave a Comment  .... These were shown on this date in 1929 aboard a Ford Trimotor operated by Transcontinental Air Transport. Regular in-flight movie service began in July 1961 on a Trans World airline flight from New York to Los Angeles. Now, more than 3.9 million passengers fly between New York and Los Angeles every year. You can find these and more facts about America from the U.S. Census Bureau online at.
81 years since the first inflight movie was shown...81 years since the first inflight movie was shown - Travelers United  Travelers United  81 years since the first inflight movie was shown  October 8, 2010  Filed Under: Today By Charlie Leocha Leave a Comment  ...  These were shown on this date in 1929 aboard a Ford Trimotor operated by Transcontinental Air Transport. Regular in-flight movie service began in July 1961 on a Trans World airline flight from New York to Los Angeles. Now, more than 3.9 million passengers fly between New York and Los Angeles every year. You can find these and more facts about America from the U.S. Census Bureau online at. The first in-flight movie was shown on an internal flight in the USA in
\end{lstlisting}
\end{codeframe}
\begin{codeframe}
\begin{lstlisting}
1929
\end{lstlisting}
\end{codeframe}
\end{mdframed}

\clearpage
\subsection{Drop}

\begin{mdframed}[frametitle={Drop (Default)}]
Input
\begin{codeframe}
\begin{lstlisting}
Hoping to rebound from their loss to the Patriots, the Raiders stayed at home for a Week 16 duel with the Houston Texans.  Oakland would get the early lead in the first quarter as quarterback JaMarcus Russell completed a 20-yard touchdown pass to rookie wide receiver Chaz Schilens.  The Texans would respond with fullback Vonta Leach getting a 1-yard touchdown run, yet the Raiders would answer with kicker Sebastian Janikowski getting a 33-yard and a 30-yard field goal.  Houston would tie the game in the second quarter with kicker Kris Brown getting a 53-yard and a 24-yard field goal. Oakland would take the lead in the third quarter with wide receiver Johnnie Lee Higgins catching a 29-yard touchdown pass from Russell, followed up by an 80-yard punt return for a touchdown.  The Texans tried to rally in the fourth quarter as Brown nailed a 40-yard field goal, yet the Raiders' defense would shut down any possible attempt. The first touchdown of the game was scored by
\end{lstlisting}
\end{codeframe}
\begin{codeframe}
\begin{lstlisting}
Chaz Schilens
\end{lstlisting}
\end{codeframe}
\end{mdframed}

\begin{mdframed}[frametitle={Drop (Twice)}]
Input
\begin{codeframe}
\begin{lstlisting}
Who scored the first touchdown of the game? Hoping to rebound from their loss to the Patriots, the Raiders stayed at home for a Week 16 duel with the Houston Texans.  Oakland would get the early lead in the first quarter as quarterback JaMarcus Russell completed a 20-yard touchdown pass to rookie wide receiver Chaz Schilens.  The Texans would respond with fullback Vonta Leach getting a 1-yard touchdown run, yet the Raiders would answer with kicker Sebastian Janikowski getting a 33-yard and a 30-yard field goal.  Houston would tie the game in the second quarter with kicker Kris Brown getting a 53-yard and a 24-yard field goal. Oakland would take the lead in the third quarter with wide receiver Johnnie Lee Higgins catching a 29-yard touchdown pass from Russell, followed up by an 80-yard punt return for a touchdown.  The Texans tried to rally in the fourth quarter as Brown nailed a 40-yard field goal, yet the Raiders' defense would shut down any possible attempt.
Hoping to rebound from their loss to the Patriots, the Raiders stayed at home for a Week 16 duel with the Houston Texans.  Oakland would get the early lead in the first quarter as quarterback JaMarcus Russell completed a 20-yard touchdown pass to rookie wide receiver Chaz Schilens.  The Texans would respond with fullback Vonta Leach getting a 1-yard touchdown run, yet the Raiders would answer with kicker Sebastian Janikowski getting a 33-yard and a 30-yard field goal.  Houston would tie the game in the second quarter with kicker Kris Brown getting a 53-yard and a 24-yard field goal. Oakland would take the lead in the third quarter with wide receiver Johnnie Lee Higgins catching a 29-yard touchdown pass from Russell, followed up by an 80-yard punt return for a touchdown.  The Texans tried to rally in the fourth quarter as Brown nailed a 40-yard field goal, yet the Raiders' defense would shut down any possible attempt. The first touchdown of the game was scored by
\end{lstlisting}
\end{codeframe}
\begin{codeframe}
\begin{lstlisting}
Chaz Schilens
\end{lstlisting}
\end{codeframe}
\end{mdframed}

\clearpage 
\newpage
\section{Theoretical results}
\label{app:theory}
We begin by setting notation.
\paragraph{Notation.} We will be denoting the all \(1\) row vector of size $k$, given by \(\begin{bmatrix}1&1&\ldots&1&1\end{bmatrix}\),  and the all \(0\) row vector of size $k$, given by \(\begin{bmatrix}0&0&\ldots&0& 0\end{bmatrix}\), as \(\bm{1}^{k}\) and \(\bm{0}^{k}\), respectively. We will also construe the standard basis vector $\mathbf{e}_i$ as a column vector in these notes, and adhere to the following matrix indexing convention: $\tbf{M}[i,j]$ is the entry in the $i$th row and the $j$th column,  $\tbf{M}[i,:] \in \F^{1 \times n}$ denotes the $i$th row, and $\tbf{M}[:,j] \in \F^{m \times 1}$ denotes the $j$th column of $\tbf{M} \in \F^{m \times n},$ where $\F$ is a field and the reader can substitute $\F$ for $\R$ for convenience. We then use $\bm{1}^{m \times n}, \mathbf{0}^{m \times n} \in \F^{m \times 1}$ to denote the matrix of all $1$s and $0$s, respectively.

Next, we denote the {\em Hadamard product} of vectors $\tbf{u}, \tbf{v} \in \F^n$ as $\tbf{u}\odot \tbf{v}$; the operation can be extended to matrices by applying the Hadamard product column-wise across the matrices. This is commonly referred to as {\em(element-wise) gating}. For vectors $\tbf{u}, \tbf{v} \in \F^n$, we also denote their {\em linear (or acyclic) convolution} as $\tbf{u} \ast \tbf{v}$ and {\em cyclic convolution} as $\tbf{u} \circledast \tbf{v}$.

 We also recall the definition of \BaseConv\ for the reader's convenience:
\begin{definition}[\BaseConv~\citep{arora2023zoology}]
\label{def:baseconv}
Given an input sequence $\vu \in \R^{\inputLength \times \modelDim},$ where $\inputLength$ is the sequence length and $\modelDim$ is the model dimension, a learned weight matrix $\mW^B \in \R^{\modelDim \times \modelDim}$ and biases $\mB^B, \mB^K \in \R^{\inputLength \times \modelDim}$ and a matrix of convolution filters $\mK \in \R^{\inputLength \times \modelDim}$, a \BaseConv\ layer computes the following:
\begin{equation}
\label{eq: baseconv}
    \bm{z}^{\BaseConv} := (\vu{\mW}^B+\mB^B) \odot \paren{{\mK} \ast \vu + \mB^K} \in \R^{\inputLength \times \modelDim},
\end{equation}
where the convolutions are applied across the input length $\inputLength$.
\end{definition}

We will need the following ``$5$-tuple" notation for \BaseConv\ model:
\begin{definition}
\label{def: gated-conv}
   An $\modelTuple{\inputLength}{\depth}{\headDim}{\innerN}{\innerD}-\BaseConv$ is a stacked sequence to sequence model with $L$ layers such that:
   \begin{enumerate}
     \item  input and output are $\inputLength\times \headDim$ matrices, 
    \item  each layer corresponds to the a \BaseConv\ layer as defined in \Cref{def:baseconv},  and
    \item all the individual gated convolution layers take in $\innerN \times \innerD$ matrices and output $\innerN \times \innerD$ matrices. We refer to the tuple $\innerDim$ as the \emph{inner dimension} of the model.  
    \end{enumerate}
\end{definition}
We also assume that the input $\modelInput \in \R^{\inputDim}$ is embedded into $\modelInput' \in \R^{\innerN \times \innerD}$  such that

\[
    \modelInput'[n,t] = \begin{cases}
    \modelInput[n,t] \ \ \text{ if } n < \inputLength, \ t < \headDim \ \\
    0 \ \ \text{ otherwise. }
    \end{cases}
\]
The output from the last layer $\vz \in \R^{\innerN \times \innerD}$ is transformed into output $\bm{y} \in R^{\inputDim}$ by extracting the top left $\inputDim$ entries in $\vz$.

\begin{definition}
\label{def:mlp}
    An MLP layer is map $\R^{\inputDim}\to \R^{\inputDim}$ defined via matrices $\mW^1,\mW^2\in\R^{\headDim\times\headDim}$ and ``bias" matrices $\mB^1,\mB^2\in\R^{\inputDim}$ as follows:
    \[\MLP(\modelInput)=\mathrm{ReLU}(\modelInput\mW^1+\mB^1)\mW^2+\mB^2.\]
\end{definition}

\subsection{JRT Lower Bounds for \BaseConv}
First, we formally define JRT prompts below.
\begin{definition}[JRT Prompts]
For any model $\calM$ with input $\bm{u} \in \R^{\inputLength \times \modelDim}$, a {\em JRT prompt} for input $\bm{u}$ is the repeated input $\bm{u}^{\mathrm{JRT}} \in {\R^{2\inputLength \times \modelDim}}$ given by
\[
\bm{u}^{\mathrm{JRT}}[i,:] := \begin{cases}
    \bm{u}[i,:] & \text{if } i < N \\
    \bm{u}[i-N,:] & \text{otherwise.} \\
\end{cases}
\]
\end{definition}
\subsubsection{Lower Bound on the Number of Layers for AR}
\label{sec: lower-bound-AR-layers}
In this section, we will provide a lower bound on the number of layers needed to solve the standard associative recall problem with JRT prompts. We formally recall the associative recall problem:
\begin{displayquote}
     The AR problem takes key-value pairs $\{\bm{k}_i, \bm{v}_i\}_{i = 0}^{N-1}$ along with a query $\bm{q}$ appended at the end as input and the goal is to output $\bm{v}_i$ if $\bm{q} = \bm{k}_i$ for some $i \in [0, N-1]$.
\end{displayquote} 
 We also require a randomized communication complexity lower bound result for the {\em index problem}:
\begin{displayquote}
    The index problem has two agents, Alice and Bob, where Alice has a string $\bm{x} \in \{0,1\}^n$ and Bob has an index $i \in [n]$, and the goal for the players is to output the $i$-th entry $\bm{x}_i$. Moreover, we also require the communication to be {\em one-way}: only Alice is allowed to send a single message to Bob and Bob needs to output the answer.
\end{displayquote} 
We will use the following well-known lower bound for the index problem.
\begin{theorem}[\cite{jayram2008one}]
\label{thm: space-index}
    The one-way randomized communication complexity\footnote{The randomized communication complexity of function $f$ is defined as $\min_{\pi} \norm{\pi}$, where $\pi$ ranges over all randomized protocols that can solve $f$ with probability of success at least $2/3$.} of the index problem for an $n$-length bit string is $\Omega(n)$.
\end{theorem}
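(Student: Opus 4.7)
The plan is to establish the $\Omega(n)$ bound via a standard information-theoretic argument using Fano's inequality. I would model Alice's input $X$ as uniform on $\{0,1\}^n$ and Bob's index $I$ as uniform on $[n]$, independent of $X$. Assume a one-way randomized protocol $\pi$ of cost $c$ bits succeeds with probability at least $2/3$ on every input. By Newman's theorem I can reduce to the public-coin setting at an additive $O(\log n)$ cost; let $R$ denote the shared randomness and $M = M(X, R)$ the message Alice transmits to Bob.

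Next I would derive two matching estimates on $I(X; M \mid R)$. A direct upper bound on the transmitted information is
\[ I(X; M \mid R) \;\le\; H(M \mid R) \;\le\; c. \]
For the matching lower bound, I would fix any coordinate $i$ and observe that Bob's guess $\hat{X}_i$ is a function of $M, i, R$ with $\Pr[\hat{X}_i \ne X_i] \le 1/3$, so Fano's inequality applied to the binary variable $X_i$ gives $H(X_i \mid M, R) \le H_2(1/3)$, where $H_2$ is the binary entropy. Subadditivity of entropy (which needs no conditional independence of the $X_i$) then yields
\[ H(X \mid M, R) \;\le\; \sum_{i=1}^n H(X_i \mid M, R) \;\le\; n\,H_2(1/3), \]
and since $H(X) = n$, we obtain $I(X; M \mid R) \ge n\bigl(1 - H_2(1/3)\bigr) = \Omega(n)$. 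Combining the two estimates gives $c = \Omega(n)$.

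The main obstacle I anticipate is handling the definition of success probability carefully: if the $2/3$ guarantee is only worst-case over inputs, then averaging over the uniform product distribution immediately gives expected success $\ge 2/3$, so Fano applies to a uniformly random coordinate $I$ and the above chain of inequalities goes through on average; one then extracts at least one ``good'' coordinate satisfying the Fano precondition with room to spare. A minor wrinkle is that Fano must condition on everything Bob sees, which is why I retain $R$ in the conditioning throughout, but this is benign because $R$ is independent of $X$ so $H(X \mid R) = n$. An alternative route avoiding Fano would be a direct encoding/anti-concentration argument showing that if $c$ is too small then two far-apart inputs must induce nearly indistinguishable message distributions, contradicting per-coordinate recoverability — but I would default to the Fano approach since it is cleaner and matches the form in which the cited result is usually recorded.
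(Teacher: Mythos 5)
The paper cites this result from Jayram et al.\ without providing a proof, so there is no in-paper argument to compare against. Your Fano-based proof is the standard argument and is essentially correct: $I(X; M \mid R) \le H(M) \le c$ on one side, and Fano applied coordinatewise together with subadditivity of conditional entropy gives $H(X \mid M, R) \le n\,H_2(1/3)$, hence $I(X; M \mid R) \ge n\bigl(1 - H_2(1/3)\bigr)$, forcing $c = \Omega(n)$.

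Two small issues are worth fixing. First, the appeal to Newman's theorem is both unnecessary and reversed: Newman's theorem converts a \emph{public-coin} protocol into a \emph{private-coin} one at additive $O(\log n)$ cost, not the other way around. Since the public-coin model trivially simulates the private-coin model, a lower bound for public coins already implies the same bound for private coins with no slack; alternatively, you can run the whole argument in the private-coin model by conditioning on Bob's private randomness $R_B$, which is independent of $(X, M)$, so $H(X_i \mid M) = H(X_i \mid M, R_B) \le H_2(1/3)$ and the chain of inequalities is unchanged. Second, your treatment of the success probability is more roundabout than needed: if the protocol succeeds with probability $\ge 2/3$ on every fixed input pair $(x,i)$, then for each fixed $i$ averaging over uniform $X$ already gives $\Pr[\hat{X}_i \ne X_i] \le 1/3$, so Fano's hypothesis holds for every coordinate separately; there is no need to pass to an averaged statement over a random $I$ or to extract a single ``good'' coordinate.
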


We will now mirror the argument from \citep[Theorem F.4]{arora2024simple} to show that the lower bound on the number of layers for a \BaseConv\ model solving AR still holds for JRT prompts.
\begin{theorem}
\label{thm: JRT-layer-ar}
    Given a {\em JRT prompt} $\vu^{\mathrm{JRT}} \in \{0,1\}^{2\inputLength \times \modelDim}$ for input $\vu \in \{0,1\}^{\inputLength \times \modelDim}$ to the AR problem with any encoding such that $\log{c} \le d \le 2^{(\log{N})^{1-\epsilon}}$ for $\epsilon > 0$,
    and $c$ possible tokens from the vocabulary with $c \le N$,
    a data-independent \BaseConv\ model with model parameters taking $O(\log{N})$ bits needs $\Omega(\epsilon\log\log{N})$ layers to solve AR.
\end{theorem}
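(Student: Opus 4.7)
The plan is to mirror the argument of Theorem F.4 in \cite{arora2024simple}, which establishes the same $\Omega(\epsilon \log \log N)$ layer lower bound for standard (non-JRT) AR by a reduction from the one-way index problem. That original argument shows that a \BaseConv\ model with $L$ layers and $O(\log N)$-bit data-independent parameters solving AR implies a one-way communication protocol for the index problem of an $n$-bit string whose total communication is polynomial in $L$, $d$, and $\log N$. Applying the $\Omega(n)$ lower bound from \Cref{thm: space-index} then forces $L$ to grow as $\Omega(\epsilon \log \log N)$ when $d$ lies in the allowed range $\log c \le d \le 2^{(\log N)^{1-\epsilon}}$. The entire task here is to check that introducing the JRT repetition, which doubles the input length from $N$ to $2N$, does not undermine this chain of reasoning.

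First, I would set up the reduction exactly as in the original argument. Alice, holding $\bm{x} \in \{0,1\}^n$, constructs the key-value portion $\bm{u}^A$ of the AR input by encoding the pairs $\{(j, \bm{x}_j)\}_{j=1}^n$ under the given token encoding; Bob, holding $i \in [n]$, contributes the query $\bm{u}^B = (i)$. The full length-$N$ AR input is $\bm{u} = \bm{u}^A \bm{u}^B$, and the length-$2N$ JRT prompt fed to $\calM$ is $\bm{u}^{\mathrm{JRT}} = \bm{u}^A \bm{u}^B \bm{u}^A \bm{u}^B$. Alice and Bob must jointly simulate $\calM$ on $\bm{u}^{\mathrm{JRT}}$ using one-way communication to recover $\bm{x}_i$ from the final output position.

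The core step is to simulate $\calM$ on the JRT prompt with only a constant-factor blow-up in communication relative to the non-JRT case. Since \BaseConv\ is causal (position $t$ of each layer's output depends only on positions $\le t$) and data-independent (all weights, biases, and convolution kernels are fixed in advance and encodable in $O(\log N)$ bits), Alice can locally compute her layer-by-layer state contributions on her first copy and the partial contributions flowing into Bob's first query position. For the second occurrence of Alice's data, the intermediate state depends on Bob's query through the convolution, but Alice can still precompute, for each layer, the summaries of her portion that Bob needs to combine with his own state, because the model weights are public. Concretely, for each of the $L$ layers, the cross-boundary summary Alice must transmit has size $\mathrm{poly}(L, d, \log N)$, and JRT merely forces her to transmit two such summaries instead of one, preserving the overall communication bound $\mathrm{poly}(L, d, \log N)$. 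Combining with \Cref{thm: space-index} and the allowed range of $d$ yields $L = \Omega(\epsilon \log \log N)$ exactly as in the non-JRT case.

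The main obstacle I expect is the careful bookkeeping for the second copy of $\bm{u}^A$: naively, Bob would need Alice's raw data to process those positions, which would force $\Omega(n)$ communication and collapse the bound. The argument must therefore exploit data-independence to let Alice precompute, and transmit as a short summary, exactly the state-level information Bob needs to recombine with his own intermediate state when the second Alice block is re-encountered, without ever sending $\bm{u}^A$ itself. Getting this summary to be of size $\mathrm{poly}(L,d,\log N)$ rather than $\Omega(N d)$ is the one place where the JRT extension differs substantively from the original proof, and it is essentially a careful reapplication of the dependency/state-compression lemma underlying Theorem F.4 of \cite{arora2024simple}, applied twice (once per copy) instead of once.
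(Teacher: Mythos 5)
Your high-level goal (reduce from the one-way index problem and invoke Theorem~\ref{thm: space-index}) matches the paper's, but the mechanism you propose for bounding Alice's message is different from the paper's and, more importantly, does not work for \BaseConv. You propose a layer-by-layer state-simulation protocol in which Alice transmits ``cross-boundary summaries'' of size $\mathrm{poly}(L, d, \log N)$. That would be the right picture for a genuinely recurrent model with a small per-position state, but \BaseConv\ layers contain long causal convolutions, so the output at any position $t$ of a layer can depend on {\em all} positions $0,\dots,t$ of the previous layer. There is no single small ``state'' at the Alice--Bob cut that Bob can combine with his query to continue the computation forward, and in particular Bob cannot recompute the second copy of Alice's block from a per-layer summary: he would need essentially all of Alice's layer outputs, i.e.\ $\Theta(Nd)$ information, collapsing the bound exactly as you feared. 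The claim that the summary is $\mathrm{poly}(L,d,\log N)$ is the missing step, and it is false in general.

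The paper avoids this entirely via the polynomial method. A data-independent $L$-layer \BaseConv\ model computes a multivariate polynomial $P$ of degree at most $2^L$ in the input entries. Alice holds the $2(N-1)d$ entries of $\vu^{\mathrm{JRT}}$ coming from her key-value block (both copies), and the two copies of the query occupy positions $N-1$ and $2N-1$ but are literally the {\em same} $d$ variables. She substitutes her known values into $P$ to obtain a polynomial $Q^{\mathrm{JRT}}$ in just those $d$ query variables, still of degree at most $2^L$, hence with at most $d^{2^L}$ coefficients, each of $B$ bits with $B$ bounded using the $O(\log N)$-bit parameter assumption. She sends all coefficients in one round; Bob plugs in his query. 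The communication is therefore $B \cdot d^{2^L}$ --- which is {\em not} $\mathrm{poly}(L,d,\log N)$ but roughly doubly exponential in $L$ --- and the lower bound $B\cdot d^{2^L}\ge\Omega(N)$ together with $d \le 2^{(\log N)^{1-\epsilon}}$ gives $L = \Omega(\epsilon\log\log N)$. The only adjustment the JRT setting forces is bookkeeping: the input length and hence the bound on $\abs{\text{coefficients}}$ changes by a constant factor ($2Nd$ vs.\ $Nd$), which propagates to $\log B$ but leaves the asymptotics unchanged. To repair your proof you should replace the state-simulation step with this substitute-and-send-coefficients argument, and confirm that treating the two query copies as the same $d$ variables keeps $Q^{\mathrm{JRT}}$ in $d$ variables of degree $\le 2^L$.
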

\begin{proof}
    Given a $\BaseConv$ model $\calM$ solving AR, regardless of the input length $\inputLength$, we know that there exists an equivalent polynomial $P(\vu^{\mathrm{JRT}})$ of degree at most $2^L$ that solves AR for any $\vu^{\mathrm{JRT}} \in \{0,1\}^{2\inputLength \times \modelDim}$, where $L$ denotes the number of layers.\footnote{See the proof of \citep[Theorem F.4]{arora2024simple} for justification.}
    Now, take the instance $(\bm{x}, i)$ of the index problem with $\bm{x} \in \{0,1\}^N$ and the corresponding JRT prompt of the AR problem as before
    \begin{equation}
        \label{eq: ar-index-dup}
        \vu^{\mathrm{JRT}} := \{j, \bm{x}_j\}_{j = 0}^{N-1}, i, \{j, \bm{x}_j\}_{j = 0}^{N-1}, i
    \end{equation}
    Next, we build the following one-way protocol for solving the index problem using the \BaseConv\ model from the hypothesis that it solves AR. Alice with their access of $\bm{x} \in \{0,1\}^N$ will again generate a JRT input  $\vu^{\mathrm{JRT}}$ for AR (without the query) as in \eqref{eq: ar-index-dup}. More specifically, Alice takes the values $\va:= \vu^{\mathrm{JRT}}[0: N-2, :] \equiv \vu^{\mathrm{JRT}}[N: 2N-2, :] \in \{0,1\}^{2(N-1)\times d}$ while leaving out the query $\vq:= \vu^{\mathrm{JRT}}[\inputLength-1,:] = \vu^{\mathrm{JRT}}[2\inputLength - 1,:]$, and substitutes these known $2(N-1)d$ values to define the following polynomial:
    \begin{equation}
        \label{eq: q-poly}\
        Q^{\mathrm{JRT}}(\vq)=P(\va,\vq, \va, \vq).
    \end{equation}
    Crucially, $Q^{\mathrm{JRT}}$ is still a polynomial in $d$ variables, corresponding to the values $\vu^{\mathrm{JRT}}[N-1, :] = \vu^{\mathrm{JRT}}[2N-1,:]$ that Bob has and trivially has degree $D \le 2^L$. 
    As in the proof of \citep[Theorem F.4]{arora2024simple}, Alice can run the model $\calM$, retrieve the coefficients of $Q^{\mathrm{JRT}}$, and send it to Bob. Since we assume that $P$ solves AR, Bob can take the coefficients of $Q^{\mathrm{JRT}}$ and substitute $\vu^{\mathrm{JRT}}[N-1, :] = \vu^{\mathrm{JRT}}[2N-1,:]$ to $Q^{\mathrm{JRT}}$ to compute $P(\vu^{\mathrm{JRT}})$ which is the value $\vx_i$.

    Moreover, the polynomial $Q^{\mathrm{JRT}}$ that Alice sends still has at most $d^{2^L}$ coefficients as each term in $Q^{\mathrm{JRT}}$ can have degree at most $2^L$.  If each such coefficient has $B$ bits, then using \cref{thm: space-index}, the total number of bits being communicated must satisfy $B \cdot d^{2^L} \ge \Omega(N)$. This follows from the fact that if $B\cdot d^{2^L} \le o(N)$, then since the associated value of $i$ in \eqref{eq: ar-index-dup} is the answer to the indexing problem, we have shown that a one-way communication protocol for solving the index problem uses $o(N)$ communication complexity, which then contradicts \cref{thm: space-index}. This is the same equation we get in the proof of \citep[Theorem F.4]{arora2024simple}, 
    which yields the following lower bound on the number of layers:
    \begin{align}
    L 
    &\ge {\log\paren{\frac{\log{N} - \log{B}}{(\log{N})^{1-\epsilon}}}}\label{eq: L-N-B}.
    \end{align} 
    Recall here that the model parameters are assumed to be $O(\log{N})$ bits, so any coefficient in $Q^{\mathrm{JRT}}$ should have absolute value at most $\paren{2^{O(\log{N})}\cdot 2Nd}^{2^L}$ as each coefficient can be a product of at most $2Nd$ variables. That is, for some $\alpha>0$, we have the following bound on each coefficient:
    \[
       2^B \le (2\cdot N^{\alpha+1}d)^{2^L} \le (2N^{(\alpha+2)})^{2^L}
    \]
    where the last equality uses the fact that $d \le 2^{\log{N}^{(1-\epsilon)}} \le N$. We thus have 
    \begin{equation}
        \label{eq: q-coefficient}
    \log(B) \le \log(\alpha+2) + L + \log\log\paren{2N}.
    \end{equation}
 Substituting \eqref{eq: q-coefficient} to \eqref{eq: L-N-B}, we get
    \begin{align}
        L \ge \log\paren{\frac{\log{N} - \log(\alpha+2) - L - \log\log\paren{2N}}{(\log{N})^{1-\epsilon}}} \label{eq: L-lower}
    \end{align}
    Now, if $L > \log\log{2N}$, we are done. Otherwise, if $L \le \log\log\paren{2N}$, then we can substitute this to \eqref{eq: L-lower} to get
    \begin{align}
        L &\ge  \log\paren{\frac{\log{N} - \log(\alpha+2) - 2\log\log\paren{2N}}{(\log{N})^{1-\epsilon}}} \nonumber\\
        &= \log\paren{\log{N} - \log(\alpha+2) - 2\log\log{2N}} - (1-\epsilon)\log\log{N}\label{eq: L-explicit}
    \end{align}
    We now claim that first term in \eqref{eq: L-explicit} satisfies the following:
    \begin{equation}
        \label{eq: claim}
        \log\paren{{\log{N} - \log(\alpha+2) - 2\log\log\paren{2N}}} \ge (1-\frac{\epsilon}{2}) \log\log{N}.
    \end{equation}
    To see this, note that, for sufficiently large enough $N$, the following holds:
    \begin{align*}
        \frac{\log{N}}{2} \ge \log(\alpha + 2) + 2\log\log\paren{2N},
    \end{align*}
    hence, we get
    \[
    \log\paren{{\log{N} - \log(\alpha+2) - 2\log\log\paren{2N}}} \ge \log\paren{\frac{\log{N}}{2}} \ge \log\log{N} -1 \ge (1-\frac{\epsilon}{2}) \log\log{N}.
    \]
    This proves the claim in \eqref{eq: claim}. Finally, using \eqref{eq: claim}, \eqref{eq: L-explicit} leads to the following:
    \[
    L \ge (1-\frac{\epsilon}{2}) \log\log{N} - (1-\epsilon)\log\log{N} = \frac{\epsilon}{2}\log\log{N},
    \]
    which still provides the lower bound $L = \Omega(\epsilon\log\log{N})$, as desired. 
\end{proof}
\subsubsection{Lower Bounds for MQAR with $d = \log_2{c}$}
Next, we present lower bounds for the {\em mulitple-query associative recall} (MQAR) problem which generalizes the AR problem~\citep{arora2023zoology}. To this end, we recall the definition of MQAR below.
\begin{displayquote}
    Suppose we are given an input sequence 
$
\bm{u}[0 \cdots 3N-1] \triangleq \left\{\paren{\bm{k}_0, \bm{v}_0, \bm{q}_0}, \ldots, \paren{\bm{k}_{{N}-1}, \bm{v}_{{N}-1}, \bm{q}_{{N}-1}}\right\}
$
with each $\bm{k}_i, \bm{v}_i, \bm{q}_i \in C$ is a token drawn from a vocabulary of size $c = |C|$.
Our goal is then to check, for each $1 \le i \le {N}-1$, whether there exists $0 \le j < i$ such that $\bm{q}_i \equiv \bm{k}_j$, and if so, output $\bm{v}_{j}$. 
\end{displayquote}
We now present the following lower bound from \citep{arora2024simple} for the MQAR problem $d = \log_2{c}$ to encode all $c$ possible tokens from $C$ using the natural binary encoding, which also holds for JRT input. This is because the result (Theorem F.5) in \citep{arora2024simple} is derived using Lemma 5.1 in \citep{arora2024simple} (degree of multilinear polynomial computed by \BaseConv\ in terms of its number of layers) and Lemma 5.2 in \citep{arora2024simple} (degree of multilinear polynomial for the MQAR problem), both of which are independent of the input length $\inputLength$.
\begin{theorem}
\label{thm: mqar-1hot-JRT}
    A data-independent \BaseConv\ model needs $\log(2d)$-layers to solve $\Task$ with a JRT prompt $\bm{u}\in \{0,1\}^{2 \cdot 3N\times d}$ for the original input $\bm{u}\in \{0,1\}^{3N\times d}$ with $d = \log_2(c)$.
\end{theorem}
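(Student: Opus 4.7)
} The plan is to mirror the argument used to prove Theorem F.5 of \cite{arora2024simple} for the ordinary (non-JRT) setting and verify that the reduction step goes through unchanged when the input is duplicated. The two ingredients imported from \cite{arora2024simple} are: (i) Lemma 5.1, which says that each coordinate of the output of a data-independent $L$-layer \BaseConv\ model is a multilinear polynomial of total degree at most $2^L$ in the input bits, and (ii) Lemma 5.2, which gives a degree lower bound of $2d$ for any polynomial that computes $\Task$ on an input encoded with $d = \log_2 c$ bits per token. Crucially, both lemmas are statements about polynomial degrees that depend only on the encoding width $d$ and the layer count $L$, not on the sequence length; this is what allows them to be reused for the $2 \cdot 3N$-length JRT input.

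First I would let $\calM$ be a data-independent \BaseConv\ model with $L$ layers that solves $\Task$ given a JRT prompt $\bm{u}^{\jrt} \in \{0,1\}^{2\cdot 3N \times d}$. By Lemma 5.1 of \cite{arora2024simple} applied at sequence length $2\cdot 3N$, each output coordinate of $\calM$ is a multilinear polynomial $P(\bm{u}^{\jrt})$ of degree at most $2^L$ in the $2\cdot 3N \cdot d$ bits of $\bm{u}^{\jrt}$. Next, I would perform the substitution enforcing the JRT structure: identify $\bm{u}^{\jrt}[i,:]$ with $\bm{u}[i,:]$ for $0 \le i < 3N$ and with $\bm{u}[i-3N,:]$ for $3N \le i < 2\cdot 3N$. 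This identification is a linear change of variables (in fact, just variable renaming), so the resulting polynomial $\tilde{P}(\bm{u}) := P(\bm{u}, \bm{u})$ is again multilinear in the $3N \cdot d$ original bits and has degree at most $2^L$.

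Since $\calM$ solves $\Task$ on JRT prompts, $\tilde{P}$ must compute the corresponding $\Task$ output on every original input $\bm{u} \in \{0,1\}^{3N \times d}$. Applying Lemma 5.2 of \cite{arora2024simple} to $\tilde{P}$ yields $\deg(\tilde{P}) \ge 2d$. Combining with the upper bound $\deg(\tilde{P}) \le 2^L$ gives $2^L \ge 2d$, i.e.\ $L \ge \log(2d)$, which is the claimed lower bound.

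The only potentially delicate step is the variable-identification move: one must be sure that substituting $\bm{u}^{\jrt}[i+3N,:] \mapsto \bm{u}^{\jrt}[i,:]$ cannot create extra multilinear cancellations that artificially lower $\deg(\tilde P)$ below the MQAR threshold. This is not an obstacle because the lower bound is applied \emph{after} the substitution: whatever $\tilde P$ ends up being, it must agree with the $\Task$ function on all $2^{3Nd}$ inputs, and Lemma 5.2 lower-bounds the degree of \emph{any} such polynomial by $2d$. Thus possible cancellations only make the bound tighter, not weaker. The rest of the argument is the same degree comparison as in Theorem F.5 of \cite{arora2024simple}, and the data-independence hypothesis on $\calM$ is precisely what lets us invoke Lemma 5.1 without worrying about the coefficients themselves depending on $\bm{u}^{\jrt}$.
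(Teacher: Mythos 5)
Your proposal is correct and follows essentially the same route the paper takes. The paper's own justification of Theorem~\ref{thm: mqar-1hot-JRT} is a one-sentence remark: Theorem~F.5 of \cite{arora2024simple} rests only on Lemma~5.1 (a $2^L$ degree upper bound on the multilinear polynomial computed by a data-independent $L$-layer \BaseConv) and Lemma~5.2 (a $2d$ degree lower bound for any multilinear polynomial computing $\Task$), and since neither lemma refers to the sequence length, the bound carries over verbatim to the $2\cdot 3N$-length JRT input. Your proposal simply spells out the variable-identification step $\tilde P(\vu) := P(\vu,\vu)$ that makes that claim precise, and correctly observes both that multilinear reduction after the identification can only decrease degree (so $\deg \tilde P \le 2^L$ is preserved) and that Lemma~5.2 lower-bounds the degree of \emph{any} polynomial agreeing with $\Task$ on all Boolean inputs (so cancellations induced by the identification cannot break the bound). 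This is precisely the intended argument, just with the implicit substitution made explicit.
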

\subsubsection{Lower Bounds for MQAR via the Equality (EQ) Problem}
\citep{arora2024simple} also contains lower bounds on the number of layers solving MQAR due to the lower bounds on the {\em equality problem} (EQ), where we define the equality problem (EQ) as checking whether the two encodings are equal: $\bm{u}_1 \equiv \bm{u}_2$ for an input pair $\bm{u}_1, \bm{u}_2$ where each $\bm{u}_i$ is a token drawn from a vocabulary of size $c = |C|$ and embedded in $\{0,1\}^d$. 

We next show that any model with JRT prompts solving MQAR also solves EQ.
\begin{proposition}
    \label{prop: eq_mqar_JRT}
    Any model $M_{\Task}$ that solves MQAR with JRT prompt also solves EQ using the same number of layers.
\end{proposition}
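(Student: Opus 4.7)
The plan is to reduce EQ to MQAR via a simple black-box construction and then observe that the JRT duplication is transparent to the reduction, so no additional layers are introduced. Given an EQ instance $(\bm{u}_1, \bm{u}_2) \in C^2$, I would construct an MQAR instance with $N = 2$ triples:
\[
\bm{u} = \bigl((\bm{t}, \bm{t}, \bm{t}),\, (\bm{u}_1, \bm{v}^\star, \bm{u}_2)\bigr),
\]
where $\bm{t}, \bm{v}^\star \in C$ are distinguished ``pad'' and ``match'' tokens reserved to be disjoint from $\{\bm{u}_1, \bm{u}_2\}$. By the MQAR specification, the model must output $\bm{v}^\star$ at the final query position iff $\bm{q}_1 = \bm{u}_2$ matches some earlier key; the only candidate that can produce a match is $\bm{k}_1 = \bm{u}_1$, so the output is $\bm{v}^\star$ precisely when $\bm{u}_1 \equiv \bm{u}_2$, and a distinguishable non-match token otherwise.

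Next, I would feed the JRT prompt $\bm{u}^{\mathrm{JRT}} = (\bm{u}, \bm{u})$ through $M_{\Task}$. Since the second copy of $\bm{u}$ is identical to the first, the final query $\bm{q}_1$ in the duplicated half has exactly the same earlier-key-matching structure: the only earlier keys are (two copies of) $\bm{t}$ and $\bm{u}_1$, so the verdict is unchanged from the single-pass case. Reading off whether the output at the last query position equals $\bm{v}^\star$ therefore decides EQ. Because $M_{\Task}$ is invoked as a black box with no additional layers prepended or appended, the layer count is preserved exactly.

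The only point requiring care is reserving the two special tokens $\bm{t}, \bm{v}^\star$ so that no spurious key-query match can occur; this is always possible by earmarking two elements of $C$ in advance (and, if $|C|$ is tight, we can enlarge the vocabulary by a constant without affecting the layer count). I do not anticipate a substantive obstacle here: this is essentially the non-JRT EQ-to-MQAR reduction used in \citep{arora2024simple} applied verbatim to the duplicated input, and the only new verification is the trivial observation that the two identical halves of $\bm{u}^{\mathrm{JRT}}$ agree on whether the match condition $\bm{u}_1 \equiv \bm{u}_2$ holds.
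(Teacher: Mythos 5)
Your reduction places the two tokens to be compared in the \emph{same} MQAR triple: $\bm{u}_1$ as $\bm{k}_1$ and $\bm{u}_2$ as $\bm{q}_1$. But the paper's MQAR semantics compare $\bm{q}_i$ only against keys $\bm{k}_j$ with $0 \le j < i$ \emph{strictly}, so $\bm{k}_1$ is not a candidate for $\bm{q}_1$. In your un-duplicated instance $\bigl((\bm{t},\bm{t},\bm{t}), (\bm{u}_1,\bm{v}^\star,\bm{u}_2)\bigr)$, the only key available to $\bm{q}_1 = \bm{u}_2$ is $\bm{k}_0 = \bm{t}$, which by construction never matches, so the MQAR output at position $1$ is ``no match'' regardless of whether $\bm{u}_1 \equiv \bm{u}_2$; the reduction does not decide EQ. Your sentence ``the only candidate that can produce a match is $\bm{k}_1 = \bm{u}_1$'' is exactly where the argument breaks: it presumes the query can be compared against the key sitting in its own triple, which the strict relation $j<i$ forbids (this is likely an artifact of thinking in terms of a causal attention mask over token positions $3i$ and $3i+2$, rather than the formal triple-index condition the paper uses).

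The paper avoids this by placing the two compared tokens in \emph{consecutive} triples, $\bm{u} = \bigl((\bm{u}_1, \bbone, \bm{u}_1), (\bm{u}_2, \bbone, \bm{u}_2)\bigr)$, so that $\bm{q}_1 = \bm{u}_2$ is compared against the earlier key $\bm{k}_0 = \bm{u}_1$ and the output is $\bbone$ iff $\bm{u}_1 \equiv \bm{u}_2$; the JRT prompt is then the duplication of this length-two instance, and no special reserved tokens are needed. Your observation that feeding the JRT-duplicated input to $\textsc{M}_{\Task}$ as a black box introduces no extra layers is correct and matches the paper's argument; the gap is solely in the underlying EQ-to-MQAR instance, which must respect the one-triple offset between the key and the query being compared.
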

\begin{proof}
    If there exists a model $\textsc{M}_{\Task}$ that solves MQAR using $L$ layers with JRT prompt, then for an arbitrary input instance for EQ given by $\bm{u}_1, \bm{u}_2 \in \R^{2 \times d}$,
    we can produce the following input instance for MQAR: $\bm{u}:= \{(\bm{u}_1, \bbone, \bm{u}_1), (\bm{u}_2, \bbone, \bm{u}_2), (\bm{u}_1, \bbone, \bm{u}_1), (\bm{u}_2, \bbone, \bm{u}_2)\}$ and solve EQ using $L$ layers with $\textsc{M}_{\Task}$ returning $\bbone$ iff there is a match.
\end{proof}

Due to \cref{prop: eq_mqar_JRT}, we obtain the following corollary.
\begin{corollary}
\label{cor: eq-mqar-lower-JRT}
    Any lower bound $\overline{L}$ on the number of layers $L$ of \BaseConv\ to solving EQ is also a lower bound on the number of layers required for solving $\Task$ with JRT prompts. 
\end{corollary}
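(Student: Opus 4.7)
The plan is to derive Corollary \ref{cor: eq-mqar-lower-JRT} as an immediate consequence of Proposition \ref{prop: eq_mqar_JRT} via a simple contrapositive/monotonicity argument. The key observation is that Proposition \ref{prop: eq_mqar_JRT} already established a layer-preserving reduction from EQ to MQAR with JRT prompts; the corollary merely repackages this reduction as a transfer of lower bounds.

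Concretely, I would introduce the shorthand $L^*_{\Task\text{-}\mathrm{JRT}}$ for the minimum number of \BaseConv\ layers required to solve MQAR with JRT prompts, and $L^*_{\mathrm{EQ}}$ for the corresponding minimum for EQ. Proposition \ref{prop: eq_mqar_JRT} asserts that any $L$-layer \BaseConv\ model $\textsc{M}_{\Task}$ solving MQAR with JRT prompts can be converted (via the input map $\bm{u}_1, \bm{u}_2 \mapsto \{(\bm{u}_1, \bbone, \bm{u}_1), (\bm{u}_2, \bbone, \bm{u}_2), (\bm{u}_1, \bbone, \bm{u}_1), (\bm{u}_2, \bbone, \bm{u}_2)\}$ and inspecting whether a match occurs) into an $L$-layer \BaseConv\ model solving EQ. Taking $L = L^*_{\Task\text{-}\mathrm{JRT}}$ then yields $L^*_{\mathrm{EQ}} \le L^*_{\Task\text{-}\mathrm{JRT}}$. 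Since by hypothesis $\overline{L} \le L^*_{\mathrm{EQ}}$, chaining the two inequalities gives $\overline{L} \le L^*_{\Task\text{-}\mathrm{JRT}}$, which is exactly the statement of the corollary.

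There is essentially no main obstacle: the entire content of the corollary is packaged into Proposition \ref{prop: eq_mqar_JRT}. The only subtlety worth flagging is that the reduction in that proposition must not introduce any extra \BaseConv\ layers — it is a pure input relabeling on the prompt side and a trivial readout on the output side, so the architecture and depth of the underlying \BaseConv\ model are preserved verbatim. Once that is noted, the proof is a single line and can be presented simply by writing ``Immediate from Proposition \ref{prop: eq_mqar_JRT}.''
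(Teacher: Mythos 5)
Your argument is correct and is essentially the paper's own reasoning: the paper states the corollary as an immediate consequence of Proposition \ref{prop: eq_mqar_JRT} with no further proof, and your contrapositive/monotonicity packaging with $L^*_{\mathrm{EQ}} \le L^*_{\Task\text{-}\mathrm{JRT}}$ makes that one-liner explicit. No gaps.
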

The lower bounds for the EQ problem in \citep{arora2024simple} depends on showing that the polynomial $P$ representing EQ in $p$-hot encoding has $\deg(P)\ge 2p$, which does not depend on the sequence length (Proposition F.5). Since \cref{cor: eq-mqar-lower-JRT} also holds in the JRT setting, we inherit the lower following  lower bound for $\BaseConv$ solving MQAR in the $p$-hot encoding setting, which we recall here for the reader's convenience.
\begin{definition}[$p$-Hot Encoding]
\label{def: phot}
    We define the {\em $p$-hot encoding} to be the collection of embeddings for a token $\bm{x}_t$ with $0\le t <c$ such that we express $t$ in base $\sqrt[p]{c}:(t_0,..,t_{p-1}) \in [0,\sqrt[p]{c})^p$ and represent each $t_i$ as one hot encoding in $\{0,1\}^{\sqrt[p]{c}}$. That is, we take $d = p\cdot\sqrt[p]{c}$.
\end{definition}

\begin{theorem}
\label{thm: mqar-phot}
     A data-independent \BaseConv\ model needs at least $\floor{\log(2p)}$-layers to solve $\Task$ for a JRT prompt $\bm{u}^{\mathrm{JRT}}\in \{0,1\}^{2\cdot 3N\times d}$ for the original input $\bm{u}\in \{0,1\}^{3N\times d}$ in the {$p$-hot encoding} setting, where $d = p\cdot\sqrt[p]{c}$.
\end{theorem}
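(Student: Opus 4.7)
The plan is to combine the reduction established earlier in the excerpt (Proposition~\ref{prop: eq_mqar_JRT} and Corollary~\ref{cor: eq-mqar-lower-JRT}) with the known polynomial-degree lower bound for the equality problem in the $p$-hot encoding. The reduction already tells us that any layer lower bound for solving EQ carries over to MQAR under JRT prompts without loss; so all that remains is to import the $p$-hot EQ lower bound from \citep{arora2024simple} and match it against the degree of the polynomial computed by a \BaseConv\ stack.

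Concretely, I would proceed in three steps. First, invoke Corollary~\ref{cor: eq-mqar-lower-JRT} to reduce the task to lower-bounding the number of \BaseConv\ layers needed to solve EQ on the relevant two-token input in the $p$-hot encoding, with a JRT-style doubled input. Second, appeal to the polynomial-degree characterization of \BaseConv\ (Lemma~5.1 of \citep{arora2024simple}): a data-independent $L$-layer \BaseConv\ model computes a multilinear polynomial of degree at most $2^L$ in the input entries, irrespective of the sequence length, and this characterization is inherited by the JRT prompt since a JRT prompt is still just a fixed-length input matrix. Third, apply Proposition~F.5 of \citep{arora2024simple}, which states that any polynomial representing EQ in the $p$-hot encoding must have degree at least $2p$. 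Combining these yields $2^L \ge 2p$, i.e., $L \ge \log(2p)$, and hence $L \ge \lfloor \log(2p) \rfloor$ as claimed.

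The one subtle point, and what I expect to be the main obstacle, is verifying that the EQ degree lower bound actually transfers to the JRT regime rather than getting weakened by the repetition. The worry is that a \BaseConv\ model could exploit the duplicated copies to compute EQ with a lower-degree polynomial in the JRT input. The resolution is that in the JRT reduction used in Proposition~\ref{prop: eq_mqar_JRT}, the two halves of the prompt encode the \emph{same} pair $(\bm{u}_1,\bm{u}_2)$; restricting the \BaseConv\ polynomial to the subspace where the repeated entries are identified gives back a polynomial in the original $2d$ variables that still correctly computes EQ. Degree can only decrease under such a restriction, so Proposition~F.5 still forces degree $\ge 2p$ on the restricted polynomial, and therefore on the full JRT polynomial as well. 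Once this observation is in place, the bound $L \ge \lfloor \log(2p) \rfloor$ follows immediately from the degree-vs-layers inequality.
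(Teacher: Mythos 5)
Your proposal is correct and takes essentially the same route as the paper: the paper derives Theorem~\ref{thm: mqar-phot} by chaining Corollary~\ref{cor: eq-mqar-lower-JRT} with the degree-$2p$ lower bound for EQ in the $p$-hot encoding from \citep{arora2024simple} and the degree-at-most-$2^L$ property of $L$-layer \BaseConv. The paper does not write out a separate proof (it observes the EQ degree bound ``does not depend on the sequence length'' and so carries over), whereas you make explicit the restriction argument --- identifying the duplicated JRT variables cannot increase the degree of the multilinear representative, so the degree bound on EQ is inherited --- which is a correct and slightly more careful rendering of the same argument.
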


\subsection{Recurrent Models and Set Disjointness}
\label{sec:rec-models-set-disjoint}
In this section, we will provide upper bounds on the class of recurrent models defined in \citep{arora2024simple} solving the set disjointness (SD) problem. First, we recall the definition of recurrent models below.
\begin{definition}[Recurrent Models]
\label{def: reg-model}
A model $\calM$ taking an input $\vu \in \R^{\inputLength \times \modelDim}$, where $\inputLength$ is the input length and $d$ is the model dimension, is termed a \emph{recurrent model} if its $i$-th state, representing the output at location $i$, $\mZ_{\calM}^i \in \R^{\innerD}$, with $\innerD$ denoting the state size, is determined exclusively by the preceding elements of the input $\vu[0 \ldots i-1]$. The state $\mZ_{\mathcal{M}}^i$ represents the accumulated information of the model depending on the inputs up to the $i$-th element, and is distinct from learned parameters that are static with respect to the input sequence.

Specifically, $\mZ_{\calM}^i(\vu) = \phi(\vu[0 \ldots i-1])$, indicating that the state is a function of the input history but not of the entire input sequence simultaneously. Moreover, we can express this as:
\begin{equation}
\label{eq: reg-dep}
\mZ_{\calM}^i(\vu) = f_{\calM}^i(\mZ_{\calM}^{i-1}, \vu[i]),
\end{equation}
for a sequence of functions $\{f_{\calM}^i\}_{i \in [\inputLength]}$, where each function is tailored to evolve the state based on the immediate past state and the current input. 
\end{definition}
\begin{remark}
    Note that \cref{def: reg-model} excludes models that inherently require the entire input sequence for computation at any state, such as those based on non-causal convolutional operations over the full input.
\end{remark}

\begin{remark}
\label{remark:sd-input}
Given sets $A, B \subseteq \{0,1\}^{\sdDim}$, the {\em set disjointness} (SD) problem seeks to check whether $A$ and $B$ are disjoint, that is, $A \cap B = \emptyset$. First, we clarify the format of the input $\bm{u} \in \{0,1\}^{\inputLength \times (\sdDim+1)}$ for the set-disjointness problem with $N = \abs{A} + \abs{B} + 1$. The rows of the input $\bm{u} \in \{0,1\}^{\inputLength \times (\sdDim+1)}$ correspond to elements in $A$ and $B$. That is, $\bm{u}[i, 0:\sdDim-1] \in A\cup B \cup \{\bm{0}^{\sdDim}\}$, where $\{[\bm{0}^{\sdDim}::1]\}$ is a separator element which separates the contiguously placed (in any arbitrary order) elements of each set with the last entry of non-separator rows equal to $0$.
\end{remark}

\begin{theorem}
\label{thm:rec-gen-sd}
    For any recurrent model $\calM$, there exists a function of the input history $\mZ_{\calM}^i(\vu^{\jrt}) = \phi(\vu^{\jrt}[0 \ldots i-1])$ that solves the set disjointness problem with $\mZ_{\calM}^{2N}$ of size $\calO(\sdDim \cdot \min\{|A|, |B|\})$ for the $\jrt$ prompt $\vu^{\jrt} \in \{0,1\}^{2N \times (\sdDim + 1)}$ of the input $\bm{u} \in \{0,1\}^{\inputLength \times (\sdDim+1)}$ for the set-disjointness problem.
\end{theorem}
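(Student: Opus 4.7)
The plan is to construct a recurrent model $\calM$ that uses the first of the two copies in $\vu^{\jrt}$ just to learn $|A|$ and $|B|$ (while tentatively storing the second set), and then uses the second copy to perform the actual intersection check against whichever of $A$, $B$ turns out to be smaller. The JRT prompt is essential: it provides a ``free sweep'' that lets the model defer committing to what it stores until both sizes are known.

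Concretely, I maintain a state of the form $\mZ^i = (c_A, c_B, S, d, \tau)$, where $c_A, c_B$ are counters for the number of $A$- and $B$-tokens seen so far (each $O(\log N)$ bits), $S$ is a buffer holding at most $\min\{|A|,|B|\}$ elements of $\{0,1\}^n$, $d \in \{0,1\}$ is a running disjointness bit, and $\tau$ is a handful of control bits recording which copy (first or second) and which region ($A$ or $B$) the model is currently reading, all of which are detectable from $\vu^{\jrt}[i]$ via the separator convention in Remark \ref{remark:sd-input}. In the first copy, $A$-tokens only advance $c_A$, while $B$-tokens advance $c_B$ and get inserted into $S$; the instant the insertion would cause $c_B > c_A$, the entire buffer $S$ is erased and no further tokens are stored during the remainder of the first copy. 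At the start of the second copy both $|A|$ and $|B|$ are known. If $|B| \le |A|$, then $S$ already contains all of $B$, and during the second copy we compare each $A$-token against $S$ (updating $d$) and ignore $B$-tokens; if instead $|B| > |A|$, then $S$ is empty, so we insert each $A$-token into $S$ during the second copy and compare each $B$-token against $S$.

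For correctness, one checks that in both branches every ordered pair $(a,b) \in A \times B$ is examined exactly once, so at time $2N$ the bit $d$ equals $1$ iff $A \cap B \ne \emptyset$. For the space bound, the counters, the disjointness bit, and the control bits together contribute $O(\log N)$ bits, so the bulk of the state is $S$. By the adaptive rule above, $|S| \le \min\{|A|,|B|\}$ at every time step: in the first copy $|S|$ grows with $B$ but is either capped by $|B|$ when $|B| \le |A|$, or erased the moment it would exceed $|A|$ when $|B| > |A|$; in the second copy $|S|$ is either exactly the already-stored $B$, or an incrementally built copy of $A$ that terminates at size $|A| < |B|$. Since each stored element is an $n$-bit vector, $\|\mZ^{2N}\| = O(n \cdot \min\{|A|, |B|\})$, as claimed. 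The update rule is manifestly of the form $\mZ^i = f^i_{\calM}(\mZ^{i-1}, \vu^{\jrt}[i])$, so this is a valid recurrent model in the sense of Definition \ref{def: reg-model}.

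The main obstacle is keeping $|S|$ within budget during the first copy, before the model knows which of $|A|$, $|B|$ is smaller. This is resolved precisely by the ``grow $S$ until $c_B$ would exceed $c_A$, then erase'' rule, which is only implementable because $A$ is fully counted before the first $B$-token arrives, making $c_A = |A|$ available as a hard stopping threshold already sitting in the state. Once this adaptive rule is in place, the rest of the argument is a direct state-machine verification, and the bound matches (up to the factor $p=2$) the $\Omega(\min\{|A|,|B|\}/p)$ lower bound cited above.
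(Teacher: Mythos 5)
Your construction is correct and follows the same high-level strategy as the paper's: exploit the JRT repetition so that by the time the second copy is read, the recurrent state holds exactly the smaller of $A$, $B$ and can be compared against the larger set. The peak-size analysis is sound — $|S|$ never exceeds $\min\{|A|,|B|\}$, the $O(\log N)$ overhead for counters and control bits is dominated by $O(n)$, and the tracking of copy boundaries at position $N$ is the same implicit assumption the paper makes. The only difference is the trigger used to decide which set to commit to memory: the paper compares the first separator's position against $N/2$ and thereby knows \emph{immediately} whether $A$ or $B$ is smaller (so when $A$ is smaller it simply skips the first $B$ rather than speculatively storing it), whereas you speculatively insert $B$-tokens into $S$ during the first copy and abort the moment the buffer would exceed $c_A=|A|$. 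Your rule is marginally more ``streaming'' (it never references the global constant $N/2$ when deciding which set is smaller, only relative counts), but both mechanisms produce the same invariant $|S|\le\min\{|A|,|B|\}$ at every step and yield the same $O(n\cdot\min\{|A|,|B|\})$ bound; neither buys anything the other lacks.
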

\begin{proof}
    Given a JRT prompt $\bm{u}^{\jrt} \in \{0,1\}^{2N \times (\sdDim+1)}$ corresponding to the input for the set-disjointness problem, for a recurrent model $\calM$, we define the state $\mZ_{\calM}^i$ in Algorithm \ref{alg:sd-rec}.
    
    \begin{algorithm*}[!h]
  \caption{\label{alg:sd-rec} Recurrent Model for Set Disjointness}
  \small
  \begin{algorithmic}[1]
    \Require\ an input $\bm{u}^{\jrt} \in \{0,1\}^{2N \times (\sdDim+1)}$ for the set-disjointness problem
    \Ensure\ state size $\mZ_{\calM}^{2N-1}$.
    \State\label{line1} {\tt firstSeparator} $\leftarrow$ {\tt False}
    \State\label{line2} {\tt secondSeparator} $\leftarrow$ {\tt False}
    \State\label{line3} {\tt smallFirst} $\leftarrow$ {\tt False}
    \For{$i \leftarrow$ $0$ to ${2N-1}$}\label{line4}
        \If{$\bm{u}^{\jrt}[i,\sdDim] = 1$}\label{line5}
            \If{{\tt firstSeparator} = {\tt False}}\label{line6}
                \State {\tt firstSeparator} $\leftarrow$ {\tt True}\label{line7}
                \If{$i \le \floor{\frac{N}{2}}$}\label{line8}
                    \State {\tt smallFirst} $\leftarrow$ {\tt True}\label{line9}
                \EndIf
            \Else\label{line10}
                \State {\tt secondSeparator} $\leftarrow$ {\tt True}\label{line11}
            \EndIf
        \Else\label{line12}
            \If{{\tt firstSeparator} = {\tt True}}\label{line13}
                \If{{\tt smallFirst} = {\tt True}}\label{line14}
                    \If{{\tt secondSeparator} = {\tt False}}\label{line15}
                        \If{$i \ge N$}\label{line16}
                            \State Add $\bm{u}^{\jrt}[i, :]$ to $\mZ_{\calM}^{i}$ \label{line17}
                        \EndIf
                    \Else\label{line18}
                        \If{there exists $j$ s.t. $\bm{u}^{\jrt}[i,:] = \mZ_{\calM}^{i-1}[j, :]$}\label{line19}
                            \State $\mZ_{\calM}^{i-1}[j, \sdDim] = 1$ \label{line20}
                        \EndIf
                    \EndIf
                \Else\label{line21}
                    \If{{\tt secondSeparator} = {\tt False}}\label{line22}
                        \If{$i \le N$}\label{line23}
                            \State Add $\bm{u}^{\jrt}[i, :]$ to $\mZ_{\calM}^{i}$ \label{line24}
                        \Else\label{line25}
                            \If{there exists $j$ s.t. $\bm{u}^{\jrt}[i,:] = \mZ_{\calM}^{i-1}[j, :]$}\label{line26}
                                \State $\mZ_{\calM}^{i-1}[j, \sdDim] = 1$ \label{line27}
                            \EndIf
                        \EndIf
                    \EndIf
                \EndIf
            \EndIf
        \EndIf
    \EndFor
    \For{all $j$ s.t. $\mZ_{\calM}^{i-1}[j, \sdDim] = 1$}\label{line28}
        \State \Return $\mZ_{\calM}^{i-1}[j, 0:\sdDim-1]$.\label{line29}
    \EndFor
  \end{algorithmic}
\end{algorithm*}
    Semantically, we take a JRT input $\bm{u}^{\jrt} \in \{0,1\}^{2N \times (\sdDim+1)}$ for the set-disjointness problem, and find the first separator (lines \ref{line5} to \ref{line9}). If the index $i$ of the first separator is less than or equal to $\floor{\frac{N}{2}}$ (line \ref{line8}), then we know that the smaller set is placed before the larger set. Otherwise, the smaller set is placed later (see Figure \ref{fig: rec-sd}).

    \begin{figure*}[!h]
        \centering

            \tikzset{every picture/.style={line width=0.5pt}} 
            
            \begin{tikzpicture}[x=0.75pt,y=0.75pt,yscale=-0.85,xscale=0.85]
            
            \draw  [fill={rgb, 255:red, 216; green, 210; blue, 210 }  ,fill opacity=1 ] (21,52) -- (111,52) -- (111,86.92) -- (21,86.92) -- cycle ;
            \draw    (111,52) -- (111,86) ;
            \draw    (140,52) -- (140,86) ;
            \draw    (439,53) -- (439,87) ;
            \draw    (468,53) -- (468,87) ;
            \draw    (203.67,35.83) -- (346.67,36.81) ;
            \draw [shift={(349.67,36.83)}, rotate = 180.39] [fill={rgb, 255:red, 0; green, 0; blue, 0 }  ][line width=0.08]  [draw opacity=0] (8.93,-4.29) -- (0,0) -- (8.93,4.29) -- cycle    ;
            \draw    (26.67,35.83) -- (167.67,35.83) ;
            \draw [shift={(23.67,35.83)}, rotate = 0] [fill={rgb, 255:red, 0; green, 0; blue, 0 }  ][line width=0.08]  [draw opacity=0] (8.93,-4.29) -- (0,0) -- (8.93,4.29) -- cycle    ;
            \draw    (537.67,37.83) -- (673,38) ;
            \draw [shift={(676,38)}, rotate = 180.07] [fill={rgb, 255:red, 0; green, 0; blue, 0 }  ][line width=0.08]  [draw opacity=0] (8.93,-4.29) -- (0,0) -- (8.93,4.29) -- cycle    ;
            \draw    (352.67,36.83) -- (506.67,36.83) ;
            \draw [shift={(349.67,36.83)}, rotate = 0] [fill={rgb, 255:red, 0; green, 0; blue, 0 }  ][line width=0.08]  [draw opacity=0] (8.93,-4.29) -- (0,0) -- (8.93,4.29) -- cycle    ;
            \draw  [fill={rgb, 255:red, 239; green, 202; blue, 140 }  ,fill opacity=1 ] (140,52) -- (348.67,52) -- (348.67,86.92) -- (140,86.92) -- cycle ;
            \draw  [fill={rgb, 255:red, 216; green, 210; blue, 210 }  ,fill opacity=1 ] (349,52.08) -- (439,52.08) -- (439,87) -- (349,87) -- cycle ;
            \draw  [fill={rgb, 255:red, 239; green, 202; blue, 140 }  ,fill opacity=1 ] (468,53) -- (676.67,53) -- (676.67,87.92) -- (468,87.92) -- cycle ;
            \draw  [fill={rgb, 255:red, 216; green, 210; blue, 210 }  ,fill opacity=1 ] (256.33,178.08) -- (346.33,178.08) -- (346.33,213) -- (256.33,213) -- cycle ;
            \draw    (203,161.83) -- (346,162.81) ;
            \draw [shift={(349,162.83)}, rotate = 180.39] [fill={rgb, 255:red, 0; green, 0; blue, 0 }  ][line width=0.08]  [draw opacity=0] (8.93,-4.29) -- (0,0) -- (8.93,4.29) -- cycle    ;
            \draw    (26,161.83) -- (167,161.83) ;
            \draw [shift={(23,161.83)}, rotate = 0] [fill={rgb, 255:red, 0; green, 0; blue, 0 }  ][line width=0.08]  [draw opacity=0] (8.93,-4.29) -- (0,0) -- (8.93,4.29) -- cycle    ;
            \draw    (537,163.83) -- (672.33,164) ;
            \draw [shift={(675.33,164)}, rotate = 180.07] [fill={rgb, 255:red, 0; green, 0; blue, 0 }  ][line width=0.08]  [draw opacity=0] (8.93,-4.29) -- (0,0) -- (8.93,4.29) -- cycle    ;
            \draw    (352,162.83) -- (506,162.83) ;
            \draw [shift={(349,162.83)}, rotate = 0] [fill={rgb, 255:red, 0; green, 0; blue, 0 }  ][line width=0.08]  [draw opacity=0] (8.93,-4.29) -- (0,0) -- (8.93,4.29) -- cycle    ;
            \draw  [fill={rgb, 255:red, 239; green, 202; blue, 140 }  ,fill opacity=1 ] (22.33,178) -- (231,178) -- (231,212.92) -- (22.33,212.92) -- cycle ;
            \draw  [fill={rgb, 255:red, 216; green, 210; blue, 210 }  ,fill opacity=1 ] (584.33,178.08) -- (675.67,178.08) -- (675.67,211.83) -- (584.33,211.83) -- cycle ;
            \draw  [fill={rgb, 255:red, 239; green, 202; blue, 140 }  ,fill opacity=1 ] (346.33,178.08) -- (555,178.08) -- (555,213) -- (346.33,213) -- cycle ;
            
            \draw (179,28.4) node [anchor=north west][inner sep=0.75pt]    {$N$};
            \draw (514,29.4) node [anchor=north west][inner sep=0.75pt]    {$N$};
            \draw (178.33,154.4) node [anchor=north west][inner sep=0.75pt]    {$N$};
            \draw (513.33,155.4) node [anchor=north west][inner sep=0.75pt]    {$N$};
            \draw (573.78,179.63) node [anchor=north west][inner sep=0.75pt]  [font=\scriptsize,rotate=-90.93]  {$\mathbf{0}^{n} ::1$};
            \draw (246.78,179.94) node [anchor=north west][inner sep=0.75pt]  [font=\scriptsize,rotate=-90.93]  {$\mathbf{0}^{n} ::1$};
            \draw (128.78,52.94) node [anchor=north west][inner sep=0.75pt]  [font=\scriptsize,rotate=-90.93]  {$\mathbf{0}^{n} ::1$};
            \draw (456.78,52.94) node [anchor=north west][inner sep=0.75pt]  [font=\scriptsize,rotate=-90.93]  {$\mathbf{0}^{n} ::1$};

            \end{tikzpicture}
            \newline
        \caption{Placement of the smaller set is determined by when we first encounter the separator.}
        \label{fig: rec-sd}
    \end{figure*}
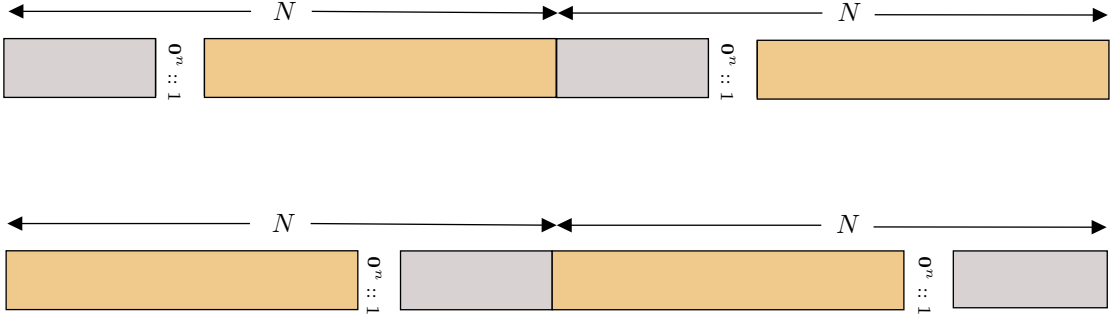
    
    Either way, we want to store the smaller set and compare it against the larger set for intersections. To this end, if the smaller set comes first (line \ref{line14}), then we continue until the beginning of the repeated input (line \ref{line16}) and collect the smaller set (line \ref{line17}), which we then use after we encounter the second separator (lines \ref{line19} to \ref{line20}) to compare against the larger set. If the smaller set comes second (lines \ref{line21} to \ref{line27}), then after the first separator, we collect the smaller set (lines \ref{line23} to \ref{line24}) and compare it against the larger set that comes right after (lines \ref{line25} to \ref{line27}). 
    
    For comparison (lines \ref{line28} to \ref{line29}), we use the separator flag at the end. Recall that non-separator elements of the input have $0$ in the separator flag index, and thus, so do the elements from the smaller set collected in the state $\bm{Z}_{\calM}$. When comparing against the elements from the larger set, we simply set the flag to $1$ for an element that is in the intersection of two sets. 

    Now, we examine the space requirement for the state $\bm{Z}_{\calM}$ of the model $\calM$. Note that we only add an element to $\bm{Z}_{\calM}$ in lines \ref{line17} and \ref{line24}. In both cases, the elements are from the smaller set, and thus, $|\bm{Z}_{\calM}| = \min\{|A|, |B|\}$. Moreover, each element in $A$ and $B$ is of size $\sdDim$, and thus, we can conclude that the model $\calM$ with state $\bm{Z}_{\calM}$ can solve the set-disjointness problem with JRT input in $\mathcal{O}(\sdDim \cdot \min\{|A|, |B|\})$.
\end{proof}

\subsection{Based Solving SD}
\label{app:theory-based}
In this section, we will show that \Based\ can solve the set disjointness problem with $\jrt$ inputs. Specifically, this section implements Algorithm \ref{alg:sd-rec} in the \Based\ architecture. Recall here that the \Based\ model combines two layer types: \BaseConv\ (see \cref{def:baseconv}) and \LinAtt\ defined below.
\begin{definition}[Linear Attention with Kernels]
    Given an input sequence $\vu \in \R^{\inputLength \times \modelDim},$ where $\inputLength$ is the sequence length and $\modelDim$ is the model dimension, kernel projections\footnote{By kernel projections of a matrix $\vu \in \R^{m \times n},$ we mean applying some kernel map $\phi: \R^{N \times d} \to \R^{N \times \featureDim}$ to each row of $\vu$.} $\ProjectionN_q, \ProjectionN_k \in \R^{\modelDim \times \featureDim}$, $ \ProjectionN_v \in \R^{\modelDim \times \modelDim}$, where $\featureDim$ is the feature dimension, the \LinAtt\ layer computes the following:
\begin{equation}
\label{eq: linatt}
    \bm{z}^{\LinAtt} := \paren{{\mQ}\ {\mK}^\top} \mV \in \R^{\inputLength \times \modelDim},
\end{equation}
where $\mQ := \ProjectionN_q(\vu), \mK := \ProjectionN_k(\vu), \mV := \ProjectionN_v(\vu)$.
\end{definition}

\subsubsection{SD with \LinAtt}
We first show that with appropriate placement of the two sets, we can solve the set disjointness problem using a class of kernel maps defined below.
\begin{definition}[$IP$-Kernel]
\label{def:ip-kernel}
    We define the {\em IP-Kernel} to be the kernel map $\phi_{\epsilon, \featureDim}: \R^d \to \R^\featureDim$ that takes elements from $[c]$ to $\R^\featureDim$ so that, for any $x,y \in [c]$, we have 
    \[
    {\angles{\phi_{\epsilon, \featureDim}(x),\phi_{\epsilon, \featureDim}(y)}} = 1 \text{ if }x = y\text{ and }\abs{\angles{\phi_{\epsilon, \featureDim}(x),\phi_{\epsilon, \featureDim}(y)}} \le
        \epsilon \text{ otherwise}.
    \]
    That is, an IP-kernel projects elements from the universal set $[c]$ so that the inner products are approximately orthogonal. Note that the feature dimension $\featureDim$ is dependent on the tolerance $\epsilon$.
\end{definition}

We now show that if there exists an IP kernel with small enough $\epsilon$, then it can be used to solve the set-disjointness problem with a {\tt Linear Attention} layer followed by an MLP layer.
\begin{proposition}
\label{prop: lin-att-sd}
    Given an input $\bm{u}\in \R^{N \times d}$ encoding the input $(A, B)$ to the set-disjointness problem (SD) on sets $A, B \subseteq [c]$, there exists a {\tt Linear Attention} (+ MLP) layer with state space $O(df)$ that solves the set disjointness problem for $\vu\in \R^{N \times d}$ with the IP kernel $\phi_{\epsilon, \featureDim}$ applied on $\mQ, \mK$ for $\epsilon = \frac{1}{3|A|}$.\footnote{Our notion of `solves' is a bit non-standard so we clarify it here. If $\vz\in\R^{N \times d}$ is the output then it encodes the result as follows. If the $i$th element in $B$ appears in $A$ then $\vz[|A|+i,:]$ has all entries  in $\brac{\frac 13,1}$, otherwise it is $\bm{0}^d$. If we want a single value as an answer (since SD has a Boolean output) we can apply $O(1)$ \BaseConv\ layers on $\vz$ to sum up all the values in the last $|B|$ rows of $\vz$. Then if $A\cap B\ne\emptyset$ then this value is at least $\frac d3$, otherwise it is $0$.}
\end{proposition}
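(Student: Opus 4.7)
The plan is to arrange a single \LinAtt\ layer so that each row in the $B$-slot computes, via the IP-kernel inner products, a noisy count of its matches against the $A$-elements, and then use one MLP layer to threshold that count. Concretely, assuming the input $\bm{u}$ encodes each row with its token together with a position-type flag in the style of \Cref{remark:sd-input}, I would choose
\begin{align*}
\ProjectionN_q(\bm{u}[j,:]) &= \phi_{\epsilon,f}(\mathrm{token}_j)\cdot \mathbf{1}[j\in B], \\
\ProjectionN_k(\bm{u}[j,:]) &= \phi_{\epsilon,f}(\mathrm{token}_j)\cdot \mathbf{1}[j\in A], \\
\ProjectionN_v(\bm{u}[j,:]) &= \bm{1}^{d}\cdot \mathbf{1}[j\in A],
\end{align*}
realized as linear projections that mask by the position flag and (for $\mQ, \mK$) compose the IP-kernel $\phi_{\epsilon,f}$ of \Cref{def:ip-kernel}.

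Because the queries, keys, and values at separator and $B$-rows are zero whenever they should not contribute, causal \LinAtt\ at row $|A|+i$ collapses to
\begin{equation*}
\bm{z}^{\LinAtt}[|A|+i,:] \;=\; \bm{1}^{d}\cdot \sum_{j\in A} \angles{\phi_{\epsilon,f}(b_i),\phi_{\epsilon,f}(a_j)}.
\end{equation*}
With $\epsilon = \tfrac{1}{3|A|}$ the IP-kernel guarantee gives the key dichotomy: if $b_i\in A$, exactly one summand equals $1$ and the other $|A|-1$ summands each lie in $[-\epsilon,\epsilon]$, so the scalar falls in $[2/3,\,4/3]$; if $b_i\notin A$, every summand lies in $[-\epsilon,\epsilon]$ and the scalar falls in $[-1/3,\,1/3]$. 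A subsequent MLP applying the entrywise map $x\mapsto \mathrm{ReLU}(x-1/3)$ then sends the no-match case to exactly $\bm{0}^d$ and the match case into $[1/3,1]^d$, which is the output specified in the footnote. The recurrent state carried by the layer is $\mK^\top \mV \in \R^{f\times d}$, yielding the claimed $O(df)$ bound.

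The only delicate step I anticipate is implementing the ``$j\in A$'' / ``$j\in B$'' gating inside a kernel projection, since the raw input of \Cref{remark:sd-input} only provides a separator-flag bit and not the membership indicator itself. I would handle this by augmenting the input with an explicit ``$j\in A$'' coordinate, which is just a prefix sum of the separator flag (equivalently, the strict-prefix cumulative sum is $0$ on $A$ and $1$ on separator/$B$); this can be folded directly into the input encoding or produced by a constant-depth \BaseConv\ preamble. Once this indicator is available, multiplying the token coordinates by it before applying $\phi_{\epsilon,f}$ realizes the gated projections above, and the rest of the argument is just the direct inner-product calculation; no special handling of the separator row is needed because its query, key, and value have all been forced to zero by the flag.
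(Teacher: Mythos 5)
Your proof takes essentially the same route as the paper's: feed every row through the IP kernel, use the value projection to restrict the attention sum to the $|A|$ rows holding set $A$, read off a noisy match count at each $B$-row, and threshold with a single $\mathrm{ReLU}(x-\tfrac13)$ map. The numeric dichotomy (match count in $[2/3,4/3]$ vs.\ $\le 1/3$ under $\epsilon=\tfrac{1}{3|A|}$) and the choice of $\mV$ agree with the paper exactly. The one genuine difference is that you additionally gate $\mQ$ and $\mK$ by the membership indicator; the paper applies $\phi_{\epsilon,f}$ uniformly to every row and lets $\mV[k,:]=\mathbf{1}[k<|A|]$ alone do the selection, which is simpler and also conforms more closely to the proposition's phrasing ``the IP kernel applied on $\mQ,\mK$.''

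That extra gating introduces a small slip you should fix. You justify the ``collapse'' by asserting the queries and keys at separator/$B$-rows ``are zero whenever they should not contribute,'' implemented as ``multiplying the token coordinates by [the indicator] before applying $\phi_{\epsilon,f}$.'' But $\phi_{\epsilon,f}(\bm{0})$ is not the zero vector for the realizations the paper gives (the exponential kernel has $\langle\phi(0),\phi(0)\rangle = e^{d}$ and the randomized kernel maps $0$ to a random sign vector), so this ordering does \emph{not} zero the projections. The computation still comes out right, but only because your $\mV$ is identically zero on non-$A$ rows, which kills those terms regardless of what $\mK$ holds there; the intermediate claim about zeroed keys/queries is false as stated. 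Either apply the indicator gate \emph{after} the feature map, $\ProjectionN_k(\bm{u}[j,:]) = \mathbf{1}[j\in A]\cdot\phi_{\epsilon,f}(\mathrm{token}_j)$, or — cleaner, and what the paper does — drop the $\mQ,\mK$ gating altogether and route the argument entirely through the $\mV$ mask, noting that the output on the first $|A|+1$ rows is irrelevant by the footnote's convention.
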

\begin{proof}
We first define the keys and queries along with the values for the {\tt Linear Attention} layer as follows:
\[
\mQ[i,:] = \mK[i,:] = \phi_{\epsilon, f}(\vu[i,:]) \text{ and } \mV[i,j] := \begin{cases}
    1 & \text{if } i < |A| \\
    0 & \text{otherwise}.
\end{cases}
\]
Note that $\mQ,\mK\in\R^{N\times f}$ and $\mV\in\R^{N\times d}$.
    \begin{align*}
        \paren{{\mQ}\ {\mK}^\top}[i,j] 
        &:= {\mQ}[i,:]{\mK}^\top[:,j] \\
        &= \angles{{\mQ}[i,:],{\mK}[j,:]} \\
        &= \angles{\phi_{\epsilon, f}(\vu[i,:]), \phi_{\epsilon, f}(\vu[j,:])}
    \end{align*}
Next, the key-query product yields the following
    \begin{align*}
        \bm{z}^{\LinAtt}[i,j] 
        &:= \paren{{\mQ}\ {\mK}^\top}[i,:] \mV[:,j] \\
        &= \sum_{k=0}^{N-1} \paren{{\mQ}\ {\mK}^\top}[i,k] \cdot \mV[k,j] \\
        &= \sum_{k=0}^{N-1} \angles{\phi(\vu[i,:]), \phi(\vu[k,:])} \cdot \mV[k,j] \\
        &= \sum_{k < |A|} \angles{\phi_{\epsilon, f}(\vu[i,:]), \phi_{\epsilon, f}(\vu[k,:])} \\
        &=: \rho_i,
    \end{align*}
    where the second-last equality follows from the definition of $\mV$ and we can specify $\rho_i$ as follows:
    \begin{align}
    \rho_i = 1 \pm \epsilon\cdot |A|\text{ if there exists } k \in [0 \cdots |A|-1] \text{ s.t. }\vu[k,:] \equiv \vu[i,:],\text{ and otherwise, }
    \rho_i \le \epsilon|A|.\label{eq: rho}
    \end{align}
For the MLP layer, we define the following parameters (see \Cref{def:mlp} for notation):
\[
\mW^1=\mI_{d\times d},\quad \mB^1_{\mathrm{MLP}} := -\frac{1}{3} \bm{1}_{N \times d},\quad \mW^2_{\mathrm{MLP}} = \mI_{d \times d}, \quad\mB^2_{\mathrm{MLP}} = \bm{0}_{N\times d}
\]
Next, we note that for $0\le \ell<N$ and $0\le j<d$:
\begin{align*}
\bm{y}[\ell, j] &:= \paren{\bm{z}^{\LinAtt}\mW^1_{\mathrm{MLP}} + \mB^1_{\mathrm{MLP}}}[\ell, j]\\
&= \paren{\bm{z}^{\LinAtt} -\frac{1}{3} \bm{1}_{|B| \times d}}[\ell, j]\\
&= \paren{\rho_{\ell} - \frac{1}{3}}. 
\end{align*}
We now use the fact that $\epsilon \le \frac{1}{3|A|}$ to get bounds on the above. To this end, for $0\le \ell<N$, due to \eqref{eq: rho}, if there exists $k \in [0 \cdots |A|-1] \text{ such that }\vu[k,:] \equiv \vu[\ell,:]$, we have
\[
\bm{y}[\ell, j] = \paren{\rho_{\ell} - \frac{1}{3}} := \paren{\paren{1 \pm \epsilon \cdot |A|} -\frac{1}{3}} \in \brac{\frac{2}{3}, \frac{4}{3}} - \frac{1}{3} = \brac{\frac{1}{3}, 1}
\]
Otherwise, if there is no match, then we have
\[
\bm{y}[\ell, j] = \paren{\rho_{\ell} - \frac{1}{3}} \le \epsilon \cdot |A| - \frac{1}{3} \le \frac{1}{3} - \frac{1}{3} \le 0.
\]
We then get the final output as 
\[
\bm{z} := \mathrm{ReLU}(\bm{y})\mW^2_{\mathrm{MLP}} + \mB^2_{\mathrm{MLP}} = \mathrm{ReLU}(\bm{y}),
\]
which reduces to
\[
\bm{z}[\ell, j] \in \brac{\frac{1}{3}, 1}\text{ if there exists } k \in [0 \cdots |A|-1] \text{ such that }\vu[k,:] \equiv \vu[i,:],\text{ and $0$ otherwise}.
\]
Therefore, the last $|B|$ rows of the output $\bm{z}$ will have non-zero values if and only if $A \cap B  \neq \phi$. Finally, the claim on $O(df)$ space follows from the well-known recurrent view of \LinAtt\ (see \eqref{eq:linear_attention_recurrent_3}).\footnote{To incorporate the MLP part, note that as soon as each row of $\vz^\LinAtt$ is generated, we can generate the output of the corresponding row in $\MLP(\vz^\LinAtt)$ with $O(d)$ space by noting that MLP operates independently on each row of its input.}
\end{proof}
\subsubsection{Realization of IP Kernels}
In this section, we will provide some instances of realizing the IP kernels from Definition \ref{def:ip-kernel}.
\paragraph{Exponential  Kernels.}
The first IP-kernel that we define is the exponential kernel $\phi^{\exp}: \R^d \to \R^f$ such that for any $x, y\in [c]$, we have
\[
\angles{\phi(x), \phi(y)} = \exp\paren{\angles{x,y}},
\]
where $x$ and $y$ are encoding of the corresponding elements of $[c]$ in $\{-1,1\}^{d}, d = O(\log(c))$ with large enough distance\footnote{Specifically, we will need to use well-known construction of Binary codes with constant rate and constant relative distance~\cite{guruswami2019essential}.}. If $x = y$, we have
\[
\angles{\phi(x), \phi(y)} = \angles{\phi(x), \phi(x)} 
\]
\[
= \exp\paren{\angles{x,x}} = \exp\paren{\sum_{i \in [d]} x_i^2} = \exp\paren{\sum_{i \in [d]} 1} = \exp(d). 
\]
Next, if $x \neq y$, we instead have
\[
0 < \angles{\phi(x), \phi(y)} = \exp\paren{\angles{x,y}} \le \exp\paren{\gamma\cdot d}
\]
for some $\gamma < 1$ as the code has constant relative distance. Here, we want the match $\exp(d)$ to be large enough. That is, we want
\[
\frac{\exp(d)}{\exp(\gamma\cdot d)} \gg c
\]
So, we want to pick $d$ large enough so that 
\[
(1-\gamma)\cdot d \gg \ln{c}.
\]
\paragraph{Data-Dependent Kernels.}
Here, we define the kernel $\phi$ based on the smaller set $A$. We start by letting $d: = |A| + \log{c}$ so that we define the embeddings as
\begin{equation}
\label{eq: data-dep-kernels}
\begin{aligned}
\phi: [c] &\to \R^{|A|+\log{c}} \\
A \ni a &\mapsto \begin{bmatrix}
    \ve_a & \bm{0}^{\log{c}}
\end{bmatrix} \\
A \not\ni b &\mapsto \begin{bmatrix}\bm{0}^{|A|}& \mB_{b}
\end{bmatrix}    
\end{aligned}
\end{equation}
where $\ve_a \in \{0,1\}^{|A|}$ is the $1$-hot encoding of the element $a$ in $A$ and $\mB_{b}$ is the natural binary encoding in $[c]$ on the element $b$. Using this kernel $\phi$, we achieve orthogonality:
\[
\angles{\phi(x),\phi(y)}=\delta_{xy}.
\]
That is, we have the tolerance $\epsilon = 0$ with feature dimension $f = |A| + \log_2{c}$.

\paragraph{Randomized Kernels.}
We can also define a random kernel map 
\begin{equation}
    \label{eq: rand-kernels}
    \phi: [c] \to \frac{1}{\sqrt{f}}[-1,1]^f.
\end{equation}
That is, for each $x \in [c]$, we pick a random vector in $\{-1,1\}^f$ and normalize it by dividing by $\sqrt{f}$. Here, it is easy to see that for every $x \in [c]$, we have
\[
\angles{\phi(x),\phi(x)} = \frac{1}{f}\sum_{i \in [f]} 1 = 1.
\]
Now, for every $x \neq y$, we can apply known concentration inequalities on Rademacher random variables to get
\[
\Pr\brac{\angles{\phi(x),\phi(y)} > \frac{t}{\sqrt{f}}} \le e^{\frac{-t^2}{2}}.
\]
We then pick $t  = O(\sqrt{\log{c}})$ so that over all $c^2$ pairs, we have
\[
\Pr\brac{ \angles{\phi(x),\phi(y)} > \frac{O(\sqrt{\log{c}})}{\sqrt{f}}} < \frac{1}{100c^2}.
\]
Then with a union bound on all $c^2$ pairs, with high probability, we get that $\phi$ has $\epsilon = \frac{t}{\sqrt{f}}$. We then want the threshold to satisfy the following:
\[
t/\sqrt{f} < \frac{1}{3|A|}  \implies f = \Omega(|A|^2\log{c}).
\]
That is, for $\epsilon = \frac{1}{3|A|}$,  $f = \Theta(\min\{|A|, |B|\}^2\log{c})$ suffices.

\begin{remark}[Practical Justification]
    Empirically, prior works shows a variety of kernels that are competitive with softmax attention quality while using a small amount of space. For instance, Zhang et al. \cite{zhang2024hedgehog} show that either training MLP projections to mimic softmax attention weights or using a $2^{\mathrm{nd}}$-order Taylor approximation to the softmax-exponential function are two effective kernel function choices. The $2^{\mathrm{nd}}$-order polynomial is only a high fidelity approximation within a small band of real values, however empirically results in Arora et al. \cite{arora2024simple} suggest that the normalized query-key dot products often fall within this range, resulting in competitive quality with softmax attention. Arora et al. \cite{arora2024simple}, Chen et al. \cite{chen2021scatterbrain}, and others further suggest that combining efficient sparse plus low-rank attentions (e.g., linear attention plus dense, local sliding window attention) further diminishes quality gaps versus full attention.
\end{remark}

\subsubsection{Shifts with \BaseConv}
Next, we will show that we can use \BaseConv\  layers to move the smaller set to the start of the sequence.
 First, based on whether the smaller set is at the start or not, we need to define separate convolution kernels based on the input. To this end, we use the following \BaseConv\ model to derive these kernels.
\begin{lemma}
\label{lem:bc-ker}
    There exists $\coyoteTuple{2 N}{O(1)}{(n+1)}{\paren{2 N + \frac{N}{2}}}{(n+1)}$ model that takes in a $\jrt$ prompt $\vu^{\jrt} \in \R^{2N \times (\sdDim + 1)}$ of the input $\bm{u} \in \R^{\inputLength \times (\sdDim+1)}$ for the set-disjointness (SD) problem $(A, B) \subseteq \{0,1\}^n$ and outputs the kernel $\vh_{\mathrm{shift}}$ that shifts the input $\vu^{\jrt}$ to get the smaller set at the start of the sequence, where
    \begin{equation}
        \label{eq: shift-kernel}
        \vh_{\mathrm{shift}}(X) := \begin{cases}
            X^{|A|+1} &\text{if } |A| \ge |B| \\
            1 &\text{otherwise.}
        \end{cases}
    \end{equation}
\end{lemma}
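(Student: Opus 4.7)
My plan is to exploit the fact that the position of the first entry equal to $1$ in the last (separator) column of $\vu^{\jrt}$ is exactly $|A|$, so the entire decision reduces to reading this position and comparing it against $\lceil N/2 \rceil$. Using a single \BaseConv\ layer with a fixed projection $\mW^B$ that picks out the last coordinate, I obtain a sequence-length vector that is $1$ at each separator and $0$ elsewhere. A second \BaseConv\ layer, whose convolutional filter is the input-independent kernel $[1,-1,-1,\dots]$ composed with a ReLU-style clip inside the subsequent MLP, zeros out every separator except the earliest one, leaving a one-hot tensor at position $|A|$.

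Next, I would compare $|A|$ against $\lceil N/2 \rceil$ by convolving this one-hot against a fixed mask that is $1$ on the first $\lceil N/2 \rceil$ positions and $0$ afterwards, then reading a fixed coordinate of the result: this yields a single bit $b = \mathbf{1}[|A| \ge \lceil N/2\rceil]$. The bit $b$ can be broadcast to every position of the inner state via a further convolution with an all-ones filter. In parallel, I precompute both candidate kernels inside the $(2N + N/2) \times (n+1)$ inner buffer: the first $2N$ rows hold the one-hot at position $|A|+1$ (which represents $X^{|A|+1}$), obtained by shifting the ``first-separator'' indicator by one step using a constant one-step shift convolution; an additional $N/2$-row scratch block stores the constant $1$ kernel, i.e.\ a one-hot at position $0$ followed by zeros, together with the broadcast copy of $b$.

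A final \BaseConv\ layer applies a Hadamard gate driven by $b$ and $1-b$ against the two staged candidates and copies the winner into the output head, which realizes $\vh_{\mathrm{shift}}$ exactly as defined in \eqref{eq: shift-kernel}.

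The main obstacle will be performing this conditional selection cleanly within $O(1)$ \BaseConv\ depth, since the gating requires the comparison bit $b$ to be aligned position-wise with a kernel of length $2N$. This is exactly why the inner sequence length $2N + N/2$ appears in the statement: the extra $N/2$ slots give the scratch space needed to broadcast $b$ and to stage the alternate constant kernel, while the main $2N$ block carries the input-dependent candidate. Every other ingredient --- projecting onto a single coordinate, extracting the first occurrence of a value, thresholded comparison against a constant, and a single rigid shift --- is a constant-depth \BaseConv\ primitive already available from the constructions in \citep{arora2024simple}, so composing an $O(1)$-deep pipeline of them yields the claimed model.
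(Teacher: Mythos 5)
Your proposal is a genuinely different route from the paper's. The paper never computes an explicit comparison bit $b$: instead it first pads $\vu^{\jrt}$ into the $(2N + N/2)$-length inner buffer by inserting an $N/2$-row zero block between the first half and the rest of the sequence, and then pipes the separator column through the ${\tt remember}$ primitive with a cumulative-sum followed by a fixed shift. The algebra of cumsum-and-shift directly yields a one-hot at position $0$ when $|A|\le|B|$ and at position $|A|$ otherwise, with no conditional branching or gating at all --- the selection is implicit in the arithmetic. Your plan instead explicitly materializes the predicate $b=\mathbf{1}[|A|\ge\lceil N/2\rceil]$, broadcasts it, pre-stages both candidate kernels, and finishes with a gated selection. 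Both are $O(1)$-depth designs, so the end result is the same; the paper's version avoids the alignment bookkeeping you flag as the ``main obstacle,'' while yours makes the case split legible. Your reading of the role of the $+N/2$ slack is also different from the paper's: there it is used to create the zero block that makes the cumsum/shift behave cleanly, not as scratch space for $b$.

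One point to tighten: you invoke ``a ReLU-style clip inside the subsequent MLP'' to keep only the first separator, but the object being constructed is a $\coyoteTuple{2N}{O(1)}{(n+1)}{(2N+N/2)}{(n+1)}$-\BaseConv\ model in the sense of \Cref{def: gated-conv}, which has no ReLU or MLP; it is a pure gated-convolution stack and computes polynomials in the input. In fact this clip is avoidable: in $\vu^{\jrt}$ the only separators live at positions $|A|$ and $N+|A|$, so if you restrict the kernel-building pipeline to the first $N$ rows (or rearrange with ${\tt remember}$ as the paper does) there is exactly one separator in scope and no ``first occurrence'' extraction is needed. With that detour removed, the remaining ingredients (coordinate projection, a fixed masking convolution against an all-ones prefix of length $N/2$, a one-step shift, a broadcast via an all-ones filter, and the two gated products plus a sum) are all standard \BaseConv\ primitives from \cite{arora2024simple}, so the construction closes in $O(1)$ layers as claimed.
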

\begin{proof}
    Following the proof of Proposition \ref{prop: lin-att-sd}, we know that it suffices to find the location of the separator to determine the location of the smaller set. More specifically, if the separator is within $\brac{0, \frac{N}{2}-1}$ row index range, then we know that the smaller set is at the start, and the kernel being generated is the identity. Otherwise, we generate the kernel $X^{|A|+1}$ which will be used in the proof of \Cref{prop: baseconv-sd-shift}.

    We first increase the inner dimension of the JRT input $\vu^{\jrt} \in \R^{2N \times (n+1)}$ to $\vu^{JRT}_{\mathrm{inner}} \in \R^{\paren{2 N + \frac{N}{2}} \times (n+1)}$ so that we introduce a zero-block between the first seperator and the start of set $B$. That is, we have
    \[
    \vu^{\jrt}_{\mathrm{inner}}[i,:] = \begin{cases}
        \vu^{\jrt}[i,:] &\text{if } i < \frac{N}{2} \\
        \bm{0}^{n+1} &\text{if } \frac{N}{2} \le i < N \\
        \vu^{\jrt}[i-\frac{N}{2},:] &\text{if } i \ge N.
    \end{cases}
    \]
    We can achieve this by simply using the {\em remembering primitive} from \citep[Definition F.15, Proposition F.13]{arora2024simple} using a $\coyoteTuple{\paren{2 N + \frac{N}{2}}}{8}{(n+1)}{\paren{2 N + \frac{N}{2}}}{(n+1)}$ to remember $\vu^{\jrt}[\frac{N}{2}:2N-1,:]$ while applying the identity kernel to preserve $\vu^{\jrt}[0:\frac{N}{2}-1,:]$. 

    We again apply the {\em remembering primitive} from \citep[Definition F.15, Proposition F.13]{arora2024simple} to get
    \[
    \mY \gets {\tt remember}(\vu^{\jrt}_{\mathrm{inner}}, 0, N, f),
    \]
    using \(\coyoteTuple{\paren{2 N + \frac{N}{2}}}{8}{(n+1)}{\paren{2 N + \frac{N}{2}}}{(n+1)}\),
    where $f$ is applied over $\vx:= \vu^{\jrt}_{\mathrm{inner}}[0:N-1,:]$, the first $N$ rows of $\vu^{\jrt}_{\mathrm{inner}}$. That is, we want to remember the last $\paren{N + \frac{N}{2}}$ rows of $\vu^{\jrt}_{\mathrm{inner}}$. We define $f:= f_2 \circ f_1$, where $f_1$ is the cumulative sum of the first $N$ rows computed using  $\coyoteTuple{\inputLength}{O(1)}{(n+1)}{\inputLength}{(n+1)}$ followed by $f_2$ which is the shifting down by $N-1$ using $\coyoteTuple{\inputLength}{3}{(n+1)}{\inputLength}{(n+1)}$~\cite[Propositions F.41 and F.38]{arora2024simple}. That is, for $i \in [0: N-1]$, we have
    \[
    \begin{aligned}
    f_1(\vx)[i,:] &= 
        \sum_{k=0}^{i} \vx[k,:]; \\
        f_2(f_1(\vx))[i,:] &= f_1(\vx)[i-(N-1), :].
    \end{aligned}
    \]
    For the $n$th column, we know that for $0\le i<N$:
    \[
    \vx[i,n] = \vu^{\jrt}_{\mathrm{inner}}[i,n] = \vu^{\jrt}[i,n] = \begin{cases} 
        1 &\text{if }|A| \le |B|\text{ and } i = |A| \\
        0 &\text{otherwise.}
    \end{cases}
    \]
    This is because if $|A| \le |B|$, the separator is within $\brac{0, \frac{N}{2}-1}$ and its $n$th bit is $1$, where $|A| =: i_s \in \brac{0, \frac{N}{2}-1}$ to be the location of the separator. We then get
    \[
    \begin{aligned}
    f_1(\vx)[i,n] &= \begin{cases}
        1 &\text{if }|A| \le |B|\text{ and } i \ge i_s \\
        0 &\text{otherwise.}
    \end{cases} \\
        f_2(f_1(\vx))[i,n] &= \begin{cases}
        1 &\text{if }|A| \le |B|\text{ and } i = 0 \\
        0 &\text{otherwise.}
    \end{cases}
    \end{aligned}
    \]
    We can thus characterize the $n$th column of the output $\mY \in \R^{\paren{2 N + \frac{N}{2}} \times (n+1)}$ as follows:
    \[
    \mY[i,n] = \begin{cases}
        1 &\text{if } |A| \le |B|\text{ and } i = 0 \\
        0 &\text{if } |A| > |B|\text{ and } i = 0 \text{ or } 1 \le i < N \\
        \vu^{\jrt}[i+\frac{N}{2},n] &\text{if } i \ge N.
    \end{cases}
    \]
    We now remember $\mY[0: \frac{N}{2}-1,:]$ while shifting down $\mY[\frac{N}{2}:2 N + \frac{N}{2}-1,:]$ by $\frac{N}{2}-1$~\cite[Proposition F.13 and F.38]{arora2024simple} to get $\mY'$ such that:
    \[
    \begin{aligned}
    \mY'[i,:] &= \begin{cases}
        \mY[i,:] &\text{if }i<\frac{N}{2} \\
        \mY[i-\frac{N}{2},:] &\text{if } i \ge \frac{N}{2}
    \end{cases} \\
    &= \begin{cases}
        \mY[i,:] &\text{if }i<\frac{N}{2} \\
        \vu^{\jrt}[i,:] &\text{if } \frac{N}{2}\le i < 2N-1\\
        \bm{0}^n &\text{otherwise.}
    \end{cases}
    \end{aligned}
    \]
    
    Focusing on the $n$th column, we see that we get for $0\le i<N$:
    \[
    \begin{aligned}
    \mY'[i,n] &= \begin{cases}
        1 &\text{if } |A| \le |B|\text{ and } i = 0 \text{ or } |A| > |B| \text{ and } i = |A|\\
        0 &\text{otherwise}
    \end{cases}.    
    \end{aligned}
    \]
    Or equivalently
    \[
    \begin{aligned}
    \mY'[0:N-1, n] &= \begin{cases}
        \ve_0 &\text{if }|A| \le |B| \\
        \ve_{|A|} &\text{if } |A| > |B|.
    \end{cases}
    \end{aligned},
    \]
    which is exactly what we need as the shift kernel $\vh_{\mathrm{shift}}$. A schematic representation of this process is provided in \Cref{fig:sch-sd-shift}. The final claim on the overall parameters follows from the fact that we can `stack' \BaseConv\ layers with the same internal dimension~\cite{arora2024simple}.
\begin{figure*}[!h]
    \centering
\tikzset{every picture/.style={line width=0.75pt}} 

\begin{tikzpicture}[x=0.75pt,y=0.75pt,yscale=-1,xscale=1]

\draw   (84,58.5) -- (534,58.5) -- (534,90) -- (84,90) -- cycle ;
\draw [color={rgb, 255:red, 155; green, 155; blue, 155 }  ,draw opacity=1 ] [dash pattern={on 4.5pt off 4.5pt}]  (189.5,53) -- (189.5,96) ;
\draw    (128.5,58.5) -- (128.5,90) ;
\draw    (153.5,58) -- (153.5,89.5) ;
\draw    (237,58.5) -- (237,90) ;
\draw    (262,58) -- (262,89.5) ;
\draw [color={rgb, 255:red, 155; green, 155; blue, 155 }  ,draw opacity=1 ] [dash pattern={on 4.5pt off 4.5pt}]  (318.5,51.5) -- (318.5,94.5) ;
\draw   (83,184) -- (626,184) -- (626,215.5) -- (83,215.5) -- cycle ;
\draw [color={rgb, 255:red, 139; green, 87; blue, 42 }  ,draw opacity=1 ]   (193,90) .. controls (195.33,89.69) and (196.66,90.7) .. (196.98,93.03) .. controls (197.3,95.36) and (198.63,96.37) .. (200.96,96.06) .. controls (203.29,95.74) and (204.62,96.75) .. (204.94,99.08) .. controls (205.25,101.41) and (206.58,102.42) .. (208.91,102.11) .. controls (211.24,101.8) and (212.57,102.81) .. (212.89,105.14) .. controls (213.21,107.47) and (214.54,108.48) .. (216.87,108.17) .. controls (219.2,107.86) and (220.53,108.87) .. (220.85,111.2) .. controls (221.17,113.53) and (222.5,114.54) .. (224.83,114.23) .. controls (227.16,113.91) and (228.49,114.92) .. (228.81,117.25) .. controls (229.13,119.58) and (230.46,120.59) .. (232.79,120.28) .. controls (235.12,119.97) and (236.45,120.98) .. (236.77,123.31) .. controls (237.08,125.64) and (238.41,126.65) .. (240.74,126.34) .. controls (243.07,126.03) and (244.4,127.04) .. (244.72,129.37) .. controls (245.04,131.7) and (246.37,132.71) .. (248.7,132.4) .. controls (251.03,132.08) and (252.36,133.09) .. (252.68,135.42) .. controls (253,137.75) and (254.33,138.76) .. (256.66,138.45) .. controls (258.99,138.14) and (260.32,139.15) .. (260.64,141.48) .. controls (260.96,143.81) and (262.29,144.82) .. (264.62,144.51) .. controls (266.95,144.2) and (268.28,145.21) .. (268.59,147.54) .. controls (268.91,149.87) and (270.24,150.88) .. (272.57,150.57) .. controls (274.9,150.25) and (276.23,151.26) .. (276.55,153.59) .. controls (276.87,155.92) and (278.2,156.93) .. (280.53,156.62) .. controls (282.86,156.31) and (284.19,157.32) .. (284.51,159.65) .. controls (284.83,161.98) and (286.16,162.99) .. (288.49,162.68) .. controls (290.82,162.37) and (292.15,163.38) .. (292.47,165.71) .. controls (292.78,168.04) and (294.11,169.05) .. (296.44,168.74) .. controls (298.77,168.42) and (300.1,169.43) .. (300.42,171.76) .. controls (300.74,174.09) and (302.07,175.1) .. (304.4,174.79) .. controls (306.73,174.48) and (308.06,175.49) .. (308.38,177.82) .. controls (308.7,180.15) and (310.03,181.16) .. (312.36,180.85) .. controls (314.69,180.54) and (316.02,181.55) .. (316.34,183.88) -- (316.5,184) -- (316.5,184) ;
\draw    (127.5,184) -- (127.5,215.5) ;
\draw    (152.5,183.5) -- (152.5,215) ;
\draw    (386,184) -- (386,215.5) ;
\draw    (411,183.5) -- (411,215) ;
\draw [color={rgb, 255:red, 155; green, 155; blue, 155 }  ,draw opacity=1 ] [dash pattern={on 4.5pt off 4.5pt}]  (317.5,177) -- (317.5,220) ;
\draw [color={rgb, 255:red, 139; green, 87; blue, 42 }  ,draw opacity=1 ]   (84,90) .. controls (85.65,91.68) and (85.63,93.35) .. (83.95,95) .. controls (82.26,96.65) and (82.24,98.31) .. (83.89,100) .. controls (85.54,101.68) and (85.52,103.35) .. (83.84,105) .. controls (82.16,106.65) and (82.14,108.32) .. (83.79,110) .. controls (85.44,111.69) and (85.42,113.35) .. (83.73,115) .. controls (82.05,116.65) and (82.03,118.32) .. (83.68,120) .. controls (85.33,121.68) and (85.31,123.35) .. (83.63,125) .. controls (81.94,126.65) and (81.92,128.31) .. (83.57,130) .. controls (85.22,131.68) and (85.2,133.35) .. (83.52,135) .. controls (81.84,136.65) and (81.82,138.32) .. (83.47,140) .. controls (85.12,141.69) and (85.1,143.35) .. (83.41,145) .. controls (81.73,146.65) and (81.71,148.32) .. (83.36,150) .. controls (85.01,151.68) and (84.99,153.35) .. (83.31,155) .. controls (81.63,156.65) and (81.61,158.32) .. (83.26,160) .. controls (84.91,161.69) and (84.89,163.35) .. (83.2,165) .. controls (81.52,166.65) and (81.5,168.32) .. (83.15,170) .. controls (84.8,171.68) and (84.78,173.35) .. (83.1,175) .. controls (81.41,176.64) and (81.39,178.3) .. (83.04,179.99) -- (83,184) -- (83,184) ;
\draw  [fill={rgb, 255:red, 255; green, 150; blue, 161 }  ,fill opacity=1 ] (82.5,282) -- (625.5,282) -- (625.5,313.5) -- (82.5,313.5) -- cycle ;
\draw    (385.5,282) -- (385.5,313.5) ;
\draw    (410.5,281.5) -- (410.5,313) ;
\draw [color={rgb, 255:red, 155; green, 155; blue, 155 }  ,draw opacity=1 ] [dash pattern={on 4.5pt off 4.5pt}]  (317,275) -- (317,318) ;
\draw  [fill={rgb, 255:red, 201; green, 201; blue, 201 }  ,fill opacity=1 ] (317,274.7) .. controls (317,266.31) and (323.81,259.5) .. (332.2,259.5) -- (620.8,259.5) .. controls (629.19,259.5) and (636,266.31) .. (636,274.7) -- (636,320.3) .. controls (636,328.69) and (629.19,335.5) .. (620.8,335.5) -- (332.2,335.5) .. controls (323.81,335.5) and (317,328.69) .. (317,320.3) -- cycle ;
\draw   (83.5,401) -- (626.5,401) -- (626.5,432.5) -- (83.5,432.5) -- cycle ;
\draw    (108,401.5) -- (108,433) ;
\draw    (386.5,401) -- (386.5,432.5) ;
\draw    (411.5,400.5) -- (411.5,432) ;
\draw [color={rgb, 255:red, 155; green, 155; blue, 155 }  ,draw opacity=1 ] [dash pattern={on 4.5pt off 4.5pt}]  (318,394) -- (318,437) ;
\draw   (82.5,526.5) -- (532.5,526.5) -- (532.5,558) -- (82.5,558) -- cycle ;
\draw [color={rgb, 255:red, 155; green, 155; blue, 155 }  ,draw opacity=1 ] [dash pattern={on 4.5pt off 4.5pt}]  (190.5,516) -- (190.5,559) ;
\draw    (107,526.5) -- (107,558) ;
\draw    (238,526) -- (238,557.5) ;
\draw    (264,527) -- (264,558.5) ;
\draw [color={rgb, 255:red, 155; green, 155; blue, 155 }  ,draw opacity=1 ] [dash pattern={on 4.5pt off 4.5pt}]  (319,520.5) -- (319,563.5) ;
\draw [color={rgb, 255:red, 139; green, 87; blue, 42 }  ,draw opacity=1 ]   (319,433.5) .. controls (318.63,435.83) and (317.28,436.81) .. (314.95,436.44) .. controls (312.62,436.07) and (311.28,437.04) .. (310.91,439.37) .. controls (310.54,441.7) and (309.19,442.68) .. (306.86,442.31) .. controls (304.53,441.94) and (303.18,442.91) .. (302.81,445.24) .. controls (302.44,447.57) and (301.09,448.55) .. (298.76,448.18) .. controls (296.43,447.81) and (295.09,448.79) .. (294.72,451.12) .. controls (294.35,453.45) and (293,454.42) .. (290.67,454.05) .. controls (288.34,453.68) and (286.99,454.66) .. (286.62,456.99) .. controls (286.25,459.32) and (284.91,460.3) .. (282.58,459.93) .. controls (280.25,459.56) and (278.9,460.53) .. (278.53,462.86) .. controls (278.16,465.19) and (276.81,466.17) .. (274.48,465.8) .. controls (272.15,465.43) and (270.8,466.4) .. (270.43,468.73) .. controls (270.06,471.06) and (268.72,472.04) .. (266.39,471.67) .. controls (264.06,471.3) and (262.71,472.28) .. (262.34,474.61) .. controls (261.97,476.94) and (260.62,477.91) .. (258.29,477.54) .. controls (255.96,477.17) and (254.62,478.15) .. (254.25,480.48) .. controls (253.88,482.81) and (252.53,483.78) .. (250.2,483.41) .. controls (247.87,483.04) and (246.52,484.02) .. (246.15,486.35) .. controls (245.78,488.68) and (244.43,489.66) .. (242.1,489.29) .. controls (239.77,488.92) and (238.43,489.89) .. (238.06,492.22) .. controls (237.69,494.55) and (236.34,495.53) .. (234.01,495.16) .. controls (231.68,494.79) and (230.33,495.77) .. (229.96,498.1) .. controls (229.59,500.43) and (228.25,501.4) .. (225.92,501.03) .. controls (223.59,500.66) and (222.24,501.64) .. (221.87,503.97) .. controls (221.5,506.3) and (220.15,507.27) .. (217.82,506.9) .. controls (215.49,506.53) and (214.15,507.51) .. (213.78,509.84) .. controls (213.41,512.17) and (212.06,513.15) .. (209.73,512.78) .. controls (207.4,512.41) and (206.05,513.38) .. (205.68,515.71) .. controls (205.31,518.04) and (203.96,519.02) .. (201.63,518.65) .. controls (199.3,518.28) and (197.96,519.25) .. (197.59,521.58) .. controls (197.22,523.91) and (195.87,524.89) .. (193.54,524.52) -- (191.5,526) -- (191.5,526) ;
\draw [color={rgb, 255:red, 139; green, 87; blue, 42 }  ,draw opacity=1 ]   (83.5,432.5) .. controls (85.15,434.18) and (85.13,435.85) .. (83.45,437.5) .. controls (81.76,439.15) and (81.74,440.81) .. (83.39,442.5) .. controls (85.04,444.18) and (85.02,445.85) .. (83.34,447.5) .. controls (81.66,449.15) and (81.64,450.82) .. (83.29,452.5) .. controls (84.94,454.19) and (84.92,455.85) .. (83.23,457.5) .. controls (81.55,459.15) and (81.53,460.82) .. (83.18,462.5) .. controls (84.83,464.18) and (84.81,465.85) .. (83.13,467.5) .. controls (81.44,469.15) and (81.42,470.81) .. (83.07,472.5) .. controls (84.72,474.18) and (84.7,475.85) .. (83.02,477.5) .. controls (81.34,479.15) and (81.32,480.82) .. (82.97,482.5) .. controls (84.62,484.19) and (84.6,485.85) .. (82.91,487.5) .. controls (81.23,489.15) and (81.21,490.82) .. (82.86,492.5) .. controls (84.51,494.18) and (84.49,495.85) .. (82.81,497.5) .. controls (81.13,499.15) and (81.11,500.82) .. (82.76,502.5) .. controls (84.41,504.19) and (84.39,505.85) .. (82.7,507.5) .. controls (81.02,509.15) and (81,510.82) .. (82.65,512.5) .. controls (84.3,514.18) and (84.28,515.85) .. (82.6,517.5) .. controls (80.91,519.14) and (80.89,520.8) .. (82.54,522.49) -- (82.5,526.5) -- (82.5,526.5) ;

\draw (178,101.4) node [anchor=north west][inner sep=0.75pt]  [font=\tiny,color={rgb, 255:red, 155; green, 155; blue, 155 }  ,opacity=1 ]  {$\frac{N}{2} -1$};
\draw (307.5,103.4) node [anchor=north west][inner sep=0.75pt]  [font=\tiny,color={rgb, 255:red, 155; green, 155; blue, 155 }  ,opacity=1 ]  {$N-1$};
\draw (130.5,61.9) node [anchor=north west][inner sep=0.75pt]  [font=\scriptsize,color={rgb, 255:red, 255; green, 0; blue, 0 }  ,opacity=1 ]  {$1$};
\draw (239,61.9) node [anchor=north west][inner sep=0.75pt]  [font=\scriptsize,color={rgb, 255:red, 0; green, 120; blue, 255 }  ,opacity=1 ]  {$1$};
\draw (116.5,43.9) node [anchor=north west][inner sep=0.75pt]  [font=\scriptsize,color={rgb, 255:red, 255; green, 0; blue, 0 }  ,opacity=1 ]  {$|A|\ \leq \ |B|$};
\draw (224,42.9) node [anchor=north west][inner sep=0.75pt]  [font=\scriptsize,color={rgb, 255:red, 0; green, 118; blue, 255 }  ,opacity=1 ]  {$|A|\ \geq \ |B|$};
\draw (251.5,74.9) node [anchor=north west][inner sep=0.75pt]  [font=\scriptsize,color={rgb, 255:red, 255; green, 0; blue, 0 }  ,opacity=1 ]  {$0$};
\draw (141.5,76.4) node [anchor=north west][inner sep=0.75pt]  [font=\scriptsize,color={rgb, 255:red, 0; green, 120; blue, 255 }  ,opacity=1 ]  {$0$};
\draw (306.5,228.9) node [anchor=north west][inner sep=0.75pt]  [font=\tiny,color={rgb, 255:red, 155; green, 155; blue, 155 }  ,opacity=1 ]  {$N-1$};
\draw (129.5,187.4) node [anchor=north west][inner sep=0.75pt]  [font=\scriptsize,color={rgb, 255:red, 255; green, 0; blue, 0 }  ,opacity=1 ]  {$1$};
\draw (388,187.4) node [anchor=north west][inner sep=0.75pt]  [font=\scriptsize,color={rgb, 255:red, 0; green, 120; blue, 255 }  ,opacity=1 ]  {$1$};
\draw (115.5,169.4) node [anchor=north west][inner sep=0.75pt]  [font=\scriptsize,color={rgb, 255:red, 255; green, 0; blue, 0 }  ,opacity=1 ]  {$|A|\ \leq \ |B|$};
\draw (372.5,167.9) node [anchor=north west][inner sep=0.75pt]  [font=\scriptsize,color={rgb, 255:red, 0; green, 118; blue, 255 }  ,opacity=1 ]  {$|A|\ \geq \ |B|$};
\draw (400.5,200.4) node [anchor=north west][inner sep=0.75pt]  [font=\scriptsize,color={rgb, 255:red, 255; green, 0; blue, 0 }  ,opacity=1 ]  {$0$};
\draw (140.5,201.9) node [anchor=north west][inner sep=0.75pt]  [font=\scriptsize,color={rgb, 255:red, 0; green, 120; blue, 255 }  ,opacity=1 ]  {$0$};
\draw (309,340.4) node [anchor=north west][inner sep=0.75pt]  [font=\tiny,color={rgb, 255:red, 155; green, 155; blue, 155 }  ,opacity=1 ]  {$N-1$};
\draw (110.5,293) node [anchor=north west][inner sep=0.75pt]  [color={rgb, 255:red, 252; green, 252; blue, 252 }  ,opacity=1 ] [align=left] {\tt cumulative\_sum → shift\_down};
\draw (455.5,293) node [anchor=north west][inner sep=0.75pt]  [color={rgb, 255:red, 82; green, 82; blue, 82 }  ,opacity=1 ] [align=left] {\tt remember};
\draw (307,445.9) node [anchor=north west][inner sep=0.75pt]  [font=\tiny,color={rgb, 255:red, 155; green, 155; blue, 155 }  ,opacity=1 ]  {$N-1$};
\draw (87,408.4) node [anchor=north west][inner sep=0.75pt]  [font=\scriptsize,color={rgb, 255:red, 255; green, 0; blue, 0 }  ,opacity=1 ]  {$1$};
\draw (388.5,404.4) node [anchor=north west][inner sep=0.75pt]  [font=\scriptsize,color={rgb, 255:red, 0; green, 120; blue, 255 }  ,opacity=1 ]  {$1$};
\draw (80.5,385.9) node [anchor=north west][inner sep=0.75pt]  [font=\scriptsize,color={rgb, 255:red, 255; green, 0; blue, 0 }  ,opacity=1 ]  {$|A|\ \leq \ |B|$};
\draw (373,384.9) node [anchor=north west][inner sep=0.75pt]  [font=\scriptsize,color={rgb, 255:red, 0; green, 118; blue, 255 }  ,opacity=1 ]  {$|A|\ \geq \ |B|$};
\draw (401,417.4) node [anchor=north west][inner sep=0.75pt]  [font=\scriptsize,color={rgb, 255:red, 255; green, 0; blue, 0 }  ,opacity=1 ]  {$0$};
\draw (96,419.9) node [anchor=north west][inner sep=0.75pt]  [font=\scriptsize,color={rgb, 255:red, 0; green, 120; blue, 255 }  ,opacity=1 ]  {$0$};
\draw (179,564.4) node [anchor=north west][inner sep=0.75pt]  [font=\tiny,color={rgb, 255:red, 155; green, 155; blue, 155 }  ,opacity=1 ]  {$\frac{N}{2} -1$};
\draw (308.5,566.4) node [anchor=north west][inner sep=0.75pt]  [font=\tiny,color={rgb, 255:red, 155; green, 155; blue, 155 }  ,opacity=1 ]  {$N-1$};
\draw (84.5,529.9) node [anchor=north west][inner sep=0.75pt]  [font=\scriptsize,color={rgb, 255:red, 255; green, 0; blue, 0 }  ,opacity=1 ]  {$1$};
\draw (242.5,530.4) node [anchor=north west][inner sep=0.75pt]  [font=\scriptsize,color={rgb, 255:red, 0; green, 120; blue, 255 }  ,opacity=1 ]  {$1$};
\draw (86,510.4) node [anchor=north west][inner sep=0.75pt]  [font=\scriptsize,color={rgb, 255:red, 255; green, 0; blue, 0 }  ,opacity=1 ]  {$|A|\ \leq \ |B|$};
\draw (225,505.9) node [anchor=north west][inner sep=0.75pt]  [font=\scriptsize,color={rgb, 255:red, 0; green, 118; blue, 255 }  ,opacity=1 ]  {$|A|\ \geq \ |B|$};
\draw (252,542.9) node [anchor=north west][inner sep=0.75pt]  [font=\scriptsize,color={rgb, 255:red, 255; green, 0; blue, 0 }  ,opacity=1 ]  {$0$};
\draw (96.5,544.9) node [anchor=north west][inner sep=0.75pt]  [font=\scriptsize,color={rgb, 255:red, 0; green, 120; blue, 255 }  ,opacity=1 ]  {$0$};

\end{tikzpicture}
        \caption{Schema for getting input-dependent shift kernels for the set disjointness (SD) problem.}
    \label{fig:sch-sd-shift}
\end{figure*}
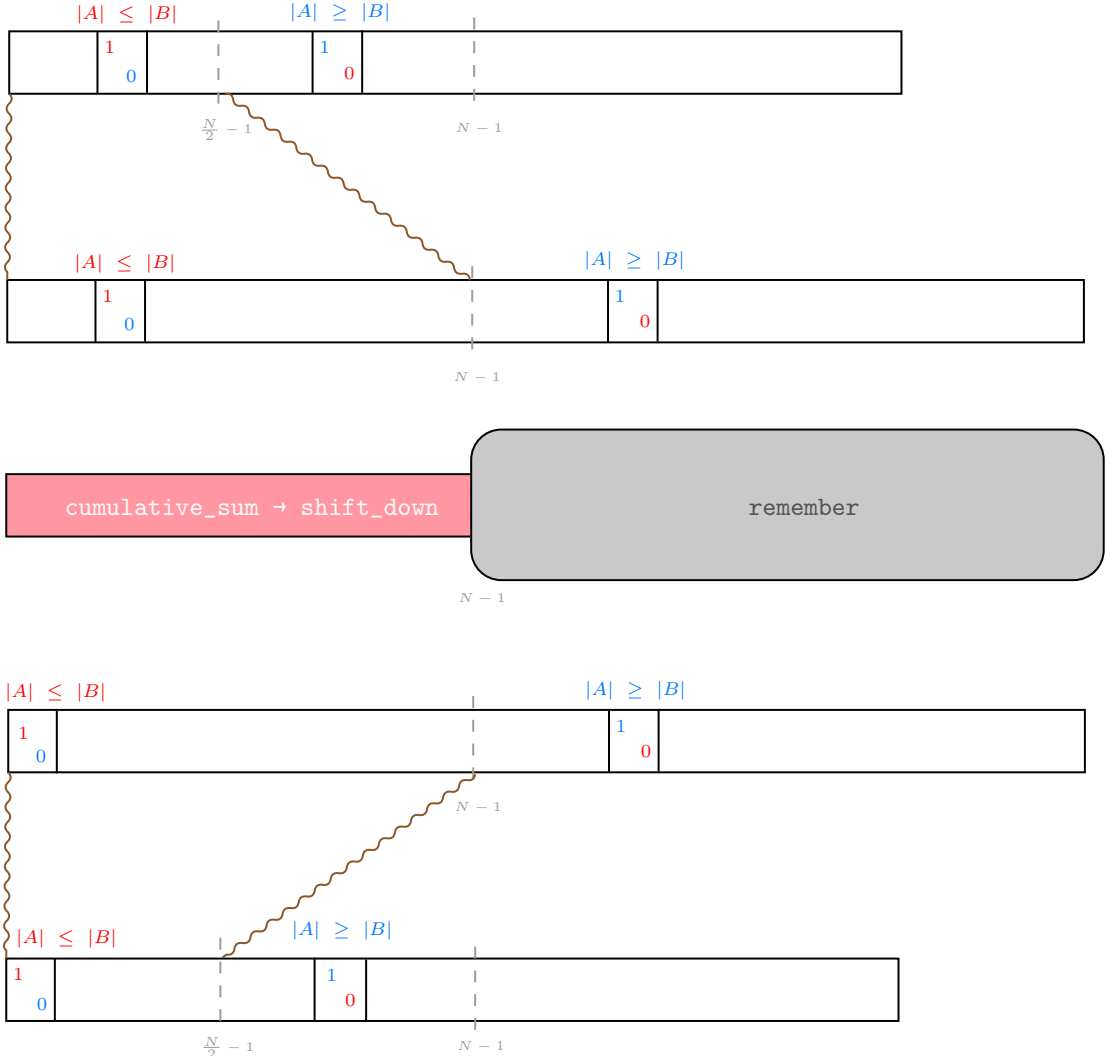

\end{proof}
We now use the kernels from Lemma \ref{lem:bc-ker} to do the appropriate shift.
\begin{proposition}
\label{prop: baseconv-sd-shift}
    Given a $\jrt$ prompt $\vu^{\jrt} \in \R^{2N \times (\sdDim + 1)}$ of the input $\bm{u} \in \R^{\inputLength \times (\sdDim+1)}$ for the set-disjointness (SD) problem $(A, B) \subseteq \{0,1\}^n$, there exist $O(1)$ {\em input-dependent} \BaseConv\ layers that can rearrange the input so that the smaller set out of $A$ and $B$ is placed at the start of the sequence.
\end{proposition}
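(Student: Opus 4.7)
The plan is to feed the output of Lemma \ref{lem:bc-ker} into the convolution filter slot of a single additional \BaseConv\ layer, and to use the standard $O(1)$-layer \BaseConv\ primitives from \citet{arora2024simple} for bookkeeping. Concretely, I would first invoke Lemma \ref{lem:bc-ker} to construct, with $O(1)$ \BaseConv\ layers, the data-dependent filter $\vh_{\mathrm{shift}}$, which is the identity kernel $X^0$ when $|A|\le|B|$ and the ``shift-by-$(|A|+1)$'' kernel $X^{|A|+1}$ when $|A|>|B|$. The key conceptual ingredient is that $\vh_{\mathrm{shift}}$ is itself the output of a fixed \BaseConv\ circuit applied to the input, so it can be routed into the convolution kernel of a subsequent \BaseConv\ layer, realizing an \emph{input-dependent} convolution in the sense required by the theorem statement (cf.\ the footnote after Theorem, which explicitly allows one layer of input-dependent \BaseConv).

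Next, I would apply one \BaseConv\ layer that linearly convolves $\vu^{\jrt}$ with $\vh_{\mathrm{shift}}$. In the case $|A|\le|B|$, convolving with $X^{0}$ is the identity: the smaller set $A$ already occupies positions $[0,|A|-1]$, immediately followed by the separator, $B$, and then the repeated copy, so no further work is needed. In the case $|A|>|B|$, convolving with $X^{|A|+1}$ shifts the entire length-$2N$ sequence to the right by $|A|+1$ positions; the second copy of the block $(A,\mathrm{sep},B)$, originally at positions $[N,2N-1]$, is now offset so that its $B$-portion lands in the output window starting just after the first copy's $B$, while one can select the window of length $N$ beginning at position $|A|+1$ and obtain exactly $B,\mathrm{sep},A$. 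I would then use the shift and remember primitives \citep[Propositions F.13, F.38]{arora2024simple} to realign this window to positions $[0,N-1]$, giving the desired arrangement in both cases via the same fixed circuit.

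The main obstacle will be handling the boundary behavior of the linear convolution cleanly: shifting right on a length-$2N$ sequence pushes the tail out of range, and we must certify that a full copy of the smaller set, the separator, and the larger set all survive inside the output window. This is precisely what the \JRT\ replication buys us, since the second copy supplies the appropriately offset $(A,\mathrm{sep},B)$ block when the first copy gets overwritten. A smaller bookkeeping issue is the off-by-one between the $\ve_{|A|}$ kernel produced in the proof of Lemma \ref{lem:bc-ker} and the $X^{|A|+1}$ of its statement, which is absorbed by a single unit-shift primitive. Since the kernel-construction, the one input-dependent convolution, and the window-extraction each cost $O(1)$ \BaseConv\ layers, the overall construction stays within $O(1)$ layers as claimed.
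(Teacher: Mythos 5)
Your high-level plan—build $\vh_{\mathrm{shift}}$ via Lemma~\ref{lem:bc-ker}, route it into the convolution slot of a subsequent \BaseConv\ layer, and lean on the JRT replication so that a full $(B,\text{sep},A)$ block survives the shift—is the same strategy as the paper's, and you correctly identify the off-by-one and the fact that only the $|A|>|B|$ case needs work. But there is a genuine gap in the step where you claim that after shifting the JRT prompt you ``obtain exactly $B,\mathrm{sep},A$.'' Work through the positions: $\vu^{\jrt}$ is $[A,\vs,B,A,\vs,B]$, so the length-$N$ window starting right after the first copy's $A\,\vs$ block reads $B$ (from the first copy), then $A$ (from the second copy), and only \emph{then} the separator $\vs$—i.e.\ you get $[B,A,\vs]$, not $[B,\vs,A]$. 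No choice of shift amount or direction fixes this by itself, because in the raw JRT prompt the separator always immediately \emph{follows} $A$, so any contiguous window has $\vs$ sitting after $A$, not between $B$ and $A$. That is exactly why the paper inserts a preliminary step before shifting: a ${\tt remember}$ primitive applied on positions $[N,N+|A|]$ that maps $(A,\vs)\mapsto(\vs,A)$ in the second copy, turning the sequence into $[A,\vs,B,\vs,A,B]$. Only then does the shift by $|A|+1$ land the separator correctly and produce $[B,\vs,A,\ldots]$. Without that swap, the output does not match the format in Remark~\ref{remark:sd-input} that Proposition~\ref{prop: lin-att-sd} relies on to delineate the smaller set, so the downstream \LinAtt\ construction has no way to find the boundary. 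A secondary inaccuracy: you describe the convolution as shifting \emph{right} (padding zeros on the left), whereas the paper uses a ${\tt shift\_up}$ (discard a prefix and shift toward index $0$); with a right shift, the window you describe would hand back the original $[A,\vs,B]$, not a rearranged block. Adding the separator-swap remember step, and being careful about the shift direction, would close the gap.
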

\begin{proof}
The input $\bm{u} \in \{0,1\}^{\inputLength \times (\sdDim+1)}$ is formatted as in Remark \ref{remark:sd-input}. In the first case where $A$ is the smaller set, we do not need to change the input. Let $\vs:= [\bm{0}^n::1]$ be the separator, then we want:
\[
\vu^{\jrt} \equiv \begin{bmatrix}
\longleftarrow
    A
    \longrightarrow &
    \vs &
\longleftarrow
    B
    \longrightarrow  &
    \vline &
\longleftarrow
    A
    \longrightarrow &
    \vs &
\longleftarrow
    B
    \longrightarrow
\end{bmatrix}
\]
Otherwise, if $|B| \le |A|$, we want to shift the input so that $|B|$ comes at the start of the input sequence in the $\jrt$ prompt $\vu^{\jrt}$. To this end, 
we want to add a separator between after the first copy of the input ends. For this purpose, we can keep the first copy as is, and operate on the duplicate $A$ by shifting it down by $1$ and adding a separator at the start of the second input. We thus apply the {\tt remember}($\vu^{\jrt}_{{\tt shift\_up}},N,N+|A|, f$) primitive \cite[Definition F.15]{arora2024simple}  with $8$ layers of \BaseConv\, where $f$ is any function that maps $(A, \vs) \mapsto (\vs, A)$, so that we get
\[
\vu^{\jrt} \equiv \begin{bmatrix}
\longleftarrow
    A
    \longrightarrow &
    \vs &
\longleftarrow
    B
    \longrightarrow  &
    \vline &
    \vs &
\longleftarrow
    A
    \longrightarrow &
\longleftarrow
    B
    \longrightarrow
\end{bmatrix}
\]
Next, we shift up using the {\tt shift\_up}($\vu^{\jrt}, |A|+1$) primitive \cite[Proposition C.5]{arora2023zoology} for \BaseConv\ with $3$ layers by implementing the kernel $\vh_{\mathrm{shift}}$ from Lemma \ref{lem:bc-ker}. We then get
\[
\vu^{\jrt}_{{\tt shift\_up}} \equiv \begin{bmatrix}
\longleftarrow
    B
    \longrightarrow  &
    \vs &
\longleftarrow
    A
    \longrightarrow &
    \vline &
\longleftarrow
    B
    \longrightarrow & 
    \bm{0}^{|A|+1}
\end{bmatrix}
\]
That is, in both cases, the final output has the smaller set out of $A$ and $B$ at the start of the sequence. 

To complete the proof we note that we can do the above in one single model (that uses data dependent convolutions): (1) We add the extra separator after the second set in $\vu^{\jrt}$ and (2) we do the using the convolution operator in \BaseConv\ where we use the convolution kernel computed from \Cref{lem:bc-ker}.\footnote{We also need to only keep the first $N$ rows of the matrix, which we can obtain by zeroing out all the remaining rows using another \BaseConv\ layer.}
\end{proof}
Finally, we can combine Propositions \ref{prop: lin-att-sd} and \ref{prop: baseconv-sd-shift} to claim that \Based\ can solve $\sd$ with JRT-prompting in space $O(\min\{|A|, |B|\})$.
\begin{theorem}
    \label{thm: based-sd}
    Given a $\jrt$ prompt $\vu^{\jrt} \in \R^{2N \times (\sdDim + 1)}$ of the input $\bm{u} \in \R^{\inputLength \times (\sdDim+1)}$ for the set-disjointness (SD) problem $(A, B)$, there exists a (data dependent) \Based\ model (\BaseConv\ + MLP + \LinAtt\ + MLP)\footnote{This matches the architecture in our experiments.} that solves the SD problem with space $O(\min\{|A|, |B|\} \cdot n)$.
\end{theorem}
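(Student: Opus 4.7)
The plan is to compose the two main technical building blocks already established in the excerpt: the input-dependent BaseConv shift from \Cref{prop: baseconv-sd-shift} and the \LinAtt+MLP set-disjointness solver from \Cref{prop: lin-att-sd}. First I would invoke \Cref{prop: baseconv-sd-shift} to apply a constant number of data-dependent \BaseConv\ layers to the JRT prompt $\vu^{\jrt}$, producing a rearranged sequence in which whichever of $A$ and $B$ is smaller appears in the first block, followed by the separator, then the larger set, and then the repeated copy. Let $S := \min\{|A|,|B|\}$ and $L$ denote the smaller and larger sets respectively, so that after this preprocessing the first $|S|+1$ rows encode $S$ followed by the separator, and the next $|L|$ rows encode $L$.

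Next I would apply \Cref{prop: lin-att-sd} to this rearranged input, but with the roles of $A$ and $B$ there played by $S$ and $L$. The proposition requires an IP kernel with tolerance $\epsilon = 1/(3|S|)$; choosing the data-dependent kernel from \eqref{eq: data-dep-kernels} (keyed off the smaller set $S$, whose identity is determined after the shift step) gives exact orthogonality ($\epsilon=0$) with feature dimension $f = |S| + \log_2 c = O(|S| + n)$. The \LinAtt\ layer then writes, in its recurrent state, a running sum of $\phi(\vk_j)^\top \vv_j$ over the first block; because $\vv_j$ is the indicator of being in the $S$-block and $\phi$ is orthogonal on the universe, this state has size $O(d \cdot f) = O(n \cdot (|S|+n)) = O(n \cdot |S|)$ once $|S| \gtrsim n$ (and is trivially $O(n^2)$ otherwise, absorbed into the bound). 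Feeding the output through the MLP layer from \Cref{prop: lin-att-sd} produces entries in $[\tfrac13,1]$ in exactly the rows of $L$ that lie in $S \cap L = A \cap B$, and $0$ elsewhere, which solves SD.

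To conclude I would verify the architecture matches the claimed template \BaseConv\ + MLP + \LinAtt\ + MLP: the \BaseConv\ block from \Cref{prop: baseconv-sd-shift} (which internally only needs a constant number of layers plus the data-dependent convolution kernel from \Cref{lem:bc-ker}) plays the role of the first \BaseConv; a trivial identity MLP can be inserted between the shift and the attention if a literal MLP slot is required; and the final \LinAtt+MLP is exactly the construction of \Cref{prop: lin-att-sd}. The total state size is dominated by the \LinAtt\ KV-state, giving $O(\min\{|A|,|B|\}\cdot n)$ as claimed.

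The main obstacle, and the reason the theorem needs data-dependent convolutions (as flagged in the paper's first footnote), is that the choice of IP kernel must depend on which of $A,B$ is smaller: the data-dependent kernel of \eqref{eq: data-dep-kernels} indexes its one-hot block by the elements of $S$, and this identification is only available after the shift step has revealed which set is smaller. If one insisted on an input-independent kernel one would fall back to the randomized kernel with $f = \Theta(|S|^2 \log c)$, yielding the weaker $O(|S|^2 \cdot n)$ bound recorded in the second footnote of the theorem. Showing that the data-dependent \BaseConv\ construction can in fact propagate the identity of $S$ into the kernel used by the subsequent \LinAtt\ layer (so that the composition remains a single \Based\ model) is the one place where care is needed; everything else is bookkeeping on top of the two cited propositions.
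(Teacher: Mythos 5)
Your proposal follows the paper's proof of Theorem~\ref{thm: based-sd} essentially step for step: apply the input-dependent \BaseConv\ shift of Proposition~\ref{prop: baseconv-sd-shift} to bring the smaller set to the front, then invoke Proposition~\ref{prop: lin-att-sd} with the data-dependent IP kernel of~\eqref{eq: data-dep-kernels} so that $f = O(\min\{|A|,|B|\})$ and the KV-state costs $O(\min\{|A|,|B|\}\cdot n)$. The one small discrepancy is the middle MLP: you treat it as a trivial identity filler, whereas the paper assigns it the concrete job of projecting $\vz^{\BaseConv}$ down to its first $N$ rows before it enters \LinAtt; this is minor bookkeeping and does not affect correctness.
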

\begin{proof}
    First, we use the \BaseConv\ layers from \Cref{prop: baseconv-sd-shift} to get the smaller set of $A$ and $B$ in $\vu^{\jrt}$ to the start of the sequence in $\bm{z}^{\BaseConv}$. Next, we reduce  $\bm{z}^{\BaseConv}$ using an MLP layer to get $\bm{z}^{\BaseConv}[0:N-1, :]$ as the input to the \LinAtt\ (+MLP) layer in \Cref{prop: lin-att-sd} so that we solve the SD problem for the original input $\bm{u}$. Finally, for the $\LinAtt$ layer, we can use the data-dependent IP kernels from \eqref{eq: data-dep-kernels} to get $f = O(\min\{|A|, |B|\}))$, which yields the claimed space usage since we have $d=n$.
\end{proof}
\begin{remark}
    \label{rem: based-sd-2}
    We note that we can use the random kernels from \eqref{eq: rand-kernels} in Theorem \ref{thm: based-sd} to get space usage of $O\paren{(\min\{|A|, |B|\})^2\cdot n}$ {\em without} using data-dependent IP kernels.
\end{remark}
\subsection{$\nmqar$ and $\sd$}
In this section, we introduce the {\em general associative recall} $\nmqar$ problem. Recall that the query in the AR problem comes at the end, and thus, the query is compared with all the keys in the input. On the other hand, in MQAR, a query at position $i$ is only compared with keys at positions $j < i$. Moreover, the number of keys and queries in the input are the same for MQAR. Instead, we introduce the following alternate generalization of AR that has all the queries at the end with the number of queries different from the number of keys.
\begin{definition}[$\nmqar$]
\label{def: nmqar}
We are given an input sequence 
    \begin{equation}
    \label{eq: nmqar-input}
    \bm{u}[0 \cdots N-1] \triangleq (\vk_0,\vv_0),\ldots,(\vk_{n-1},\vv_{n-1});\vq_0,\ldots,\vq_{m-1},
    \end{equation}
where $K:= \{\vk_i\}_{i=0}^{n-1}, V:= \{\vv_i\}_{i=0}^{n-1},$ and $Q:= \{\vq_i\}_{i=0}^{m-1}$, with each $\vk_i, \vv_i, \vq_i \in C$ is a token drawn from a vocabulary of size $c = \abs{C}$, and we have $N = 2n+m$.

Our goal in the {\em general associative recall} ($\nmqar$) problem is to check, for each $\bm{q}_i \in Q$, whether there exists $\bm{k}_j \in K$ such that $\bm{q}_i \equiv \bm{k_j}$; if so, output the corresponding value $\vv_j$, and otherwise, output ${\tt Null}$.
\end{definition}

We will first show that SD reduces to $\nmqar$.
\begin{proposition}
\label{prop: nmqar-sd}
    Any algorithm $\calA$ solving $\nmqar$ can also solve $\sd$.
\end{proposition}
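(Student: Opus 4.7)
The plan is to exhibit a direct reduction from $\sd$ to $\nmqar$: given an SD instance $(A,B)\subseteq [c]\times [c]$, construct an $\nmqar$ input by putting the elements of (say) $A$ in the key slots and the elements of $B$ in the query slots, then read off disjointness from the pattern of $\tt{Null}$ versus non-$\tt{Null}$ answers produced by $\calA$.

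Concretely, enumerate $A = \{a_0,\ldots,a_{n-1}\}$ and $B = \{b_0,\ldots,b_{m-1}\}$ (in any order, padding with fresh sentinel tokens if needed so that entries within each set are distinct and disjoint from the sentinel pool). Form the $\nmqar$ input
\[
\bm{u} = (a_0,\mathbf{1}),(a_1,\mathbf{1}),\ldots,(a_{n-1},\mathbf{1});\ b_0,b_1,\ldots,b_{m-1},
\]
where every value is the constant token $\mathbf{1}$ (any fixed non-$\tt{Null}$ symbol), so the alphabet $C$ is the union of $A\cup B$ with $\{\mathbf{1}\}$. By the definition of $\nmqar$ in \Cref{def: nmqar}, on query $b_i$ the algorithm $\calA$ outputs $\mathbf{1}$ iff there is some $a_j\in A$ with $a_j = b_i$, i.e.\ iff $b_i\in A\cap B$, and otherwise outputs $\tt{Null}$.

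Thus the SD answer is obtained from $\calA$'s output by the simple post-processing rule: declare $A\cap B=\emptyset$ iff $\calA$ returns $\tt{Null}$ on every one of the $m$ queries, and declare $A\cap B\ne\emptyset$ otherwise. (If one wants the witnesses of intersection, the non-$\tt{Null}$ query positions list exactly $A\cap B$.) The reduction is an input-oblivious rewriting with one pass over $A$ then $B$, so any space/pass bounds of $\calA$ transfer to $\sd$ with only an additive $O(1)$ overhead for the final OR over query outputs.

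The main thing to be careful about is the formatting requirement of \Cref{def: nmqar}: keys, values, and queries must all be drawn from a common token alphabet $C$, and we need the constant value $\mathbf{1}$ to be distinguishable from $\tt{Null}$. Picking $\mathbf{1}$ to be any fixed symbol in $C$ handles this, since $\tt{Null}$ is the default non-match indicator and is not itself a token of the input. No other subtlety arises, so the reduction goes through and establishes the proposition.
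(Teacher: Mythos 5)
Your reduction is correct and is essentially the same as the paper's: put $A$ in the key slots, $B$ in the query slots, and report disjointness iff every query returns $\tt{Null}$. The only cosmetic difference is that you use a constant dummy value $\mathbf{1}$ for every key, whereas the paper sets each value equal to its key (i.e.\ pairs $(A_i, A_i)$), which incidentally also makes the non-$\tt{Null}$ outputs list the intersection witnesses directly; both choices are fine for the decision problem.
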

\begin{proof}
Given an input to the set-disjointness problem $(A, B)$ with $A:= \{A_0, \ldots, A_{|A|-1}\}, B:= \{B_0, \ldots, B_{|B|-1}\}$, we can construct the following input to the $\nmqar$ problem:
\[
    \bm{u} := (A_0, A_0), \ldots, (A_{|A|-1}, A_{|A|-1});B_0, \ldots, B_{|B|-1}.
\]
Now, we run algorithm $\calA$ on $\bm{u}$, and if for all $q \in Q$, we get ${\tt Null}$, then we know $A \cap B = \emptyset$, and otherwise, $A\cap B \neq \emptyset$. This solves the set disjointness (SD) problem.
\end{proof}

What we have shown is that $\nmqar$ is much more general compared to $\sd$. However, we can also show that we can solve $\nmqar$ under certain conditions if we had access to an algorithm solving $\sd$.
\begin{proposition}
\label{prop: sd-gar}
Let $\calA_{\sd}$ be an algorithm solving the set disjointness ($\sd$) problem. Then, for a vocabulary $\calC$ with $\abs{\calC} = c$ with values from $[c]$ and represented as $v_j\in \{0,1\}^d$ where $d=\ceil{\log_2(c+1)}$ with at most one match for each query, we can solve the $\nmqar$ problem (\cref{def: nmqar}) with $d$ calls to $\calA_{\sd}$.
\end{proposition}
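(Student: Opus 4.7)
The plan is to reduce $\nmqar$ to $d$ instances of set disjointness, one per value-bit, and then reconstruct each matched value bit-by-bit. Concretely, I first fix an encoding $\mathrm{enc}:[c]\cup\{\bot\}\to\{0,1\}^d$ where $d=\lceil\log_2(c+1)\rceil$ and the codeword $\bm{0}^d$ is reserved for the special symbol $\bot$ (``no match''); every genuine value $v\in[c]$ receives a nonzero codeword. This is exactly why we ask for $\lceil\log_2(c+1)\rceil$ rather than $\lceil\log_2 c\rceil$ bits, and this will be the main subtlety of the argument.

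Given an $\nmqar$ input $(\vk_0,\vv_0),\ldots,(\vk_{n-1},\vv_{n-1});\vq_0,\ldots,\vq_{m-1}$, let $B=\{\vq_0,\ldots,\vq_{m-1}\}$ and for each bit position $b\in\{1,\ldots,d\}$ define
\[
A_b \;=\; \{\,\vk_j \;:\; \mathrm{enc}(\vv_j)[b]=1,\; 0\le j<n\,\}.
\]
Invoke $\calA_{\sd}$ on the instance $(A_b,B)$ to obtain the intersection $I_b:=A_b\cap B$ (the excerpt already notes that the SD task outputs the intersection, not just the disjointness bit). Perform this for each of the $d$ bits, using exactly $d$ calls to $\calA_{\sd}$.

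To answer the queries, for each $\vq_i$ form the bitstring $\vs_i\in\{0,1\}^d$ with $\vs_i[b]=1$ iff $\vq_i\in I_b$. If $\vs_i=\bm{0}^d$ output $\mathtt{Null}$; otherwise output $\mathrm{enc}^{-1}(\vs_i)$. Correctness uses the ``at most one match'' hypothesis: if the unique $j$ with $\vk_j=\vq_i$ exists, then $\vq_i\in A_b\cap B$ iff $\mathrm{enc}(\vv_j)[b]=1$, so $\vs_i=\mathrm{enc}(\vv_j)\ne\bm{0}^d$ and we recover $\vv_j$; if no such $j$ exists then $\vq_i\notin A_b$ for every $b$, yielding $\vs_i=\bm{0}^d$ and correctly reporting $\mathtt{Null}$.

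The main obstacle (really the only one) is distinguishing ``no matching key'' from ``matching key whose value has encoding $\bm{0}^d$.'' This is resolved by the reserved-codeword trick above, which is precisely why the statement uses $d=\lceil\log_2(c+1)\rceil$ bits. A minor bookkeeping point is that each $A_b$ may contain repeated keys coming from distinct indices $j$ with equal $\vk_j$, but since SD is defined over sets these duplicates collapse harmlessly and do not affect the intersection with $B$. Hence $d$ calls to $\calA_{\sd}$ suffice, as claimed.
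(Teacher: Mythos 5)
Your proof is correct and follows essentially the same approach as the paper's: decompose each value into its $d$ bits, invoke $\calA_{\sd}$ once per bit position with the query set on one side and, on the other side, the keys whose value has that bit set, and then reassemble the answer per query from the $d$ intersections. The one substantive difference is that the paper's proof simply asserts that if $\vq_i$ matches some key $\vk_j$ then the set $L$ of bit positions where $\vq_i$ appears in the intersection is nonempty; this fails if $\vv_j$ happens to encode as $\bm{0}^d$, and your explicit reservation of the codeword $\bm{0}^d$ for $\mathtt{Null}$ (forcing every genuine value to a nonzero string, which is exactly what the $c+1$ in $d=\lceil\log_2(c+1)\rceil$ permits) plugs that gap and makes the step rigorous where the paper leaves it implicit.
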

\begin{proof}
    Given an input $(\vk_0,\vv_0),\ldots,(\vk_{n-1},\vv_{n-1});\vq_0,\ldots,\vq_{m-1}$ to $\nmqar$, for each call $\ell \in [d]$ to algorithm $\calA_{\sd}$, we construct the inputs to algorithm $\calA$ by taking $A:= Q, B:= K_{\ell}$ with $K_{\ell}$ defined as follows:
    \begin{equation}
    \label{eq: l-call}
    k_j\in K_\ell \iff v_j[\ell]=1.
    \end{equation}
    That is, we include $k_j \in K_\ell$ iff the $\ell$'th bit of $v_j$ is 1.

    We now claim that we can solve the MQAR problem given $Q\cap K_\ell$ for all $\ell\in [d]$. To see this, note that if a query $q\in Q$ is not in $K$,  then $q \notin Q\cap K_\ell$ for every $\ell \in [d]$. We thus output ${\tt Null}$ for these queries.
    
    Otherwise, if $q\in Q\cap K$, then there exists a non-empty set of calls $L \subseteq [d]$ such that $q\in Q\cap K_\ell$ for all $\ell\in L$. We can then extract the $\ell$'th bit of $v_j$, where $q=k_j$. That is, for $q = k_j$, we use \eqref{eq: l-call} to get
    \[
    v_j[\ell] = \begin{cases}
        1 & \text{if }\ell \in L\\
        0 & \text{otherwise.}
    \end{cases}
    \]
    
    This is exactly the value corresponding to the unique matching key $k_j$ for the query $q$. 
\end{proof}

\subsubsection{Lower Bound for $\nmqar$ via $\sd$}
\label{app: jrp-gar}
In this section, we present a lower bound for solving $\nmqar$. For this purpose, we require the following two-way randomized communication complexity\footnote{Here, in contrast to one-way randomized communication protocol in \cref{sec: lower-bound-AR-layers}, both Alice and Bob are allowed to send messages to each other.} lower bound for set-disjointness ($\sd$). 
\begin{theorem}[\citep{haastad2007randomized}\footnote{\citep{haastad2007randomized} provides a lower bound of $n$ for $|A| = |B|$. However, we can extend it to Theorem \ref{thm: sd-lb} by reducing the $\min\{|A|, |B|\}$ subset to the equal sized set by picking a hard distribution where both sets are of size $\min\{|A|, |B|\}$ and then adding "extra" elements to only one of them to get a larger set (i.e., one can increase the universe size by these extra elements to get the desired lower bound). 
}]
\label{thm: sd-lb}
    The two-way randomized communication complexity of the set disjointness problem with sets $A, B \subseteq [n]$ is $\Omega(\min\{|A|, |B|\})$ bits for $n \ge o(\min\{|A|, |B|\})$.
\end{theorem}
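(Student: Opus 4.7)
The plan is to derive the asymmetric $\Omega(\min\{|A|,|B|\})$ lower bound by a cheap padding reduction from the symmetric case, which is the Håstad--Wigderson result we are allowed to cite. Set $k := \min\{|A|,|B|\}$ and, without loss of generality, assume $|A| = k \le |B|$. The baseline result states: any two-way randomized protocol deciding $\sd$ on inputs $A', B' \subseteq [m]$ with $|A'| = |B'| = k$ requires $\Omega(k)$ bits of communication, provided $m$ is large enough (say $m \ge 2k$). We will reduce an asymmetric instance on $[n]$ to a symmetric instance on some $[m] \subseteq [n]$ so that solving the former immediately solves the latter with the same communication.

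The reduction uses a fixed, publicly known ``padding'' set $E \subseteq [n] \setminus [m]$ of size $|B| - k$; the universe-size hypothesis on $n$ guarantees $n \ge m + (|B|-k)$, so such an $E$ exists disjoint from $[m]$. Given any symmetric instance $(A', B')$ drawn from a hard distribution in $[m]$, I define the asymmetric instance via $A := A'$ and $B := B' \cup E$. Because $A \subseteq [m]$ is disjoint from $E \subseteq [n]\setminus[m]$, we have $A \cap B = A' \cap B'$, so $(A,B)$ are disjoint iff $(A',B')$ are. Note that $|A|=k$ and $|B|=k+(|B|-k)=|B|$, matching the target sizes exactly.

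Now let $\pi$ be any two-way randomized protocol solving $\sd$ on sets of sizes $|A|$ and $|B|$ in $[n]$ with success probability at least $2/3$. Given a symmetric instance $(A',B')$, Alice sets her input to $A = A'$, Bob sets his input to $B = B' \cup E$ using the public $E$ without any communication, and the two parties run $\pi$. The output correctly decides disjointness of $(A',B')$ with probability $\ge 2/3$, and the number of transmitted bits equals that of $\pi$. Invoking the symmetric lower bound then gives $\|\pi\| = \Omega(k) = \Omega(\min\{|A|,|B|\})$, which is exactly the theorem.

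The only subtlety, and the thing to check carefully, is the universe-size hypothesis stated in the theorem: we need $n$ to be large enough for both the embedded hard symmetric universe $[m]$ of size $\Theta(k)$ and the padding $E$ of size $|B|-k$ to fit inside $[n]$ disjointly, i.e. $n \gtrsim |A|+|B|$. This is the natural regime (otherwise disjointness is forced trivially by pigeonhole once $|A|+|B|>n$) and corresponds to the condition in the statement; it is the only nontrivial hypothesis the reduction imposes, and once granted, the reduction is entirely cost-free. No other part of the argument is delicate: the embedding is just set-theoretic bookkeeping, and no new communication is introduced.
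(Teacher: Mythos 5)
Your reduction is exactly the argument the paper sketches in its footnote: start from the symmetric $|A'|=|B'|=k$ hard distribution and pad only one side with extra elements from an enlarged disjoint portion of the universe, which preserves disjointness and adds no communication. Your write-up just spells out the public padding set $E$ and the universe-size bookkeeping that the footnote leaves implicit, so the proposal is correct and matches the paper's intended proof.
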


\begin{definition}[$\jrp$ Prompts]
\label{def:jrp}
For any model $\calM$ with input $\bm{u} \in \R^{\inputLength \times \modelDim}$, a {\em $\jrp$ prompt} for input $\bm{u}$ is the $p$-times repeated input $\bm{u}^{\jrp} \in {\R^{p\inputLength \times \modelDim}}$ given by
\[
\bm{u}^{\jrp}[i,:] := \bm{u}[i \mod{N}, :]
\]
\end{definition}

\begin{proposition}
\label{prop: jrp-sd}
    Given a {\em $\jrp$ prompt} $\vu^{\jrp} \in \{0,1\}^{p\inputLength \times \modelDim}$ for input $\vu \in \{0,1\}^{\inputLength \times \modelDim}$ to the $\nmqar$ problem, any recurrent model $\calM_{\nmqar}$ (\cref{def: reg-model}) solving $\nmqar$ requires $\max_i \abs{\mZ_{\calM_{\nmqar}}^i}$ to be at least $\Omega\paren{\frac{\min\{|A|, |B|\}}{p}}$-bits.
\end{proposition}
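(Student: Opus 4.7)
The plan is to reduce a two-way communication protocol for set disjointness to the simulation of the recurrent model $\calM_{\nmqar}$ on a $\jrp$ prompt, then invoke the $\Omega(\min\{|A|, |B|\})$ communication lower bound of \Cref{thm: sd-lb}. The intuition is that each repetition of the input in a $\jrp$ prompt costs at most one round trip of state transmission between Alice and Bob, so $p$ repetitions give communication at most $O(p\cdot S)$, where $S$ is the size of the recurrent state.

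First, given an $\sd$ instance $(A, B)$ with Alice holding $A$ and Bob holding $B$, I would apply the reduction in \Cref{prop: nmqar-sd} to build an $\nmqar$ input $\vu = (A_0, A_0), \ldots, (A_{|A|-1}, A_{|A|-1}); B_0, \ldots, B_{|B|-1}$. Crucially, Alice can produce the key-value portion of $\vu$ from $A$ alone, and Bob can produce the query portion from $B$ alone. The $\jrp$ prompt $\vu^{\jrp}$ is then the concatenation of $p$ copies of $\vu$, so the positions $i$ of $\vu^{\jrp}$ are alternately ``owned'' by Alice (the key-value block of each copy) and by Bob (the query block of each copy).

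Next, I would simulate $\calM_{\nmqar}$ on $\vu^{\jrp}$ in a two-way protocol. Using the recurrence $\mZ_{\calM_{\nmqar}}^{i} = f_{\calM_{\nmqar}}^{i}(\mZ_{\calM_{\nmqar}}^{i-1}, \vu^{\jrp}[i])$ from \Cref{def: reg-model}, Alice advances the state through her $|A|$ key-value positions, then sends the resulting state to Bob; Bob advances it through his $|B|$ query positions (recording any outputs produced there), then sends the state back. This is repeated $p$ times, for a total of at most $2p$ messages, each of at most $\max_i |\mZ_{\calM_{\nmqar}}^i| =: S$ bits. Because $\calM_{\nmqar}$ solves $\nmqar$ on $\vu^{\jrp}$, the outputs it emits at the query positions of the final repetition determine whether $A \cap B = \emptyset$, exactly as in the proof of \Cref{prop: nmqar-sd}. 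Hence the total communication $2pS$ yields a valid randomized two-way protocol for $\sd$ and must satisfy $2pS = \Omega(\min\{|A|, |B|\})$ by \Cref{thm: sd-lb}, giving $S = \Omega(\min\{|A|, |B|\}/p)$.

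The main obstacle I anticipate is pinning down that no ``extra'' communication is hidden in the model: the transition function $f_{\calM_{\nmqar}}^i$ depends on the position $i$ but not on the other party's input, so Alice and Bob can each compute their own transitions locally from the shared knowledge of the model and the position counter. A minor subtlety is making sure the randomness of $\calM_{\nmqar}$ (if any) is handled via public coins so that the communication cost is truly bounded by the state transmissions; this is standard. Another small point is that the output is produced by Bob during the final repetition, so no additional message is needed beyond the $2p$ state exchanges, leaving the bound $pS = \Omega(\min\{|A|, |B|\})$ intact.
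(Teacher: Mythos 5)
Your proof is correct and follows essentially the same strategy as the paper's: reduce $\sd$ to $\nmqar$ via \Cref{prop: nmqar-sd}, simulate the recurrent model on the $\jrp$ prompt as a two-way protocol in which Alice and Bob alternate advancing the state and exchange it at the $2p$ boundary crossings, and invoke the $\Omega(\min\{|A|,|B|\})$ two-way lower bound of \Cref{thm: sd-lb} to force $2pS = \Omega(\min\{|A|,|B|\})$. The paper formalizes the protocol as Algorithm~\ref{alg:nmqar-two-way} and sums the actual state sizes at the boundary positions before bounding by the maximum, but that is only a cosmetic difference; your observations about the position-indexed transition functions being locally computable and about public coins are the right sanity checks and are implicit in the paper's argument.
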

\begin{proof}
We first take the input $\bm{u} \in \{0,1\}^{N \times d}$ to the $\nmqar$ problem and design a two-way communication protocol for solving ${\nmqar}$ given access to the reccurrent model $\calM_{\nmqar}$. To this end, Alice with their access of key-value part generates her part of the input: 
     \begin{equation}
     \label{eq: alice-prompt}
     \vu_{\mathrm{Alice}}:= (k_0,v_0),\ldots,(k_{n-1},v_{n-1})
     \end{equation}
     of the input for $\nmqar$ (without the queries), and Bob with their access of the query part generates the following;
     \begin{equation}
     \label{eq: bob-prompt}
     \vu_{\mathrm{Bob}}:= q_0,\ldots,q_{m-1}
     \end{equation}
     of the input for $\nmqar$ (without the key-value pairs) as in \eqref{eq: nmqar-input}. That is, the concatenation $\vu_{\mathrm{Alice}}::\vu_{\mathrm{Bob}} \equiv \vu$ in \eqref{eq: nmqar-input}. We then have
     \begin{equation}
         \label{eq: jrp-protocol}\underbrace{\vu_{\mathrm{Alice}}::\vu_{\mathrm{Bob}}:: \cdots ::\vu_{\mathrm{Alice}}::\vu_{\mathrm{Bob}}}_{p-\mathrm{times}} \equiv \vu^{\jrp},
     \end{equation}
     the corresponding {\em $\jrp$} prompt for the input $\vu$ to the $\nmqar$ problem. We now claim that the following protocol (\cref{alg:nmqar-two-way}) is equivalent to running the recurrent model $\calM_{\nmqar}$ on the $\jrp$ prompt $\vu^{\jrp}$:
    \begin{algorithm*}[!h]
  \caption{\label{alg:nmqar-two-way} Communication Protocol for $\nmqar$}
  \small
  \begin{algorithmic}[1]
    \Require\ A recurrent model $\calM_{\nmqar}$ solving $\nmqar$ along with the inputs $\vu_{\mathrm{Alice}},\vu_{\mathrm{Bob}}$ from \ref{eq: alice-prompt} and \ref{eq: bob-prompt}.
    \Ensure\ $\calM_{\nmqar}(\vu^{\jrp})$.
    \For{$i \leftarrow$ $0$ to $p-1$}
        \For{$j\leftarrow$ $0$ to $2n-1$}
            \State$\mZ_{\calM_{\nmqar}}^{i\cdot N + j} \leftarrow f_{\calM}^{i\cdot N + j}(\mZ_{\calM_{\nmqar}}^{i\cdot N + j-1}, \vu_{\mathrm{Alice}}[j])$ \Comment{
    $\mZ_{\calM_{\nmqar}}^0 \leftarrow \vu_{\mathrm{Alice}}[0,:]$}
        \EndFor
        \State Alice sends $\mZ_{\calM_{\nmqar}}^{i\cdot N + 2n-1}$ to Bob
        \For{$j\leftarrow$ $0$ to $m-1$}
            \State$\mZ_{\calM_{\nmqar}}^{i\cdot N + 2n + j} \leftarrow f_{\calM}^{i\cdot N + j}(\mZ_{\calM_{\nmqar}}^{i\cdot N + 2n + j-1}, \vu_{\mathrm{Bob}}[j])$
        \EndFor
        \State Bob sends $\mZ_{\calM_{\nmqar}}^{i\cdot N + m-1}$ to Alice
    \EndFor
  \end{algorithmic}
\end{algorithm*}

The equivalency of this protocol with running the model $\calM_{\nmqar}$ follows from \eqref{eq: jrp-protocol}.

Next, consider an instance $\bm{u}^{\sd}:= (A, B)$ of the set-disjointness problem with $A, B \subseteq [\sdDim]$ and $|A| + |B| = N$, where $A:= \{A_0, \ldots, A_{|A|-1}\}, B:= \{B_0, \ldots, B_{|B|-1}\}$. Due to \cref{prop: nmqar-sd}, we know that we can generate an equivalent input $\bm{u}$ for $\nmqar$ given an input $\bm{u}^{\sd}$ to the $\sd$ problem, whence we can generate inputs for Alice and Bob as in \eqref{eq: alice-prompt} and \eqref{eq: bob-prompt}. Applying \cref{alg:nmqar-two-way} then solves the $\nmqar$ problem for $\bm{u}$, and consequently, the SD problem for $\bm{u}^{\sd}$. Here, the total number of bits that are communicated in this protocol is 
\[
T_{\mathrm{bits}}:= \sum_{i=0}^{p-1} \abs{\mZ_{\calM_{\nmqar}}^{i\cdot N + 2n-1}} + \abs{\mZ_{\calM_{\nmqar}}^{i\cdot N + m-1}}.
\] 

Now, if $T_{\mathrm{bits}}$ is $o(\min\{|A|, |B|\})$ bits, we have shown that a two-way communication protocol exists for solving the set-disjointness ($\sd$) that uses $o(\min\{|A|, |B|\})$ communication complexity. However, this contradicts \cref{thm: sd-lb}.
Thus, we have $T_{\mathrm{bits}} \ge \Omega\paren{\min\{|A|, |B|\}}.$

Finally, note that we have
\begin{align*}
    p \cdot 2\max_k\abs{\mZ_{\calM_{\nmqar}}^{k}} &= \sum_{i=0}^{p-1} 2\max_k\abs{\mZ_{\calM_{\nmqar}}^{k}}\\
    &\ge \sum_{i=0}^{p-1} \abs{\mZ_{\calM_{\nmqar}}^{i\cdot N + 2n-1}} + \abs{\mZ_{\calM_{\nmqar}}^{i\cdot N + m-1}} \\
    &\ge \Omega\paren{\min\{|A|, |B|\}}.\\
\implies \max_k\abs{\mZ_{\calM_{\nmqar}}^{k}} &\ge \Omega\paren{\frac{\min\{|A|, |B|\}}{2p}}.
\end{align*}
This concludes the proof.
\end{proof}

\clearpage
\begin{table*}[h!]
    \caption{{\arch} Training Settings. For hybridizing the three layer types -- gated convolutions, sliding window, and linear attention -- we use linear attention at layers $\{2, 7, 12, 17, 22, 27, 32\}$ and sliding window at layers $\{3, 8, 13, 18, 23, 28, 33\}$, with gated convolution layers elsewhere. We did not tune the layer orderings and proportions.}
    \centering
    \begin{tabular}{rccc}
    \toprule
    {} & 356M & 1.3B  \\
    \midrule
    Optimizer & \multicolumn{3}{c}{Adam} \\
    Optimizer momentum & \multicolumn{3}{c}{$\beta_1, \beta_2=0.9, 0.95$} \\
    Optimizer eps & \multicolumn{3}{c}{$1e-8$} \\
    Precision &  \multicolumn{3}{c}{BFloat16} \\
    \midrule
    Encoder region length  & \multicolumn{3}{c}{1024 }  \\
    Masked language modeling probability & \multicolumn{3}{c}{15\%} \\
    MLM loss scale & \multicolumn{3}{c}{0.25}  \\
    NTP loss scale & \multicolumn{3}{c}{1.00}  \\
    \midrule
    Warmup & \multicolumn{3}{c}{1\%} \\
    Learning rate decay & \multicolumn{3}{c}{Cosine} \\
    Learning rate (min, base) & \multicolumn{3}{c}{8e-5, 8e-4} \\
    Global batch size & \multicolumn{3}{c}{256} \\ 
    Weight decay & \multicolumn{3}{c}{0.1} \\
    \midrule
    Num Layers  & 26   & 36   \\
    Hidden Size & 1024 & 1792 \\
    MLP Activation   &\multicolumn{3}{c}{SwiGLU}   \\
    MLP Width   & \multicolumn{3}{c}{2}  \\
    \midrule
    Num. Linear Attn Layers  & 5    & 7   \\
    Num. Linear Attn Heads   & \multicolumn{2}{c}{16}  \\
    Taylor Feature Dimension & \multicolumn{2}{c}{16} \\
    Linear Attn Positional Encodings & \multicolumn{2}{c}{None} \\
    \midrule
    Num. Sliding Window Layers & 5    & 7  \\
    Sliding Window Size & 64 & 16 \\
    Sliding Window Heads & \multicolumn{2}{c}{16} \\
    Sliding Window Positional Encodings & \multicolumn{2}{c}{Rotary} \\
    \midrule
    Num. \BaseConv\ Layers & 17  & 22 \\
    \BaseConv\ Projection Expansion Factor & \multicolumn{2}{c}{4} \\
    \BaseConv\ Filter Size & \multicolumn{2}{c}{3} \\
     \BaseConv\ Activation & \multicolumn{2}{c}{SiLU} \\
    \bottomrule 
    \end{tabular}
    \label{tab:jrt-training-details}
\end{table*}
\begin{table*}[h!]
    \caption{Based Training Settings. For hybridizing the three layer types -- gated convolutions, sliding window, and linear attention -- we use linear attention at layers $\{2, 7, 12, 17, 22, 27, 32\}$ and sliding window at layers $\{3, 8, 13, 18, 23, 28, 33\}$, with gated convolution layers elsewhere. We did not tune the layer orderings and proportions.}
    \centering
    \begin{tabular}{rccc}
    \toprule
    {} & 363M & 1.4B  \\
    \midrule
    Optimizer & \multicolumn{3}{c}{Adam} \\
    Optimizer momentum & \multicolumn{3}{c}{$\beta_1, \beta_2=0.9, 0.95$} \\
    Optimizer eps & \multicolumn{3}{c}{$1e-8$} \\
    Precision &  \multicolumn{3}{c}{BFloat16} \\
    \midrule
    Warmup & \multicolumn{3}{c}{1\%} \\
    Learning rate decay & \multicolumn{3}{c}{Cosine} \\
    Learning rate (min, base) & \multicolumn{3}{c}{8e-5, 8e-4} \\
    Global batch size & \multicolumn{3}{c}{256} \\ 
    Weight decay & \multicolumn{3}{c}{0.1} \\
    \midrule
    Num Layers  & 27   & 36   \\
    Hidden Size & 1024 & 1792 \\
    MLP Activation   &\multicolumn{3}{c}{SwiGLU}   \\
    MLP Width   & \multicolumn{3}{c}{2}  \\
    \midrule
    Num. Linear Attn Layers  & 5    & 7   \\
    Num. Linear Attn Heads   & \multicolumn{2}{c}{16}  \\
    Taylor Feature Dimension & \multicolumn{2}{c}{16} \\
    Linear Attn Positional Encodings & \multicolumn{2}{c}{None} \\
    \midrule
    Num. Sliding Window Layers & 5    & 7  \\
    Sliding Window Size & \multicolumn{2}{c}{128}  \\
    Sliding Window Heads & \multicolumn{2}{c}{16} \\
    Sliding Window Positional Encodings & \multicolumn{2}{c}{Rotary} \\
    \midrule
    Num. \BaseConv\ Layers & 17  & 22 \\
    \BaseConv\ Projection Expansion Factor & \multicolumn{2}{c}{4} \\
    \BaseConv\ Filter Size & \multicolumn{2}{c}{3} \\
     \BaseConv\ Activation & \multicolumn{2}{c}{SiLU} \\
    \bottomrule 
    \end{tabular}
    \label{tab:based-training-details}
\end{table*}

\begin{table*}[h!]
    \caption{Mamba Training Settings}
    \centering
    \begin{tabular}{rccc}
    \toprule
    {} & 358M & 1.3B  \\
    \midrule
    Optimizer & \multicolumn{3}{c}{Adam} \\
    Optimizer momentum & \multicolumn{3}{c}{$\beta_1, \beta_2=0.9, 0.95$} \\
    Optimizer eps & \multicolumn{3}{c}{$1e-8$} \\
    Precision &  \multicolumn{3}{c}{BFloat16} \\
    \midrule
    Warmup & \multicolumn{3}{c}{1\%} \\
    Learning rate decay & \multicolumn{3}{c}{Cosine} \\
    Learning rate (min, base) & \multicolumn{3}{c}{8e-5, 8e-4} \\
    Global batch size & \multicolumn{3}{c}{256} \\ 
    Weight decay & \multicolumn{3}{c}{0.1} \\
    \midrule
    Num Layers   & \multicolumn{3}{c}{46}    \\
    Hidden Size  & 1024    & 2048   \\
    RMSNorm      & \multicolumn{3}{c}{True}  \\
    Norm Epsilon & \multicolumn{3}{c}{$1e-5$}  \\
    Dt State     & \multicolumn{3}{c}{$16$} \\ 
    Dt (Min, Max) & \multicolumn{3}{c}{$(0.001, 0.1)$} \\
    Dt Init. Strategy & \multicolumn{3}{c}{Random} \\
    Dt Init. Floor & \multicolumn{3}{c}{$1e-4$} \\
    Dt Scale & \multicolumn{3}{c}{$1.0$} \\
    Dt Softplus & \multicolumn{3}{c}{True} \\
    Projection Expansion Factor & \multicolumn{3}{c}{2} \\
    Short Conv Filter Size & \multicolumn{3}{c}{4} \\
    \bottomrule 
    \end{tabular}
    \label{tab:mamba-training-details}
\end{table*}
\begin{table*}[h!]
    \caption{Attention Training Settings}
    \centering
    \begin{tabular}{rccc}
    \toprule
    {} & 360M & 1.3B  \\
    \midrule
    Optimizer & \multicolumn{3}{c}{Adam} \\
    Optimizer momentum & \multicolumn{3}{c}{$\beta_1, \beta_2=0.9, 0.95$} \\
    Optimizer eps & \multicolumn{3}{c}{$1e-8$} \\
    Precision &  \multicolumn{3}{c}{BFloat16} \\
    \midrule
    Warmup & \multicolumn{3}{c}{1\%} \\
    Learning rate decay & \multicolumn{3}{c}{Cosine} \\
    Learning rate (min, base) & \multicolumn{3}{c}{8e-5, 8e-4} \\
    Global batch size & \multicolumn{3}{c}{256} \\ 
    Weight decay & \multicolumn{3}{c}{0.1} \\
    \midrule
    Num Layers  & 24    & 36 \\
    Hidden Size & 1024  & 1680 \\
    Num Heads   & 16 & 24  \\
    RMSNorm     & \multicolumn{3}{c}{True} \\
    MLP Bias    & \multicolumn{3}{c}{False}  \\
    Flash Attn  & \multicolumn{3}{c}{True}  \\
    Rotary Emb. Fraction & \multicolumn{3}{c}{0.5}  \\
    MLP Activation & \multicolumn{3}{c}{SwiGLU} \\
    MLP Width & \multicolumn{3}{c}{4}  \\
    \bottomrule 
    \end{tabular}
    \label{tab:attn-training-details}
\end{table*}



\end{document}